\DeclareMathAlphabet{\pazocal}{OMS}{zplm}{m}{n}
\newcommand{\unif}{\pazocal{U}}
\theoremstyle{plain}
\newtheorem{theorem}{Theorem}[section]
\newtheorem{lemma}[theorem]{Lemma}
\newtheorem{corollary}[theorem]{Corollary}
\theoremstyle{definition}
\newtheorem{definition}[theorem]{Definition}
\theoremstyle{remark}
\newtheorem{remark}[theorem]{Remark}
\icmltitlerunning{Graph Inverse Style Transfer for Counterfactual Explainability}
\begin{document}

\twocolumn[
\icmltitle{Graph Inverse Style Transfer for Counterfactual Explainability}



\icmlsetsymbol{equal}{*}

\begin{icmlauthorlist}
\icmlauthor{Bardh Prenkaj}{tum,sap,equal}
\icmlauthor{Efstratios Zaradoukas}{tum,equal}
\icmlauthor{Gjergji Kasneci}{tum}
\end{icmlauthorlist}

\icmlaffiliation{tum}{Technical University of Munich, Germany}
\icmlaffiliation{sap}{Sapienza University of Rome, Italy}
\icmlcorrespondingauthor{Bardh Prenkaj}{bardhprenkaj95@gmail.com}

\icmlkeywords{Graph Learning, Counterfactual Explainability, Graph Neural Networks, Spectral Graphs}

\vskip 0.3in
]



\printAffiliationsAndNotice{\icmlEqualContribution}  

\begin{abstract}
Counterfactual explainability seeks to uncover model decisions by identifying minimal changes to the input that alter the predicted outcome. This task becomes particularly challenging for graph data due to preserving structural integrity and semantic meaning. Unlike prior approaches that rely on forward perturbation mechanisms, we introduce Graph Inverse Style Transfer (GIST), the first framework to re-imagine graph counterfactual generation as a backtracking process, leveraging spectral style transfer. By aligning the global structure with the original input spectrum and preserving local content faithfulness, GIST produces valid counterfactuals as interpolations between the input style and counterfactual content. Tested on 8 binary and multi-class graph classification benchmarks, GIST achieves a remarkable +7.6\% improvement in the validity of produced counterfactuals and significant gains (+45.5\%) in faithfully explaining the true class distribution. Additionally, GIST's backtracking mechanism effectively mitigates overshooting the underlying predictor's decision boundary, minimizing the spectral differences between the input and the counterfactuals. These results challenge traditional forward perturbation methods, offering a novel perspective that advances graph explainability.
\end{abstract}

\section{Introduction}\label{sec:introduction}

\epigraph{\textit{Explainability is no longer a luxury but a necessity.}}{\cite{eu2023aiact}}

 \begin{figure}[!t]
    \centering
    \includegraphics[width=.8\linewidth]{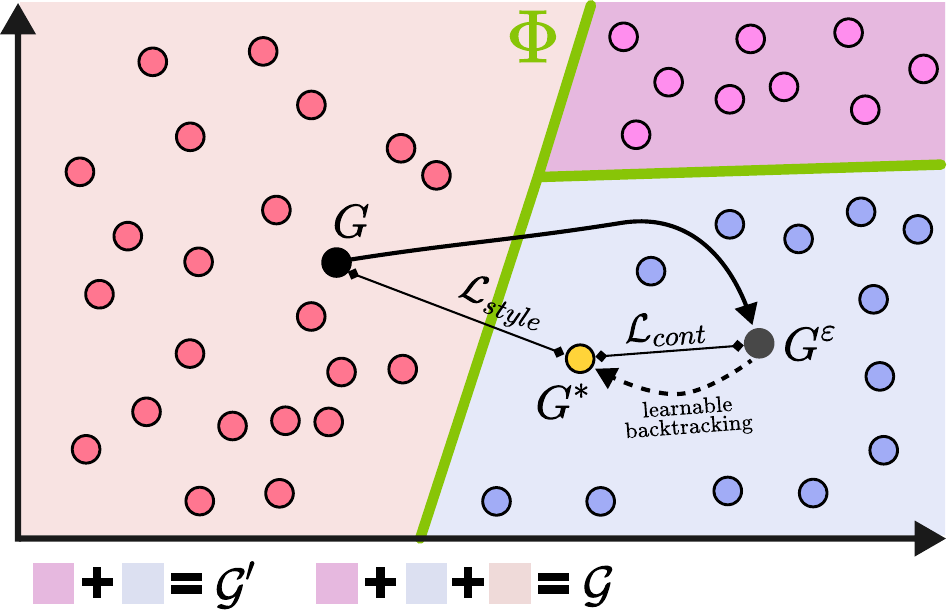}
    \caption{\textbf{The ``gist'' of GIST.} Starting from an initial graph $G \in \mathcal{G}\setminus\mathcal{G^\prime}$, we overshoot to the other side of $\Phi$'s boundary on a randomly chosen graph $G^\varepsilon \in \mathcal{G^\prime}$. We then learn a reverse process to backtrack from $G^\varepsilon$ to a never-seen-before graph $G^* \in \mathcal{G^\prime}$ while imitating the global structure or style ($\mathcal{L}_{style}$) of $G$ and maintaining faithful local structure and preserve content ($\mathcal{L}_{cont}$) with $G^\varepsilon$. Notice how $\mathcal{L}_{style}$ pulls the generation towards $G$, satisfying the similarity condition in~\cref{eq:gce_counterfactuality}. Meanwhile, $\mathcal{L}_{cont}$ pulls $G^*$ to $G^\varepsilon$ since we do not want to cross the decision boundary again and produce an invalid counterfactual.}
    \label{fig:motivation}
\end{figure}%

High-stake domains such as healthcare~\cite{amann2020explainability}, finance~\cite{vcernevivciene2024explainable}, and digital forensics~\cite{shamoo2025role} have seen an abrupt interest in equipping their users and service providers with explainable components, usually post-hoc, allowing them to make informed and reliable decisions~\cite{guidotti2018survey}. However, deep neural networks, commonly used for generating predictions, often suffer from a lack of interpretability, widely referred to as the \textit{black-box} problem~\cite{petch2021opening}, hindering their wide adoption in these domains. 
Regulations such as the GDPR~\cite{regulation2016regulation} and the EU AI Act~\cite{eu2023aiact} emphasize the demand for models that provide interpretable and actionable insights into their predictions. Alas, black-box models demonstrate superior performance and generalization capabilities when dealing with high-dimensional data to their inherently interpretable counterparts~\cite{aragona2021coronna,diko2025semantically,flaborea2023multimodal,flaborea2023we,prenkaj2023self}.

Recently, graph neural networks (GNNs) \cite{scarselli2008graph} have achieved remarkable results in graph prediction tasks, such as community detection~\cite{10.1145/3534678.3539370}, link prediction~\cite{10.1145/3554981}, and session-based recommendations~\cite{wu2019session}. Despite their remarkable performance, GNNs are black boxes, making them unsuitable for high-impact and high-risk scenarios. The literature has proposed several post-hoc explainability methods to understand \textit{what is happening under the hood} of the prediction models. Counterfactual explanations \cite{wachter2017counterfactual} have emerged as a key element in meeting regulatory requirements, as they shed light on model decisions by presenting alternative scenarios that would result in different outcomes. Furthermore, to support explanations for GNNs, a recent field in Graph Counterfactual Explainability (GCE) has emerged~\cite{prado2022survey}.

Existing solutions in GCE learn a forward\footnote{Throughout this paper we use ``forward'' and ``backward'' in their literal sense, and not to refer to machine learning aspects. For the latter, we will refer to ``forward learning pass'' and ``backward learning pass'' to discern between the literal and ML meanings.} perturbation mechanism to produce a counterfactual w.r.t. an underlying decision model, hereafter called oracle $\Phi$. One obvious drawback of trying to forwardly cross $\Phi$'s decision boundary involves using $\Phi$ in the learning process (training) -- e.g., see \cite{prado2024robust} -- which might not be feasible in scenarios where its predictions are limited by design.\footnote{Imagine deploying $\Phi$'s inference function as an API call. Calling these APIs at every training step would saturate the network bandwidth or, worse, get the caller IP blocked for ToS violations.}~A strictly forward learning approach can discard essential partial structure in the data by always starting from an uninformative initialization, rather than building upon already available signals. Without incremental corrective feedback, it often removes valid relationships or overshoots the decision boundary, resulting in suboptimal or oscillatory convergence. Furthermore, since this method lacks a mechanism to selectively preserve beneficial features during the transformation, it may converge to local minima that fail to retain important topological patterns. Consequently, the overall process can suffer from instabilities, inefficient exploration of parameter space, and diminished preservation of informative characteristics in the produced counterfactual.

To tackle these challenges, we propose GIST, short for \textbf{G}raph \textbf{I}nverse \textbf{S}tyle \textbf{T}ransfer, a transformative framework, inspired by the principles of style transfer in computer vision, that re-imagines counterfactual generation as a process of structural backtracking. By first overshooting $\Phi$'s decision boundary and then refining the graph towards a desirable configuration -- i.e., the input graph's spectral properties (style) -- GIST produces counterfactuals that preserve the original graph's global style and local node/edge content.

Specifically, we go beyond the related work by making the following contributions:
\begin{enumerate}[noitemsep,topsep=0pt]
    \item \textbf{First Graph Style Transfer Framework.} We propose GIST, a novel backtracking approach for graph counterfactual explainability. Unlike prior methods that forwardly cross the decision boundary, GIST leverages spectral style transfer to interpolate between global structure alignment and local content preservation, ensuring semantically valid counterfactuals. We argue that this opens a new perspective in generating counterfactuals, as the overshooting factor (critical in forward approaches) is controllable via the learning objective.
    
    \item \textbf{Theoretical Insights on Spectral Style Transfer.} We establish a rigorous theoretical foundation for GIST by analyzing the spectral properties of graph Laplacians. Specifically, we prove key results, including bounds on the spectral gap and Frobenius norm differences under convex combinations of Laplacians, ensuring stylistically (w.r.t. input) coherent counterfactuals.

    \item \textbf{Scalable and Flexible Learning Framework.} GIST introduces a modular architecture that integrates transformer-based graph convolutional layers and differentiable edge sampling, enabling efficient backtracking. GIST supports a variety of GNN backbones, making it adaptable to diverse application domains while maintaining strong theoretical guarantees.

    \item \textbf{Comprehensive Empirical Evaluation.} Extensive experiments on 8 benchmark datasets -- spanning synthetic and real-world graphs with binary and multi-class classification tasks -- emphasizes GIST as consistently outperforming SoTA. Specifically, GIST achieves considerably higher validity (+7.6\% over the second-best) and improves fidelity by a large margin (+45.5\%). Our results highlight GIST's ability to generate counterfactuals that are both more faithful and spectrally aligned (preserved semantics) with the input.
\end{enumerate}

\section{Preliminaries}\label{sec:preliminaries}
\paragraph{Graphs, their ``style'', and content.}
Let $G = \left( X,A \right)$ be a graph consisting of node features $X \in \mathbb{R}^{n \times d}$ and an adjacency matrix $A \in \mathbb{R}^{n\times n}$ representing the connectivity among nodes with weights in the edges. We denote the graph dataset with $\mathcal{G} = \{G_1,\dots,G_N\}$. The Laplacian matrix of a graph $G$ is defined as 
\begin{equation}\label{eq:graph_laplacian}
    L^{(G)} = D - A,
\end{equation}
where $D$ is the degree matrix. The eigenvalues of $L^{(G)}$ are denoted as $\lambda_1(L^{(G)}) \leq \lambda_2(L^{(G)}) \leq \dots \leq \lambda_n(L^{(G)})$. The normalized Laplacian is defined as
\begin{equation}\label{eq:graph_normalized_laplacian}
    \tilde{L}^{(G)} = I - D^{-\sfrac{1}{2}}L^{(G)}D^{-\sfrac{1}{2}},
\end{equation}
with eigenvalues $\lambda_1(\tilde{L}^{(G)}) \leq \lambda_2(\tilde{L}^{(G)}) \leq \dots \leq \lambda_n(\tilde{L}^{(G)})$. Notice how the content of the graph is encoded in the node feature vectors, and the style is encoded in the eigenvalues, summarizing global structural patterns. Laplacians capture global structural patterns, e.g., connectivity and symmetry, that are largely invariant to specific node identities. This follows a similar rationale to neural style transfer in images, where Gram matrices of feature activations are used to model style since they encode correlation patterns among features rather than spatial arrangements. Additionally, using the Laplacian aligns with prior work in spectral graph theory, where the eigenvalues and eigenvectors of the Laplacian are shown to be robust descriptors of global structure, and have been used in graph matching~\cite{yan2016short} and generation~\cite{dwivedi2023benchmarking}.

\paragraph{Graph Counterfactuals.} Given a black-box (oracle) predictor $\Phi: \mathcal{G} \rightarrow Y$,  according to~\cite{prado2022survey}, a counterfactual for $G$ is defined as
\begin{equation}\label{eq:gce_counterfactuality}
\underset{G^{\prime} \in \mathcal{G}^{\prime}, \Phi \left( G \right) \neq \Phi \left( G^{\prime} \right)}{\arg\max} \mathcal{S} \left( G, G^{\prime} \right)
\end{equation}
where $\mathcal{G}'$ is the set of all possible counterfactuals generated by perturbing $G$, and $\mathcal{S}(G,G')$ calculates the similarity between $G$ and $G'$. Notice how it is trivial to express~\cref{eq:gce_counterfactuality} in terms of graph distance instead of similarity function, which closely aligns with the counterfactual formalization in historically~\cite{wachter2017counterfactual}, and more recently, \cite{leemann2024towards}. Interestingly,~\citet{prenkaj2024unifying} reformulate~\cref{eq:gce_counterfactuality} and take a probabilistic perspective to produce a counterfactual that is quite likely within the distribution of valid counterfactuals,
\begin{equation}\label{eq:prenkaj_counterfactuality}
 \underset{G' \in \mathcal{G}'}{\arg \max}\; P \left( G' \; \middle| \; G, \Phi \left( G \right), \neg \Phi\left( G \right) \right),
\end{equation}
where $\neg \Phi(G)$ indicates any other class from $\Phi(G)$. 

\section{Related Work}\label{sec:related_work}
Our work directly relates to style transfer and graph counterfactual explainability, which we describe below. To the best of our knowledge,  style transfer has never been defined or applied to graphs due to their inherent complex structures. However, for completeness purposes, we include the most interesting style transfer works in computer vision and natural language processing.

\subsection{Style Transfer} 
\noindent\textbf{Computer Vision.} \citet{gatys2016image} separate the content and style of images and combine them to generate new images.~\citet{gatys2016preserving} use a simple linear model to change the color of pictures by using a single image to represent the style.~\citet{zhu2017unpaired} propose CycleGAN to do image-image translation. It firstly learns a mapping $G : X \rightarrow Y$ using an adversarial loss, and then a reverse mapping $F : Y \rightarrow X$ with a cycle loss $F(G(X)) \approx X$ which performs unpaired image to image translation.~\citet{li2017demystifying} treat style transfer as a domain adaptation problem. They theoretically show that Gram metrics is equivalent to minimize the Maximum Mean Discrepancy (MMD) for images. While these approaches capture style by effectively representing global texture, they lack flexibility in class-related structural properties. In contrast, GIST introduces a style representation that captures both local (input-specific) and global (class-related) structural patterns.

\noindent\textbf{Natural Language Processing.} \citet{jhamtani2017shakespearizing} explore automatic methods to transform text from modern to Shakespearean English. Their model is based on seq2seq and enriched with pointer network~\cite{vinyals2015pointer}. They use a modern-Shakespeare word dictionary to form candidate words for pointer network. However, paired-word dictionaries are scarce resources that do not exist in most style transfer tasks, requiring parallel corpora.~\citet{mueller2017sequence} propose a variational autoencoder (VAE) to revise a new sequence to improve its associated outcome.~\citet{shen2017style} explore style transfer for sentiment modification, decipherment of word substitution ciphers, and recovery of word order. They use a VAE as the base model and an adversarial network to align different styles.~\citet{braud2017writing} explore many types of features for style prediction, ranging from n-grams to discourse, and found that simple models perform well.~\citet{ficler2017controlling} control linguistic style of generated text using conditioned recurrent neural networks (CRNNs). ~\citet{fu2018style} propose two models for text style transfer without parallel data: i.e., a multidecoder sequence-to-sequence model and a style embedding model. 
In line with \cite{fu2018style}, GIST finds counterfactuals without relying on parallel data (i.e., graphs whose style to imitate). Rather, we use the input to guide the overshooting mechanism and then act as a style which pulls the generation process towards it to preserve semantics.

\subsection{Graph Counterfactual Explainability (GCE)} 
\citet{prado2022survey} provide a detailed taxonomy of GCE methods composed of search-, heuristic- and learning-based approaches. Although a new category of global (model-level) counterfactual explanations is emerging~\cite{huang2023global,kosan2024gcfexplainer}, our focus remains on instance-level and learning-based explainers. Although RCExplainer~\cite{bajaj2021robust} is a mixture of heuristics and learned approaches -- as per the taxonomy in~\cite{prado2022survey} -- we include it here to acknowledge its value with its multiple learned linear decision boundaries and then the search over these boundaries to find robust explanations.\footnote{Unfortunately, the official implementation could not be executed, as it depends on proprietary Huawei Python packages that are not publicly available. Despite extensive debugging and efforts to adapt the code to the GRETEL framework~\cite{prado2024gretel} -- in order to ensure consistency in the evaluation pipeline -- we were unable to reproduce the original results. Additionally, we attempted to use an unofficial implementation available at \url{https://github.com/idea-iitd/gnn-x-bench/blob/main/source/rcexplainer.py}. However, this version lacks support for several benchmark datasets, including BBBP, BZR, ENZYMES, MSRC21, and COLORS-3, hindering us to directly compare against other SoTA methods and GIST.}

Learning-based strategies include perturbation matrices~\cite{tancf2}, reinforcement learning~\cite{numeroso2021meg,wellawatte2022model}, and generative approaches~\cite{ma2022clear,prado2024robust}.  MEG~\cite{numeroso2021meg} is a reinforcement learning approach that generates counterfactuals for input molecules. Its reward function integrates task-dependent regularization, influencing the policy to select actions that lead to valid molecules~\cite{zhou2019optimization}. CF-GNNExp.~\cite{lucic2022cf} learns a binary perturbation matrix to sparsify the adjacency matrix of the original graph $G$, guided by a sparse neural network~\cite{srinivas2017training}. CF$^\text{2}$~\cite{tancf2} balances factual and counterfactual reasoning via multi-objective optimization. Counterfactuals are generated by removing the factual subgraph from the input and focusing on simplicity. CLEAR~\cite{ma2022clear} employs a VAE to generate counterfactuals as complete graphs with stochastic edge weights, conditioned on the input graph and a desired class. During decoding, a graph matching step is required to address vertex reordering, an NP-hard problem~\cite{livi2013graph}. RSGG-CE~\cite{prado2024robust} relies on a modified learning approach of GANs and a partial-order sampling strategy on the learned edge distribution to generate robust graph counterfactual candidates. It exploits the generator to learn a graph representation, enabling stochastic estimations of the graph's topology, thus allowing the generation of counterfactuals in zero-shot. Unrelated to graphs,~\citet{nemirovsky22} use GANs to generate counterfactuals for user-defined classes, adapted in~\cite{prado2023revisiting} into G-CounteRGAN, treating adjacency matrices as black-and-white images with 2d convolutions. 

The SoTA methods follow a forward perturbation paradigm (i.e., all learn how to cross $\Phi$'s decision boundary). To the best of our knowledge, GIST is the first that learns a backtracking mechanism (i.e., overshoot $\Phi$'s boundary first and then go backwards). By combining graph style transfer and counterfactual content preservation, GIST produces explanations that are spectrally and semantically aligned with the input, instead of merely similarity-wise -- see~\cref{eq:gce_counterfactuality}. 

\section{Method}\label{sec:motivation}
We propose \textbf{GIST}\footnote{Code: \url{https://github.com/bardhprenkaj/gist}} (short for \textbf{G}raph \textbf{I}nverse  \textbf{S}tyle \textbf{T}ransfer), the first backtracking (inverse) mechanism towards the decision boundary of the oracle $\Phi$ instead of as-per-usual forwardly crossing it -- see ~\cite{numeroso2021meg,ma2022clear,prado2024robust} among others. In other words, given a graph $G$, our goal is to shoot over $\Phi$'s decision boundary and then move towards it in the opposite direction without crossing it again. \cref{fig:motivation} illustrates the idea behind our explainer. More formally,~\cref{def:graph_style_transfer} illustrates the conditions that counterfactual graphs found via style-transferring should meet.

\begin{definition}\label{def:graph_style_transfer}
    Given a graph $G=(X,A)$, s.t. $n = |X|$, the goal is to generate $G^* = (X^*,A^*)$ by passing through an intermediary known graph $G^\varepsilon = (X^\varepsilon,A^\varepsilon) \in \mathcal{G}^*$ with $\Phi(G) \neq \Phi(G^\varepsilon)$, such that the following conditions are met.
\end{definition}%
(1) \textit{Style transfer}  --  the global structural properties of $G^*$ align with $G$, i.e.,
\begin{equation}\label{eq:cond1}
     \sum_{i=1}^{n} \big|\lambda_i(\tilde{L}^{(G)}) - \lambda_i(\tilde{L}^{(G^*)})\big| \leq \sum_{i=1}^{n} \big|\lambda_i(\tilde{L}^{(G)}) - \lambda_i(\tilde{L}^{(G^\varepsilon)})\big|,
\end{equation}%
(2) \textit{Content preservation}  --  the local structure of $G^*$ resembles $G^\varepsilon$, i.e.,
\begin{equation}\label{eq:cond2}
\begin{aligned}
     \underset{G^* \in \mathcal{G}^\prime}{\min} \;\underbrace{\big|\big|X^* - X^\varepsilon\big|\big|_1}_{\text{reconstruct node feature}} + \underbrace{\text{BCE}(A^*, A^\varepsilon)}_{\begin{subarray}{c}\text{reconstruct existing}\\\text{and non-existing edges}\end{subarray}}.
\end{aligned}
\end{equation}
\cref{app:architecture} shows a simple overshooting algorithm used in this paper.

\subsection{Theoretical Implications}\label{sec:theories}
\begin{definition}
Given $\tilde{L}^{(G)}$ and $\tilde{L}^{(G^\varepsilon)}$, two real, symmetric and commuting matrices, the normalized Laplacian of $G^*$ is defined as the convex combination in~\cref{eq:convex_combo}.
\begin{equation}\label{eq:convex_combo}
    \tilde{L}^{(G^*)} = \alpha\cdot\tilde{L}^{(G^\varepsilon)} + (1-\alpha)\cdot\tilde{L}^{(G)},
\end{equation}%
where $\alpha \in [0,1]$ is the interpolation factor that controls the trade-off between content and style.
\end{definition}

\begin{lemma}\label{lemma:eignevalues}
Given $\tilde{L}^{(G)}$ and $\tilde{L}^{(G^\varepsilon)}$, two real, symmetric and commuting matrices, the eigenvalues of $G^*$ are defined as the convex combination
    \begin{equation}\label{eq:combo_eigenvalues}
        \lambda_i(\tilde{L}^{(G^*)}) = \alpha\cdot\lambda_i(\tilde{L}^{(G^\varepsilon)}) + (1-\alpha)\cdot\lambda_i(\tilde{L}^{(G)}).
    \end{equation}%
\end{lemma}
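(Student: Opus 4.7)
The plan is to reduce the statement to the elementary linear-algebra fact that linear combinations of commuting, simultaneously diagonalizable symmetric matrices have eigenvalues equal to the corresponding linear combination of the eigenvalues. The main ingredient is the spectral theorem together with the classical result that two real symmetric matrices commute if and only if they are simultaneously orthogonally diagonalizable.

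First I would invoke the spectral theorem on $\tilde{L}^{(G)}$ and $\tilde{L}^{(G^\varepsilon)}$ separately, obtaining orthonormal eigenbases for each. Then, using the commutativity hypothesis $\tilde{L}^{(G)}\tilde{L}^{(G^\varepsilon)} = \tilde{L}^{(G^\varepsilon)}\tilde{L}^{(G)}$, I would produce a single orthogonal matrix $U = [u_1 \mid \cdots \mid u_n]$ that simultaneously diagonalizes them, yielding
\begin{equation*}
\tilde{L}^{(G)} = U\,\Lambda_G\, U^{\top}, \qquad \tilde{L}^{(G^\varepsilon)} = U\,\Lambda_{G^\varepsilon}\, U^{\top},
\end{equation*}
with $\Lambda_G$ and $\Lambda_{G^\varepsilon}$ diagonal and entries indexed consistently by the shared eigenvectors $u_i$. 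Substituting into~\cref{eq:convex_combo} and using linearity of the conjugation $U(\cdot)U^{\top}$ gives
\begin{equation*}
\tilde{L}^{(G^*)} = U\bigl(\alpha\,\Lambda_{G^\varepsilon} + (1-\alpha)\,\Lambda_G\bigr)U^{\top},
\end{equation*}
which is itself an orthogonal diagonalization of $\tilde{L}^{(G^*)}$. Reading off the $i$-th diagonal entry yields exactly~\cref{eq:combo_eigenvalues}.

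The main subtlety, which I would address explicitly, is the indexing convention. In~\cref{sec:preliminaries} the eigenvalues $\lambda_i(\tilde{L}^{(G)})$ are defined in ascending order, but for two commuting matrices there is no reason the ascending orderings of $\Lambda_G$ and $\Lambda_{G^\varepsilon}$ correspond to the same eigenvector $u_i$. Without this clarification, the convex-combination identity could be misread as a statement about sorted spectra, which is false in general (Weyl-type inequalities give only bounds). I would therefore state the lemma as holding under the natural indexing induced by the common eigenbasis, i.e.\ $\lambda_i(\tilde{L}^{(\cdot)}) := u_i^{\top} \tilde{L}^{(\cdot)} u_i$, and note that this is the only obstacle worth flagging; once the indexing convention is fixed, the rest of the argument is a one-line application of the simultaneous diagonalization.
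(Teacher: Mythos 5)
Your proposal is correct and follows essentially the same route as the paper's proof: invoke simultaneous orthogonal diagonalization of the two commuting symmetric Laplacians, substitute into~\cref{eq:convex_combo}, and read off the diagonal entries. Your explicit remark about the indexing convention (that the identity holds for eigenvalues indexed by the shared eigenbasis, not necessarily for separately sorted spectra) is a valid refinement that the paper's proof glosses over.
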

\cref{lemma:eignevalues} illustrates that the convexity of the Laplacian operator ensures that eigenvalues interpolate linearly. In other words, we show that the structure of the produced counterfactual $G^*$ is an interpolation (balanced by $\alpha$) between the structural style (i.e., from the input $G$) and the class-related spectrum (i.e., from $G^\varepsilon$).

Showing that the spectrum of $G^*$ is a linear interpolation of those of $G$ and $G^\varepsilon$ implies $G^*$ exhibits similar connectivity patterns to those of $G$ and $G^\varepsilon$. Hence, if $G$ and $G^\varepsilon$ are connected graphs (i.e., absence of isolated nodes), then also $G^\varepsilon$ is connected (see~\cref{theorem:connectivity}).

\begin{theorem}\label{theorem:connectivity}
    If $G$ and $G^\varepsilon$ are connected graphs, then $G^*$ whose normalized Laplacian is defined as in ~\cref{eq:convex_combo} is connected for any $\alpha \in [0,1]$.
\end{theorem}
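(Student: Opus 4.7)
The plan is to reduce connectedness of $G^*$ to a spectral condition and then transport it across Lemma~\ref{lemma:eignevalues}. Specifically, I would use the classical characterization that a graph $H$ on $n$ vertices is connected if and only if the multiplicity of $0$ as an eigenvalue of $\tilde{L}^{(H)}$ equals one, equivalently $\lambda_1(\tilde{L}^{(H)}) = 0$ and $\lambda_2(\tilde{L}^{(H)}) > 0$. Applying this to the hypothesis that $G$ and $G^\varepsilon$ are connected gives $\lambda_1(\tilde{L}^{(G)}) = \lambda_1(\tilde{L}^{(G^\varepsilon)}) = 0$ together with strictly positive algebraic connectivities $\lambda_2(\tilde{L}^{(G)}), \lambda_2(\tilde{L}^{(G^\varepsilon)}) > 0$.

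I would then invoke Lemma~\ref{lemma:eignevalues}, which under the commuting assumption yields $\lambda_i(\tilde{L}^{(G^*)}) = \alpha\,\lambda_i(\tilde{L}^{(G^\varepsilon)}) + (1-\alpha)\,\lambda_i(\tilde{L}^{(G)})$ coordinatewise. Specialising to $i=1$ gives $\lambda_1(\tilde{L}^{(G^*)}) = 0$, and specialising to $i=2$ gives $\lambda_2(\tilde{L}^{(G^*)}) = \alpha\,\lambda_2(\tilde{L}^{(G^\varepsilon)}) + (1-\alpha)\,\lambda_2(\tilde{L}^{(G)})$. Because both summands are strictly positive, at least one of the two coefficients in this convex combination is nonzero and multiplies a strictly positive number, so the sum is strictly positive for every $\alpha \in [0,1]$; the interior case $\alpha\in(0,1)$ is immediate, and the endpoints $\alpha\in\{0,1\}$ reduce to $\tilde{L}^{(G^*)}$ being either $\tilde{L}^{(G)}$ or $\tilde{L}^{(G^\varepsilon)}$, both of which correspond to connected graphs by hypothesis. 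Hence $\lambda_2(\tilde{L}^{(G^*)}) > 0$, and the spectral characterization forces $G^*$ to be connected.

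The hard part will be not the arithmetic but the interpretation step: the matrix $\alpha\,\tilde{L}^{(G^\varepsilon)} + (1-\alpha)\,\tilde{L}^{(G)}$ is a priori only a symmetric PSD matrix, and I must justify reading it as the normalized Laplacian of a bona fide graph $G^*$ on the common vertex set of size $n$ so that the ``multiplicity of $0$'' criterion applies verbatim. This is precisely what the commuting assumption underlying Lemma~\ref{lemma:eignevalues} buys us, since simultaneous diagonalisability preserves both the zero-row-sum / PSD structure and, crucially, the sorted ordering of eigenvalues through the interpolation, so the combinatorial meaning of $\lambda_2$ transfers cleanly. Once that technical point is settled, the rest of the proof is a one-line convexity observation.
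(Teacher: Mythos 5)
Your argument is sound in spirit but takes a genuinely different route from the paper. The paper's proof (Appendix, \cref{sec:proof_connectivity}) is a purely combinatorial contradiction argument: if $G^*$ were disconnected there would be a vertex cut $S$ with no crossing edges, and since $A^{(G^*)} = \alpha A^{(G^\varepsilon)} + (1-\alpha)A^{(G)}$ is a nonnegative combination of nonnegative matrices, every crossing entry of both $A^{(G)}$ and $A^{(G^\varepsilon)}$ would have to vanish, contradicting connectivity of either input graph. That argument needs no commutativity, no eigenvalues, and no appeal to \cref{lemma:eignevalues}; yours buys a one-line finish at the cost of importing the entire spectral machinery. Your reduction to ``$\lambda_1 = 0$ and $\lambda_2 > 0$'' is a legitimate alternative and works cleanly for the \emph{combinatorial} Laplacian, where both kernels are spanned by the all-ones vector, so the zero eigenvalues genuinely pair up in the common eigenbasis and the $i=2$ interpolation of strictly positive numbers stays strictly positive.

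The place where your proof is weaker than you acknowledge is the step you call the ``interpretation step.'' You assert that the commuting assumption guarantees the combination can be read as a normalized Laplacian with the index-by-index pairing of sorted eigenvalues. It does not. For normalized Laplacians the kernel of $\tilde{L}^{(G)}$ is spanned by $D_G^{1/2}\mathbf{1}$ and that of $\tilde{L}^{(G^\varepsilon)}$ by $D_{G^\varepsilon}^{1/2}\mathbf{1}$; these differ whenever the degree sequences are not proportional, and commutativity alone does not force them to coincide (e.g., $\mathrm{diag}(0,1)$ and $\mathrm{diag}(1,0)$ commute, each has a one-dimensional kernel, yet their average is invertible). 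If the kernels do not align, specialising \cref{lemma:eignevalues} to $i=1$ does not yield $\lambda_1(\tilde{L}^{(G^*)})=0$, and the spectral characterization of connectivity has nothing to bite on. To make your route airtight you would need to either (i) add the hypothesis that the common eigenbasis sends kernel to kernel, or (ii) work with the combinatorial Laplacian, or (iii) fall back on the paper's cut argument, which sidesteps the issue entirely by never leaving the adjacency matrices.
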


\begin{theorem}\label{theorem:spectral_gap}
Let $\Delta(G) = \lambda_2(L^{(G)}) - \lambda_1(L^{(G)})$ denote the spectral gap of $G$, where $\lambda_2(L^{(G)})$ and $\lambda_1(L^{(G)})$ are the second- and first-lowest eigenvalues of $G$. Then:
\begin{equation}
    \min(\Delta(G), \Delta(G^\varepsilon)) \leq \Delta(G^*) \leq \max(\Delta(G),\Delta(G^\varepsilon)).
\end{equation}    
\end{theorem}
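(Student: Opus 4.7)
The plan is to reduce the bound to a direct consequence of the linear interpolation of eigenvalues established earlier. First, I would invoke \cref{lemma:eignevalues} on the pair of Laplacians involved in $G^*$; the same argument carries over verbatim from $\tilde{L}$ to $L$ provided the two unnormalized Laplacians also commute, since only simultaneous diagonalizability of commuting real symmetric matrices is used. This would yield, for every $i$, the identity $\lambda_i(L^{(G^*)}) = \alpha\,\lambda_i(L^{(G^\varepsilon)}) + (1-\alpha)\,\lambda_i(L^{(G)})$ with eigenvalues arranged in ascending order on every side.

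Next, I would substitute $i = 1$ and $i = 2$ into this identity and subtract. The linearity of the convex combination would immediately yield
\begin{equation*}
\Delta(G^*) = \alpha\,\Delta(G^\varepsilon) + (1-\alpha)\,\Delta(G),
\end{equation*}
i.e., the spectral gap of $G^*$ is itself a convex combination of $\Delta(G)$ and $\Delta(G^\varepsilon)$. The desired double inequality then drops out from the elementary fact that for any reals $a, b$ and any $\alpha \in [0,1]$, the value $\alpha b + (1-\alpha) a$ lies in $[\min(a,b), \max(a,b)]$.

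The main obstacle I foresee is the ordering subtlety already implicit in \cref{lemma:eignevalues}: the termwise identity on \emph{sorted} eigenvalues really only holds when the common eigenbasis pairs off the spectra of $L^{(G)}$ and $L^{(G^\varepsilon)}$ in a consistent ascending order. If some eigenvectors have reversed orderings between the two endpoints, then $\lambda_1(L^{(G^*)})$ and $\lambda_2(L^{(G^*)})$ might correspond to different index pairs than $(\lambda_1, \lambda_1)$ and $(\lambda_2, \lambda_2)$ of the endpoints, so the clean convex-combination identity for $\Delta(G^*)$ above could fail. I would handle this either by explicitly inheriting the ordering hypothesis of \cref{lemma:eignevalues}, or by a short rearrangement argument: since $\lambda_1(L^{(G^*)})$ and $\lambda_2(L^{(G^*)})$ remain convex combinations of eigenvalues of $L^{(G)}$ and $L^{(G^\varepsilon)}$ lying within $[\lambda_1, \lambda_2]$ of each endpoint, elementary interval arithmetic still forces $\Delta(G^*)$ into $[\min(\Delta(G), \Delta(G^\varepsilon)), \max(\Delta(G), \Delta(G^\varepsilon))]$.
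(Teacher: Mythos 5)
Your route is genuinely different from the paper's. You derive the bound from \cref{lemma:eignevalues}: under the commuting hypothesis the sorted eigenvalues interpolate linearly, so $\Delta(G^*) = \alpha\,\Delta(G^\varepsilon) + (1-\alpha)\,\Delta(G)$, and the double inequality is just the statement that a convex combination lies between its endpoints. This yields a stronger, exact identity and is consistent with how the paper uses \cref{lemma:eignevalues} elsewhere. The paper's own proof does not invoke that lemma at all: it writes $L^{(G^*)} = A + B$ with $A=\alpha L^{(G^\varepsilon)}$ and $B=(1-\alpha)L^{(G)}$, uses connectedness (via \cref{theorem:connectivity}) to reduce $\Delta$ to $\lambda_2$ because $\lambda_1=0$, and then appeals to Weyl's inequalities, which do not require commutativity. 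In principle that route is more general, but as written its Weyl step is garbled: the claimed ``interval'' $[\lambda_1(A)+\lambda_2(B),\;\lambda_2(A)+\lambda_1(B)]$ has two \emph{lower} bounds as its endpoints, and the genuine Weyl upper bound on $\lambda_2(A+B)$ involves $\lambda_n$ of one summand, not $\lambda_2$. Your argument is the cleaner of the two, at the price of the commuting assumption the paper already makes in its definition of $\tilde{L}^{(G^*)}$.

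One caveat: your fallback ``rearrangement argument'' for the case where the common eigenbasis orders the two spectra inconsistently does not go through, and the upper bound can genuinely fail there. Take the $5$-cycle and its square (both connected circulants, so their Laplacians commute), with eigenvalues listed in the shared character basis as $(0,\,1.38,\,3.62,\,3.62,\,1.38)$ and $(0,\,3.62,\,1.38,\,1.38,\,3.62)$; both have $\Delta = 1.38$, yet the $\alpha=\tfrac12$ combination has spectrum $(0,\,2.5,\,2.5,\,2.5,\,2.5)$ and hence $\Delta(G^*)=2.5>\max(\Delta(G),\Delta(G^\varepsilon))$. The issue is that $\lambda_2(L^{(G^*)})$ need not be a convex combination of eigenvalues drawn from the $\{\lambda_1,\lambda_2\}$ slots of each endpoint, so interval arithmetic does not rescue the claim. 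You therefore need the first of your two options --- explicitly inheriting the consistent-ordering reading of \cref{lemma:eignevalues} --- and should state it as a hypothesis rather than treat it as a removable technicality.
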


\begin{corollary}\label{corollary:frobenius_norm}
    The Frobenius norm difference between $G^\varepsilon$ and $G^*$ is
    \begin{equation}
  \bigl\|
    L^{(G^\varepsilon)} 
    \;-\; 
    L^{(G^*)}
  \bigr\|_F
  \;=\;
  (1 - \alpha)\,
  \bigl\|
    L^{(G^\varepsilon)} 
    \;-\; 
    L^{(G)}
  \bigr\|_F.
    \end{equation}
\end{corollary}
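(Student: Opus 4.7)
The plan is to derive the equality directly from the convex-combination definition of $L^{(G^*)}$ together with the absolute homogeneity of the Frobenius norm. First I would lift the relation in~\cref{eq:convex_combo}, stated there for the normalized Laplacian $\tilde L$, to the unnormalized Laplacian $L$; this is legitimate because the map $G \mapsto L^{(G)} = D - A$ is linear in $A$ and in the diagonal of $D$, so the same convex combination is the natural definition of $L^{(G^*)}$ implied by the construction of $G^*$. Concretely, I would assert
\begin{equation}
L^{(G^*)} \;=\; \alpha\, L^{(G^\varepsilon)} \;+\; (1-\alpha)\, L^{(G)}.
\end{equation}

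Second, I would form the difference $L^{(G^\varepsilon)} - L^{(G^*)}$ and simplify by collecting terms:
\begin{equation}
L^{(G^\varepsilon)} - L^{(G^*)}
\;=\; L^{(G^\varepsilon)} - \alpha L^{(G^\varepsilon)} - (1-\alpha) L^{(G)}
\;=\; (1-\alpha)\bigl(L^{(G^\varepsilon)} - L^{(G)}\bigr).
\end{equation}
Third, I would invoke absolute homogeneity of the Frobenius norm, $\|c M\|_F = |c|\, \|M\|_F$, with the scalar $c = 1-\alpha$. Since $\alpha \in [0,1]$ implies $1-\alpha \geq 0$, the absolute value drops out, yielding the stated identity
\begin{equation}
\bigl\|L^{(G^\varepsilon)} - L^{(G^*)}\bigr\|_F \;=\; (1-\alpha)\,\bigl\|L^{(G^\varepsilon)} - L^{(G)}\bigr\|_F.
\end{equation}

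There is no substantive obstacle here: the argument is a two-line algebraic manipulation plus one norm identity, and it does not even require the commutativity hypothesis used for~\cref{lemma:eignevalues} nor the connectedness assumption of~\cref{theorem:connectivity}. The only subtlety worth flagging in the write-up is the notational one above, namely that~\cref{eq:convex_combo} is phrased for $\tilde L$ while the corollary is phrased for $L$; I would either (i) restate the corollary with $\tilde L$ throughout, or (ii) add a one-sentence remark that the convex combination is inherited by $L^{(G^*)}$ by linearity of $L = D - A$ in the graph's adjacency and degree structure.
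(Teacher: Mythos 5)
Your proof is correct and is essentially identical to the paper's: both form the difference $L^{(G^\varepsilon)} - L^{(G^*)}$ using the convex-combination definition, factor out $(1-\alpha)$ by linearity, and apply absolute homogeneity of the Frobenius norm with $1-\alpha \ge 0$. Your notational flag about $\tilde L$ versus $L$ is also handled the same way in the paper's appendix, which restates the hypothesis for Laplacians ``combinatorial or normalized.''
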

\cref{corollary:frobenius_norm} illustrates the bounded style similarity, i.e., the counterfactual $G^*$ cannot be arbitrarily dissimilar from the input, but is bounded to be similar to the difference of spectra between $G$ and $G^\varepsilon$ controllable through $\alpha$. In simpler words, if we imagine $G$, $G^*$, and $G^\varepsilon$ as points on a 1D-plane representing their spectra, then $G^*$ must be in-between $G$ and $G^\varepsilon$.~\cref{app:proofs} contains omitted proofs.

\begin{figure*}
    \centering
    \includegraphics[width=\textwidth]{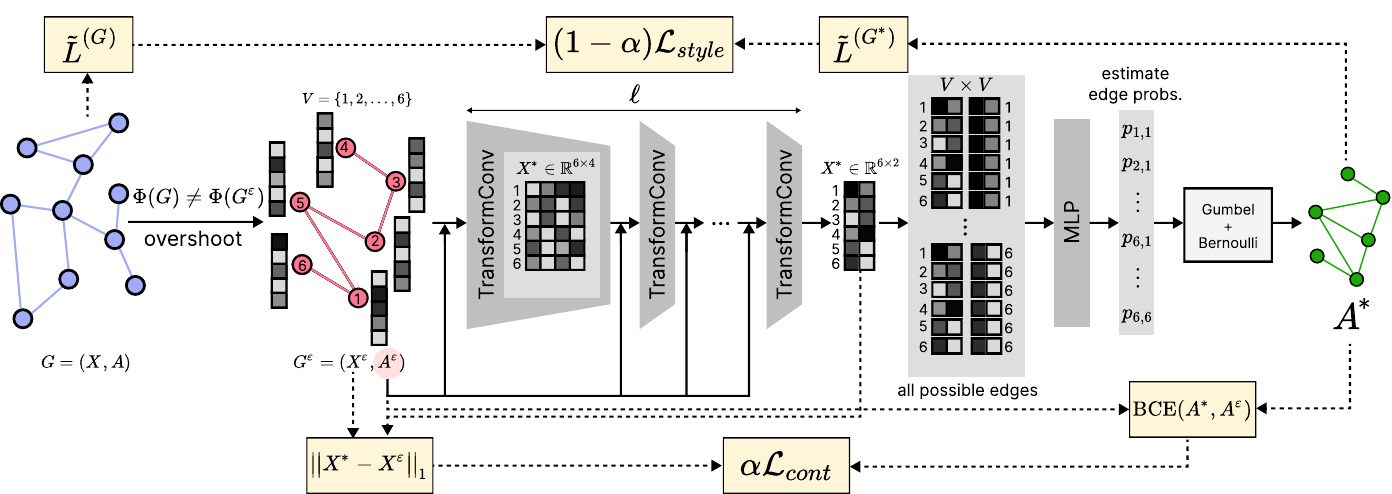}
    \caption{\textbf{Overview of GIST's forward learning pass and backtracking optimization}. Given a graph $G = (X,A)$, we use $\Phi$ to overshoot to $G^\varepsilon = (X^\varepsilon, A^\varepsilon)$ s.t. $\Phi(G) \neq \Phi(G^\varepsilon)$. We feed $G^\varepsilon$ through  transformer convolution layers that output node embeddings $X^*$. For each embedding, we create edge pairs and feed them to an MLP which estimates probabilities $p_{i,j}$. We can sample these probabilities to compose a new adjacency matrix $A^*$. We maintain $G^\varepsilon$ faithfulness with $G^*$ by minimizing $\mathcal{L}_{cont}$, and spectral similarity with $G$ by minimizing $\mathcal{L}_{style}$. Overall GIST produces counterfactuals that strike a balance (linear interpolation) between $G$'s spectrum (style) and the local properties (content) of $G^\varepsilon$ via $\alpha\mathcal{L}_{cont} + (1-\alpha)\mathcal{L}_{style}$. \textcolor{red}{We corrected this Figure from the Camera Ready at ICML'25. The old version had a dashed arrow going from $X^* \in \mathbb{R}^{6 \times 2}$ to $\tilde{\mathcal{L}}^{(G^*)}$ which is incorrect since Laplacians don't need node features to be computed.}}
    \label{fig:architecture}
\end{figure*}

\subsection{Learning to Backtrack}\label{sec:method}

Here, we describe how our proposed GIST model (see \cref{alg:graphmorph} for the forward pass of the network, and \cref{fig:architecture} for GIST's architecture) 
facilitates a \emph{reverse transformation} from a perturbed or intermediate graph $G^\varepsilon$ 
to a refined structure $G^*$ that better aligns with the target topology $G$.
This process, which we call \emph{learning to backtrack}, addresses the challenge of undoing 
distortions or noise, in our scenario $G^\varepsilon$, introduced into the original graph $G$ -- a critical step in tasks such as graph denoising~\cite{zhou2024graph}, and domain adaptation~\cite{cai2024graph}.

\subsubsection{Architecture Overview}\label{sec:architecture_overview}

Our architecture relies on two key ideas: i.e., (1) transformer-based node embeddings, and (2) edge probability estimations to find $G^*$ starting from $G^\varepsilon$.

\noindent\textbf{Transformer-based Node Embeddings.} We employ transformer convolution layers~\cite{ijcai2021p214}  to learn node representations from the potentially noisy edges of $G^\varepsilon$. These layers are interleaved with ReLU activation functions which produce intermediate node embeddings for each vertex. These final node embeddings reflect the structural roles of nodes \emph{despite} the ``edge noise'' present in $G^\varepsilon$.

\noindent\textbf{Edge Probability Prediction.} 
Using the learned node embeddings, we predict edge probabilities via a small multi-layer perceptron (MLP).  Specifically, we concatenate the representations of any two nodes $(i, j)$ and feed them to the MLP to estimate the likelihood of an edge $(i, j)$ existing. This design effectively \emph{backtracks} from the corrupted edges by selectively reintroducing or discarding edges based on how consistent they are with the learned embeddings. 

\textit{Differentiable Sampling and Backpropagation:} A key obstacle in learning graph structure end-to-end is the \emph{discrete} nature of edge decisions.
To circumvent this, we adopt the Gumbel-Softmax relaxation~\cite{jang2017categorical}, enabling 
\emph{continuous} approximations of binary sampling. Concretely, for an edge with predicted probability $p_{i,j}$, we introduce Gumbel noise $\Gamma$ and compute
\begin{equation}\label{eq:gumbel}
    \varrho_{i,j}  = \sigma\big[(\log (p_{i,j} + \epsilon) - \log (1 - p_{i,j} + \epsilon) + \Gamma)\; /\; T\big],
\end{equation}%
where $\sigma$ is the sigmoid function, $\epsilon$ is a small constant for numerical stability, 
and $T$ is the temperature parameter controlling the ``hardness'' of the sample.
During training, these \emph{soft} samples $\varrho_{i,j}$ remain \emph{differentiable} w.r.t.\ the network parameters, 
thus allowing standard backpropagation to adjust node embeddings and edge probabilities.
At inference time, one can threshold $\varrho_{i,j}$ (or sample via a Bernoulli draw)  to yield a final, discrete adjacency matrix.

\subsubsection{Recovery of $G^*$}
Once the edge probabilities are computed for each potential connection, GIST derives a new edge set\footnote{Here, we abuse the notation of a graph introduced in~\cref{sec:preliminaries}, and indicate a graph $G=(V,E)$ with its vertex set $V$ and edge set $E$.} $E^*$ which is a Bernoulli sample over $\varrho_{i,j} \;\forall (i,j)$. In this way, we can compute the spectral loss -- i.e., L1 distance between the eigenvalues -- between the adjacency matrix $A^*$ induced on $E^*$ and the original adjacency matrix $A$ of $G$. Moreover, according to~\cref{def:graph_style_transfer}, we also account for the content preservation between $G^*$ and $G^\varepsilon$. Therefore, given the graphs $G$ and $G^\varepsilon$, taken from the forward overshooting procedure, GIST optimizes the loss in~\cref{eq:loss}.
\begin{equation}\label{eq:loss}
\begin{aligned}
    \mathcal{L} =  \underset{\begin{subarray}{c}G^* = (X^*, A^*),\\A^*=g(|X^*|,E^*),\\X^*, \varrho, E^* = f_\theta(G^\varepsilon)\end{subarray}}{\arg \min} \alpha\underbrace{\bigg[
    \big|\big|X^* - X^\varepsilon\big|\big|_1+\text{BCE}(A^*, A^\varepsilon)\bigg]}_{\mathcal{L}_{cont}}\\
    +    (1-\alpha)\underbrace{\bigg[\sum_{i=1}^n \big|\lambda_i(\tilde{L}^{(G)}) - \lambda_i(\tilde{L}^{(G^*)})\big|\bigg]}_{\mathcal{L}_{style}},
\end{aligned}
\end{equation}
where BCE$(A^*, A^\varepsilon)$ is defined in the second component of~\cref{eq:cond2}, $\alpha \in [0,1]$ is the interpolation factor which pulls (pushes) towards (away from) the decision boundary of $\Phi$ w.r.t. $G$ ($G^\varepsilon$), $f_\theta$ is the our network architecture, and $g(n, E)$ produces an adjacency matrix in $\mathbb{R}^{n\times n}$ based on the edge set $E$.

\begin{algorithm}[!h]
\caption{Forward learning pass of GIST}
\label{alg:graphmorph}
\begin{algorithmic}[1]
\REQUIRE $G^\varepsilon=(X^\varepsilon, A^\varepsilon)$ with vertex set $V$ and edge set $E$, temperature $T$, noise $\Gamma$, model parameters $\theta$, number of attention heads $h$, number of convolution layers $\ell$
\ENSURE $X^*$, $\varrho_{i,j}\;\forall (i,j) \in V \times V$, $E^*$
\STATE $X^* \leftarrow X^\varepsilon$
\FOR{$i = 1 \dots \ell$}
\STATE $X^* \leftarrow \mathrm{ReLU}\big(\mathrm{TransformerConv}_{i,\theta}(X^*, A^\varepsilon, h)\big)$ 
\ENDFOR
\STATE $p_{i,j} \leftarrow \mathrm{MLP}_\theta\big(X^*[i], X^*[j]\big) \;\;\forall (i,j)\in V\times V$
\STATE $\varrho_{i,j} \leftarrow \mathrm{GumbelSoftmax}\big(p_{i,j}, T, \Gamma\big)\;\;\forall (i,j)\in V \times V$
\STATE $E^* \leftarrow \{(i,j)\;|\;\mathrm{Bernoulli}\big(\varrho_{i,j}\big) = 1\}\; \forall \varrho_{i,j}$
\STATE \textbf{Return:} $X^*, \varrho, E^*$
\end{algorithmic}
\end{algorithm}

\begin{table*}[!h]
\centering
\caption{Average validity (the higher, the better) on the test set over 5-cross validations. Bold-faced digits show the best performing strategy; underline is the second-best. $\dag$ depicts a binary-classification scenario; $\ddag$ a multi-class scenario.}
\label{tab:validity}
\resizebox{\linewidth}{!}{%
\begin{tabular}{@{}lcccccccc@{}}
\toprule
\multirow{2}{*}{} & \multicolumn{6}{c}{Real}                                                                                              & \multicolumn{2}{c}{Synthetic}         \\ \cmidrule(l){2-9} 
                  & AIDS $\dag$       & BBBP $\dag$       & BZR $\dag$        & ENZYMES $\ddag$   & MSRC21 $\dag$     & PROTEINS $\dag$   & BAShapes $\dag$   & COLORS-3 $\ddag$  \\ \midrule
iRand             & 0.013             & 0.151             & 0.332             & 0.134             & 0.035             & 0.018             & 0.000             & 0.392             \\ \midrule
CF-GNNExp.        & \underline{0.936} & \underline{0.931} & \textbf{0.810}    & \underline{0.910} & \textbf{0.965}    & 0.378    & 0.516             & 0.736             \\
CF$^2$            & 0.019             & 0.208             & 0.185             & 0.437             & 0.018             & 0.039             & 0.000             & 0.676             \\
CLEAR             & 0.037             & 0.267             & 0.176             & 0.370             & \underline{0.933} & \underline{0.563} & \underline{0.908} & 0.217             \\
RSGG-CE           & 0.128             & 0.404             & \underline{0.732} & 0.447             & 0.912             & 0.237             & \textbf{1.000}    & \textbf{0.884}    \\
GIST              & \textbf{0.969}    & \textbf{0.956}    & \textbf{0.810}    & \textbf{0.970}    & \textbf{0.965}    & \textbf{0.791}    & \textbf{1.000}    & \textbf{0.884} \\ \bottomrule
\end{tabular}%
}
\end{table*}

\begin{table*}[!h]
\centering
\caption{Average fidelity (the higher, the better) on the test set over 5-cross validations. Bold-faced digits show the best performing strategy; underline is the second-best. $\dag$ depicts a binary-classification scenario; $\ddag$ a multi-class scenario.}
\label{tab:fidelity}
\resizebox{\linewidth}{!}{%
\begin{tabular}{@{}lcccccccc@{}}
\toprule
\multirow{2}{*}{} & \multicolumn{6}{c}{Real}                                                                                                       & \multicolumn{2}{c}{Synthetic}         \\ \cmidrule(l){2-9} 
                  & AIDS $\dag$       & BBBP $\dag$       & BZR $\dag$        & ENZYMES $\ddag$   & MSRC21 $\dag$     & PROTEINS $\dag$            & BAShapes $\dag$   & COLORS-3 $\ddag$  \\ \midrule
iRand             & 0.013             & 0.177             & 0.146             & 0.004             & -0.035            & -0.009                     & 0.000             & -0.007            \\ \midrule
CF-GNNExp.        & \underline{0.924} & \underline{0.784} & \textbf{0.741}    & \underline{0.077} & 0.825             & \underline{0.202} & \underline{0.484} & 0.091             \\
CF$^2$            & 0.015             & 0.178             & 0.176             & 0.017             & -0.018            & 0.018                      & 0.000             & 0.065             \\
CLEAR             & 0.050             & 0.164             & 0.127             & 0.040             & \underline{0.855} & 0.051                      & 0.060             & 0.033             \\
RSGG-CE           & 0.124             & 0.286             & \underline{0.683} & 0.050             & 0.807             & 0.133                      & \textbf{0.968}    & \underline{0.147} \\
GIST              & \textbf{0.957}    & \textbf{0.809}    & \textbf{0.741}    & \textbf{0.203}    & \textbf{0.860}    & \textbf{0.425}             & \textbf{0.968}    & \textbf{0.202}    \\ \bottomrule
\end{tabular}%
}
\end{table*}

\section{Experiments}

\subsection{Experimental setup}
We compare GIST to other SoTA learning-based explanation methods (i.e., 
CF-GNNExp. \cite{lucic2022cf}, CF$^2$ \cite{tancf2}, CLEAR \cite{ma2022clear}, and RSGG-CE \cite{prado2024robust} in 8 benchmarking datasets (\cref{sec:datasets}) for graph classification with different evaluation metrics (\cref{app:eval_metrics}) relying on the GRETEL framework~\cite{prado2023developing,prado2024gretel}. We adapted RSGG-CE and CLEAR, originally available only for binary, to work in multi-class classifications scenarios. We rely on the iRand baseline \cite{prado2023generative} to verify whether the SoTA actually learns to produce valid counterfactual explanations or are random perturbations sufficient. We use the default hyperparameters for the SoTA explainers, and $p=.01$ and $t=3$ for iRand (\cref{app:hyperparameters}). Unless differently stated, we set $\alpha=0.9$ for GIST. To generate explainations, we first train the underlying oracles -- three-layered GCNs interleaved with ReLU activation functions -- and use the same weights for all explainers to ensure fair performance comparisons. ~\cref{tab:aids} -- \cref{tab:proteins} illustrate the accuracy of the oracles on the test sets of the datasets. We use a 90:10 train-test split for all explainers and designate 10\% of the training set as validation. We perform 5-fold cross validations to assess the performances of the explainers on one AMD EPYC 7002/3 64-Core CPU (for smaller models) and one Nvidia TESLA V100 (for larger models) totaling $\sim$450$h$ of execution time. 

\subsection{Results}
\noindent\textbf{GIST has an average 7.6\% gain over the second-best in terms of validity, and 45.5\% in fidelity.} \cref{tab:validity} illustrates the validity of all explainers on  binary and multi-class graph classification tasks. GIST outperforms SoTA explainers on 4/8 datasets (+13.3\%), performs on par with CF-GNNExp. on 2/8 and RSGG-CE on 2/8. Although validity measures the effectiveness of the explainers crossing the decision boundary, we also measure the fidelity of the produced counterfactuals w.r.t. the oracle (see~\cref{tab:fidelity}). Notice that GIST, in COLORS-3, reports a better fidelity that RSSG-CE (i.e., +37.4\% improvement) although the latter has a similar validity. Finally, GIST is on par with CF-GNNExp. on BZR and MSRC21 in terms of validity. However, we report an average gain of 2.12\% on fidelity, hinting that we are more faithful to the true class distribution (see~\cref{app:eval_metrics}). We show all measured performances of the explainers for all datasets in~\cref{app:experiments}.

\subsection{Ablation studies}\label{sec:ablations}

\noindent\textbf{The interpolation factor $\bm{\alpha}$ controls the trade-off between the distances of $\bm{G}$, $\bm{G^*}$, and $\bm{G^\varepsilon}$, and the overall validity.} According to~\cref{fig:motivation} and the intuition of~\cref{corollary:frobenius_norm}, we expect that, regardless of the value of the interpolation factor $\alpha$, the counterfactual $G^*$, if imagined as a point in a 1D-plane, must be in-between $G$ and $G^\varepsilon$. To support our claim, we train GIST on 5-fold cross-validations on AIDS with changing $\alpha \in \{0.1,0.3,0.5,0.7,0.9\}$ and measure the GED and validity of the counterfactuals -- see~\cref{fig:alpha_vs_ged}. When $\alpha \rightarrow 1$, the GED between $G$ and $G^*$ increases since, according to~\cref{eq:loss}, GIST ``prefers'' to stay nearby $G^\varepsilon$. As a consequence, we expect that the GED between $G$ and $G^\varepsilon$ decreases when $\alpha \rightarrow 1$ (see the red curve). Finally, we show that if the GED between $G$ and $G^*$ is high, then GIST must have overshot ``far away'' from $\Phi$'s boundary to reach $G^\varepsilon$.\footnote{Conceptually, $\text{GED}(G,G^*) \leq \text{GED}(G, G^\varepsilon)$.} Hence, the chance of finding a correct counterfactual is higher -- i.e., GIST has a higher validity.

\begin{figure}[!t]
    \centering
    \includegraphics[width=\linewidth]{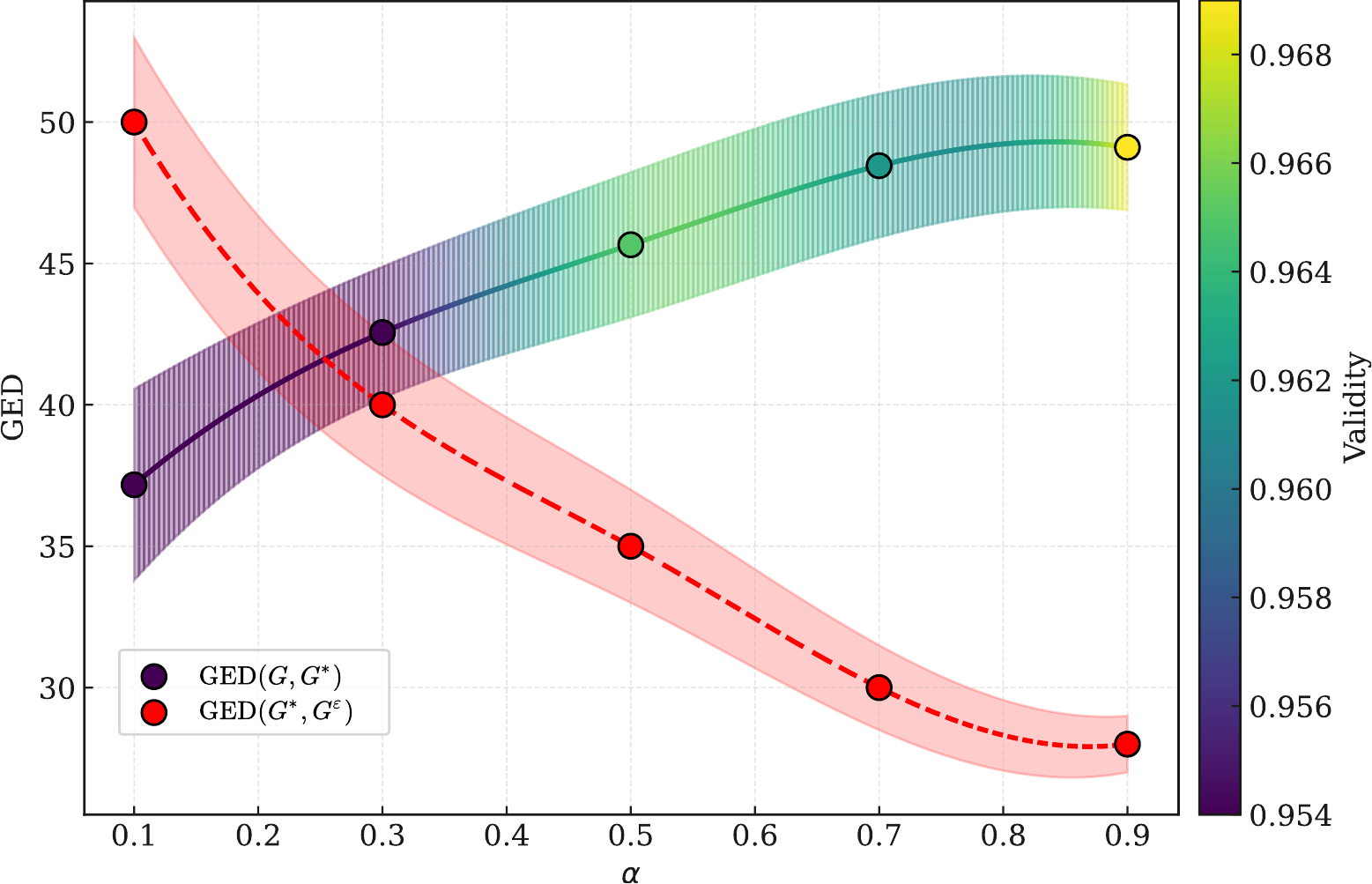}
    \caption{\textbf{As $\bm{\alpha \rightarrow 1}$, the distance between $\bm{G}$ and $\bm{G^*}$ becomes larger; the distance between $\bm{G^*}$ and $\bm{G^\varepsilon}$ decreases.} We show the relationship between the interpolation factor $\alpha$, the GED, and the validity of $G^*$ as guided by the oracle $\Phi$ on AIDS. Notice how the validity of GIST increases when the distance between $G^*$ and $G$ increases.}
    \label{fig:alpha_vs_ged}
\end{figure}

\noindent\textbf{GIST finds counterfactuals whose eigenvalues are closely aligned to the optimal convex combination of~\cref{eq:combo_eigenvalues}.} To demonstrate our theoretical findings, we investigate the eigenvalues of $G^*$ in accordance to the optimal convex combination illustrated in~\cref{eq:combo_eigenvalues}. Because GIST is a learning approach which minimizes~\cref{eq:loss}, it might happen that the spectrum of the produced counterfactual $G^*$ is misaligned with the true combined spectra of $G$ and $G^\varepsilon$.~\cref{fig:AIDS_alpha_eigenvalues} shows the spectra of 10 randomly chosen $(G,G^\varepsilon)$ pairs from the AIDS dataset. Here, we illustrate the eigenvalues of $G^*$ as produced by GIST, and the true combined spectra with $\alpha=0.9$. Notice how there is a directly proportional relationship between the GED -- see~\cref{fig:alpha_vs_ged} -- and the spectral combination of $G^*$. Therefore, for $\alpha=0.9$, we expect that the eigenvalues of $G^*$ are more aligned with those of $G^\varepsilon$. Indeed, the figure confirms our intuition and shows how $G^*$ is closely aligned with the true spectra combination, reporting an average error of only   ${2.051\times10^{-3}}$ for the reported samples (vs. $1.67\times10^{-3}$ for the whole dataset). Note that the zeros in the figure are due to the padding between $G^\varepsilon$ and $G$ to compute the spectral differences, which need the adjacency and degree matrices to be of same dimensions. To account for different graph sizes, in the future, we will explore the Wasserstein distance of the eigenvalues instead of L1.

\begin{figure}[!h]
    \centering
    \includegraphics[width=\linewidth]{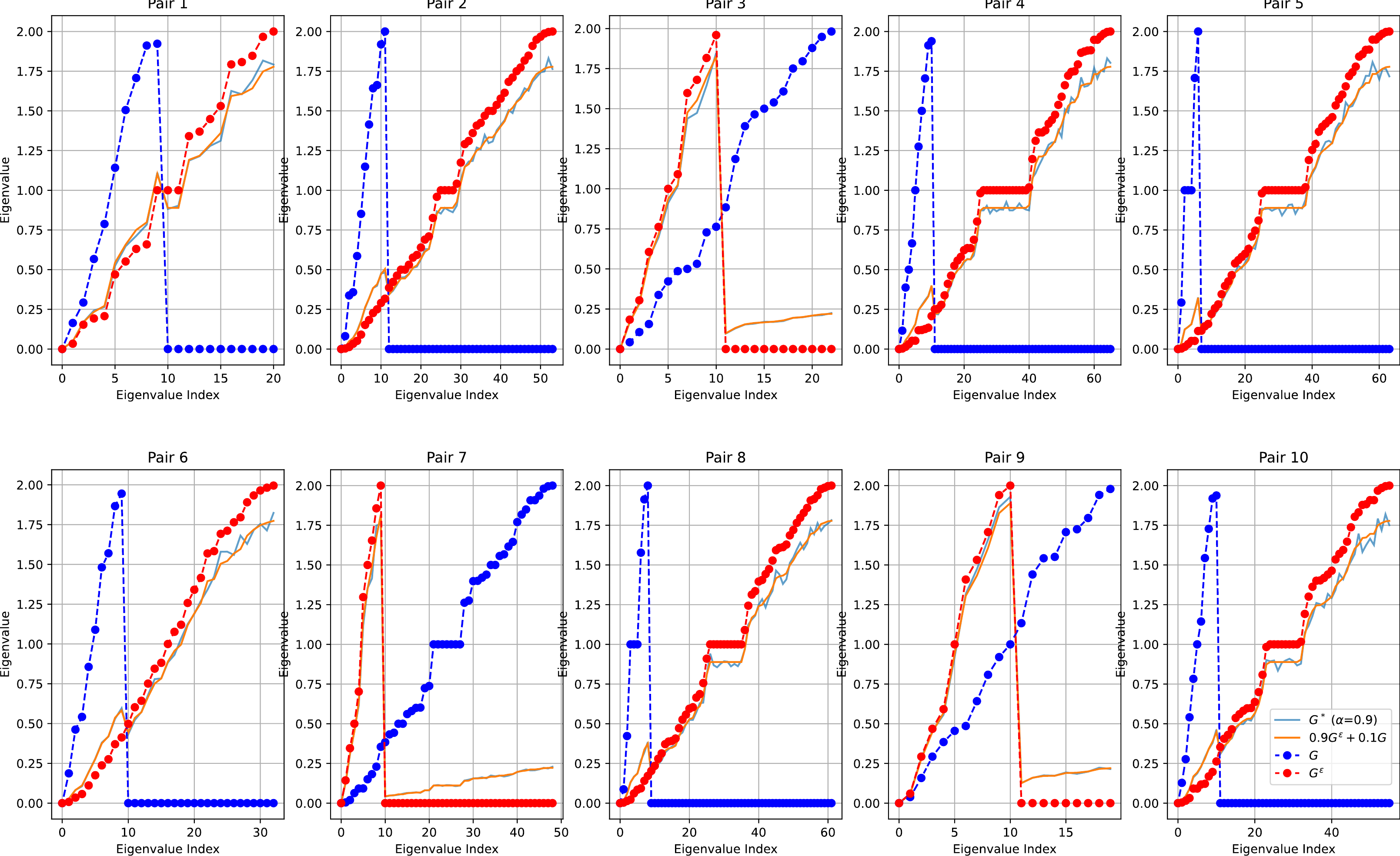}
    \caption{\textbf{GIST's backtracking mechanism finds $\bm{G}^*$ with under $\bm{2.051\times10^{-3}}$ error vs. the optimal counterfactual in terms of spectra convex combination in~\cref{eq:combo_eigenvalues}.} We show the eigenvalues of 10 random $(G,G^\varepsilon)$ pairs on AIDS, and the finding of $G^*$ according to GIST with $\alpha=0.9$. We also illustrate the optimal counterfactual produced w.r.t. the spectral convex combination $0.9L(G^\varepsilon)+0.1L(G)$. Zeros in the figure are  due to the padding between $G^\varepsilon$ and $G$ to compute the spectral differences.}
    \label{fig:AIDS_alpha_eigenvalues}
\end{figure}

\begin{figure}[!h]
    \centering
    \includegraphics[width=\columnwidth]{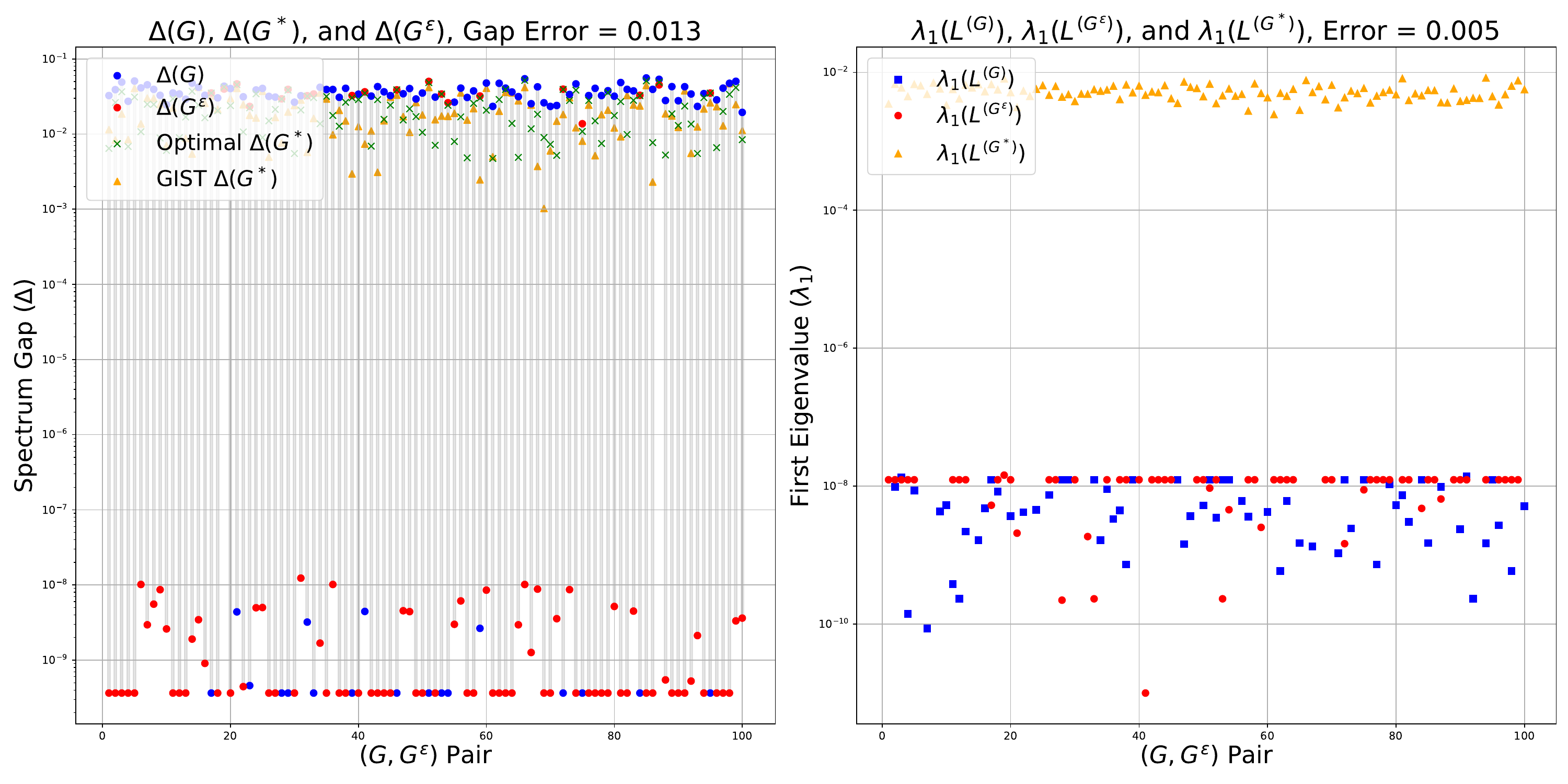}
    \caption{\textbf{GIST finds $\bm{G^*}$ whose gap satisfies \cref{theorem:spectral_gap} with only $\bm{1.3\times10^{-2}}$ difference against the optimal spectral gap}. (left) We show the spectral gaps of $G$, $G^\varepsilon$ and the found $G^*$ according to $\alpha=0.9$ on 100 random samples of MSRC21. We also show the optimal counterfactual to assess the distance with $G^*$. (right) We show the first eigenvalue of $G$, $G^\varepsilon$, and $G^*$ to demonstrate the connectivity property in~\cref{theorem:connectivity}. We report an error of only $5\times10^{-3}$ from the optimal counterfactual. To show zeros in log-scale, we correct them with by adding the lowest gap/eigenvalue that is non-zero.} 
    \label{fig:spectral_gap_msrc21}
\end{figure}

\begin{figure}[!h]
\centering
\includegraphics[width=\linewidth]{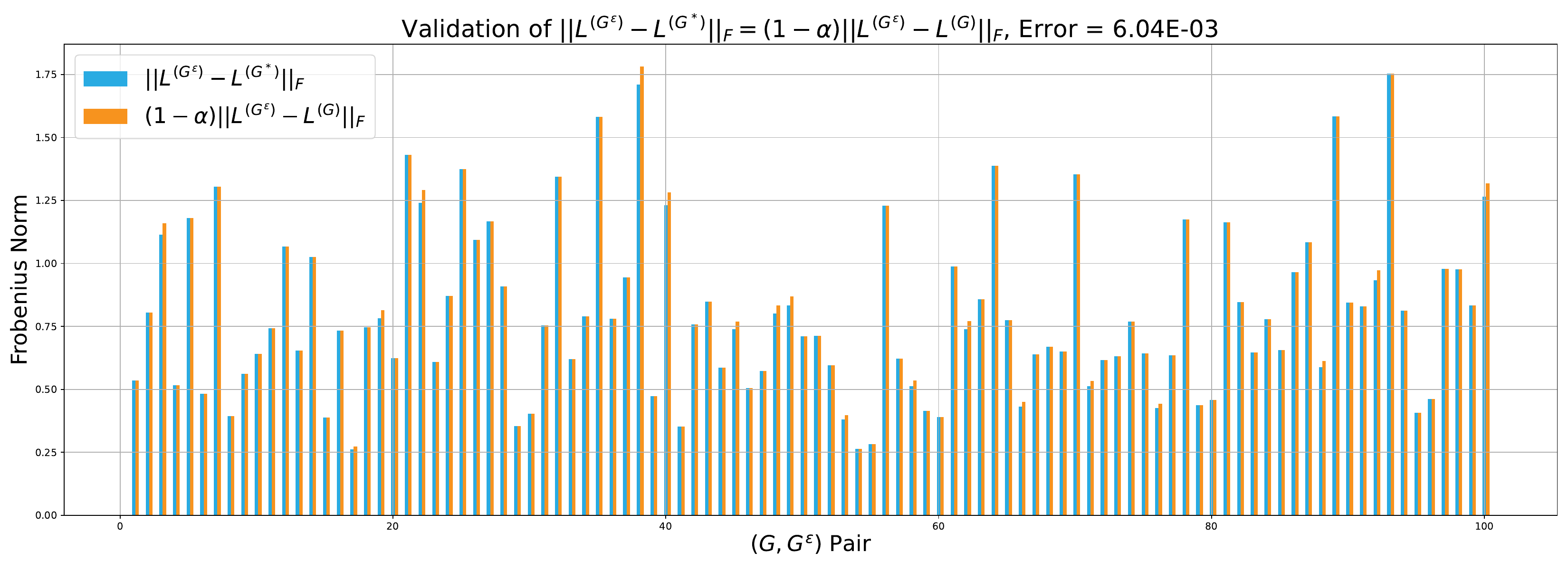} 
\caption{\textbf{GIST produces $\bm{G^*}$ who are spectrally conformant to the style and local structure}. For $\alpha=0.9$, we illustrate 100 randomly chosen $(G,G^\varepsilon)$ on PROTEINS, and measure the expected norm $(1-\alpha)||L^{(G^\varepsilon)}-L^{(G)}||_F$. We also measure $||L^{(G^\varepsilon)} - L^{(G^*)}||_F$ on the predicted $G^*$ and report a negligible error of $6.04\times10^{-3}$.}
\label{fig:frobenius_norm_PROTEINS}
\end{figure}

\noindent\textbf{GIST finds counterfactuals that respect the spectral gap of~\cref{theorem:spectral_gap} and the connectivity property of~\cref{theorem:connectivity}.} \cref{fig:spectral_gap_msrc21} (left) shows the spectral gaps $\Delta(G)$, $\Delta(G^\varepsilon)$, and $\Delta(G^*)$ for 100 random samples of MSRC21 -- see~\cref{app:spectral_gap} for more datasets. Notice that $\Delta(G^*)$ is always within the boundaries established in~\cref{theorem:spectral_gap} and closely aligns with the optimal gap produced by the eigenvalues of the Laplacian from the convex combination $0.9L(G^\varepsilon)+0.1L(G)$, reporting a gap error of only $1.3\times10^{-2}$. For completeness purposes, in~\cref{fig:spectral_gap_msrc21} (right) we illustrate the first eigenvalues of $G$, $G^\varepsilon$ and the produced $G^*$. Recall that a graph is connected if its first lowest eigenvalue is connected (i.e., $\lambda_1(L^{(G)}) = 0$). Notice how the majority of $G^\varepsilon$ graphs is connected -- we corrected zero values to show in log-scale by adding the lowest eigenvalue that is non-zero -- thus, depicting (mostly) a horizontal line. However, because $G$ is not always connected, then $G^*$ will also show $\lambda_1(L^{(G^*)}) > 0$. However, for fairness purposes, we notice that $\lambda_1(L^{(G)})$ and $\lambda_1(L^{(G^\varepsilon)})$ are only an epsilon away from zero (i..e, $\in [10^{-10},10^{-8}]$), practically resulting connected. Hence, as per~\cref{theorem:connectivity}, in the optimal scenario, we expect that the first eigenvalue of the counterfactual is 0. GIST generates $G^*$ whose connectedness is close to the optimal with $\lambda_1(L^{(G^*)}) \approx 5\times 10^{-3}$. See ~\cref{app:spectral_gap} for other datasets, and~\cref{fig:cf_spectra_with_alphas} for a qualitative example of how two connected graphs $G$ and $G^\varepsilon$ produce a connected counterfactual $G^*$.

\noindent\textbf{GIST produces spectrally conformant counterfactuals.} We show that the produced counterfactuals are meaningful in terms of spectral similarity  according to~\cref{corollary:frobenius_norm}.~\cref{fig:frobenius_norm_PROTEINS} illustrates the expected norm $(1-\alpha)||L^{(G^\varepsilon)}-L^{(G)}||_F$ and the measured $||L^{(G^\varepsilon)} - L^{(G^*)}||_F$ from the predicted $G^*$. As shown in the theoretical part, we expect these two norm differences to be equal, showing that $G^*$ cannot be arbitrarily dissimilar to $G$ and $G^\varepsilon$. We argue that a spectral difference is a more principled measurement than the GED which is a mere edit distance metric of two graphs -- i.e., it might happen that adding/removing edges moves the counterfactual away from the input, however it does not impact its spectrum. GIST reports a negligible error of $6.04\times10^{-3}$ w.r.t. the scenario of producing the optimal counterfactual according to norm differences described above. It also reports lower differences between $G^\varepsilon$ and $G^*$, which implies that $G^*$ gives more importance to node and edge similarities rather than to $G$'s style properties. We suspect this arises due to the non-aggressive learning rate set for GIST which settles for a local minimum (see~\cref{app:hyperparameters}). See~\cref{app:frobenium_norms} for more experiments.

\section{Limitations}

\textbf{Trade-off Between Content and Style ($\bm{\alpha = 0.5}$).} When equal weight is given to content and style, the model exhibits reduced performance. Specifically, it struggles to simultaneously preserve the original content structure while accurately matching the target style. This imbalance leads to significantly higher reconstruction errors compared to cases where one objective is prioritized. Detailed quantitative results supporting this observation are provided in~\cref{app:alpha_limiation}.

\textbf{No guarantees of class maintenance during backtracking.} After overshooting $G$ to $G^\varepsilon$ s.t. $\Phi(G) \neq \Phi(G^\varepsilon)$, the backtracking mechanism only guarantees that $\Phi(G) \neq \Phi(G^*)$. Nevertheless, while there are guarantees that $G^*$ has a structure which is an interpolation between the structural style ($G$) and the class-related spectrum ($G^\varepsilon$), it is not a given that $\Phi(G^*) = \Phi(G^\varepsilon)$ in a multiclass classification scenario. As mentioned in the next section, we leave this for further investigation in the future.

\section{Conclusion}
We introduced GIST, a novel method for graph counterfactual explainability by reformulating the task as a style transfer problem. We addressed key limitations of forward-based methods, such as compromising structural semantics,  by leveraging spectral properties in a learnable backtracking mechanism. We integrated three core innovations: (1) style-transfer that preserves structural coherence while refining local node features and edge connections, (2) theoretical guarantees on spectral alignment and connectivity preservation, and (3) modular architecture compatible with diverse GNN backbones.

We highlighted GIST's superiority across eight benchmarks, achieving SoTA validity (+7.6\%) and fidelity (+45.5\%) on 8 benchmark datasets while maintaining computational efficiency. These results validate that backtracking, rather than forward traversal, enables precise control over counterfactuals, ensuring they remain structurally faithful to the input. We also showed that the interpolation factor guides the trade-off between distance and validity of the counterfactuals, whose spectral properties are within negligible empirical errors w.r.t. the optimal theoretical solution.

In the future, we will explore more efficient overshooting algorithms that help GIST choose a  perturbed graph who is spectrally aligned to the input, thus improving convergence rates. Another avenue for investigation is to make theoretical guarantees that the counterfactuals maintain the class of the overshot graph. Lastly, we will explore global-level counterfactuality where the spectral commonalities of the class to explain can be used as guidance for the style transfer.


\section*{Acknowledgements}
\noindent Dr. Prenkaj was partially supported by the Lazio Region grant, FESR Lazio 2021 -- 2027, project @HOME (\# F89J23001050007, CUP B83C23006240002). 

The authors would like to thank Roman Abramov for his acute observations during Dr. Prenkaj's seminar presentation that sparked the pursuit of the idea of style transfer in graph counterfactuality.

\section*{Impact Statement}
This paper presents work whose goal is to advance the field of Machine Learning. There are many potential societal consequences of our work, none which we feel must be specifically highlighted here.

\bibliography{bibliography}
\bibliographystyle{icml2025}

\newpage
\appendix
\onecolumn
\section{Omitted Proofs}\label{app:proofs}

\subsection{\cref{lemma:eignevalues}: Spectral Mixing Property}\label{sec:proof_spectral_mixing}

\subsubsection{Commuting case}
\begin{proof}
The normalized Laplacians of $G$ and $G^\varepsilon$, $L^{(G)}$ and $L^{(G^\varepsilon)}$, commute and are simultaneously diagonalizable. Two matrices $L^{(G)}$ and $L^{(G^\varepsilon)}$ commute if $L^{(G)}L^{(G^\varepsilon)}=L^{(G^\varepsilon)}L^{(G)}$. If these matrices commute and are symmetric, there exists a single orthonormal matrix $U$ that diagonalizes both
\begin{equation}
    L^{(G)} = U \Lambda U^\top,\; L^{(G^\varepsilon)} = U\Lambda^\varepsilon U^\top,
\end{equation}
Following~\cref{eq:convex_combo}, the styled Laplacian is therefore
\begin{equation}
    L^{(G^*)} = \alpha \cdot U \Lambda^\varepsilon U^\top + (1-\alpha)\cdot U\Lambda U^\top.
\end{equation}
Thus, we can approximate
\begin{equation}
    L^{(G^*)} =  U(\alpha\Lambda^\varepsilon + (1-\alpha)\Lambda)U^\top.
\end{equation}
Since $\alpha\Lambda^\varepsilon + (1-\alpha)\Lambda$ is also a diagonal matrix, its diagonal entries are $\alpha\lambda_i^{(G^\varepsilon)} + (1-\alpha)\lambda_i^{(G)}$. Being diagonal in the same basis $U$, the eigenvalues of $L^{(G^*)}=\alpha L^{(G^\varepsilon)} + (1-\alpha)L^{(G)}$ are precisely 
\begin{equation}
    \bigg\{\alpha\lambda_i^{(G^\varepsilon)} + (1-\alpha)\lambda_i^{(G)}\bigg\}_{i=1}^{n}.
\end{equation} 
Thus, the eigenvalues interpolate linearly under convex combinations. The same reasoning holds for the normalized Laplacians $\tilde{L}^{(G)}$ and $\tilde{L}^{(G^\varepsilon)}$.
\end{proof}

\subsubsection{Non-Commuting Case}
Although we assume that $L^{(G)}$ and $L^{(G^\varepsilon)}$ commute, we provide the reader with a complete overview when these matrices do not commute. Let their eigenvalues be sorted in non-decreasing order:
\begin{equation}
  \lambda_1\bigl(L^{(G)}\bigr) \;\le\; \lambda_2\bigl(L^{(G)}\bigr) \;\le\; \cdots \;\le\; \lambda_n\bigl(L^{(G)}\bigr),
\end{equation}
\begin{equation}
  \lambda_1\bigl(L^{(G^\varepsilon)}\bigr) \;\le\; \lambda_2\bigl(L^{(G^\varepsilon)}\bigr) \;\le\; \cdots \;\le\; \lambda_n\bigl(L^{(G^\varepsilon)}\bigr).
\end{equation}
Because \(L^{(G)}\) and \(L^{(G^\varepsilon)}\) do not commute, we cannot simply write 
\(\lambda_i\bigl(L^{(G^*)}\bigr) = \alpha\,\lambda_i\bigl(L^{(G)}\bigr) + (1-\alpha)\,\lambda_i\bigl(L^{(G^\varepsilon)}\bigr)\).
Instead, we use Weyl's inequalities to bound \(\lambda_k\bigl(L^{(G^*)}\bigr)\).

\paragraph{Courant-Fischer (Min-Max) Theorem.}
Recall that for a real symmetric matrix \(M\) with eigenvalues 
\(\lambda_1(M) \leq \cdots \leq \lambda_n(M)\), the $k$-th eigenvalue 
\(\lambda_k(M)\) can be characterized by:

\begin{equation}
  \lambda_k(M)
  \;=\;
  \max_{\substack{U \subset \mathbb{R}^n \\ \dim(U) = k}}
  \;\;
  \min_{\substack{\mathbf{x} \in U \\ \mathbf{x} \neq \mathbf{0}}}
  \;\;
  \underbrace{\frac{\mathbf{x}^\top M \,\mathbf{x}}{\mathbf{x}^\top \mathbf{x}}}_{\text{Rayleigh quotient}}.
\end{equation}

\paragraph{Weyl's Inequalities: Statement}

\begin{theorem}[Weyl's Inequalities]
\label{thm:weyl}
Let $A$ and $B$ be two real symmetric (or Hermitian) matrices of size $n \times n$.
Denote their eigenvalues in non-decreasing order by
\begin{equation}
  \lambda_1(A) \le \lambda_2(A) \le \cdots \le \lambda_n(A),
  \quad
  \lambda_1(B) \le \lambda_2(B) \le \cdots \le \lambda_n(B).
\end{equation}
Let $C = A + B$. Then its eigenvalues 
\(\{\lambda_i(C)\}_{i=1}^n\) also arranged in non-decreasing order satisfy:

\begin{equation}
  \lambda_{k}(A) + \lambda_{1}(B) 
  \;\;\leq\;\; \lambda_{k}(A + B) 
  \;\;\leq\;\; \lambda_{k}(A) + \lambda_{n}(B), 
  \quad
  \text{for each } 1 \le k \le n.
\end{equation}

More precise interlacing bounds also exist, but this basic version 
often suffices to show that every eigenvalue of $A+B$ 
is contained within an interval determined by sums of the individual 
eigenvalues of $A$ and $B$.
\end{theorem}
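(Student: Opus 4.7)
The plan is to derive both bounds directly from the Courant--Fischer characterization recalled immediately before the statement, exploiting the fact that the Rayleigh quotient is additive: for any nonzero $\mathbf{x}$,
\begin{equation}
\frac{\mathbf{x}^\top (A+B)\mathbf{x}}{\mathbf{x}^\top \mathbf{x}} \;=\; \frac{\mathbf{x}^\top A\mathbf{x}}{\mathbf{x}^\top \mathbf{x}} + \frac{\mathbf{x}^\top B\mathbf{x}}{\mathbf{x}^\top \mathbf{x}},
\end{equation}
and the second summand is uniformly sandwiched by $\lambda_1(B)$ and $\lambda_n(B)$ thanks to the Rayleigh--Ritz bounds for the symmetric matrix $B$. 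These two ingredients are essentially all that is needed.

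For the upper bound $\lambda_k(A+B) \le \lambda_k(A) + \lambda_n(B)$, I would apply the max-min form of Courant--Fischer to $C = A+B$: take the maximum over $k$-dimensional subspaces $U \subset \mathbb{R}^n$ and the minimum over unit vectors $\mathbf{x} \in U$ of the Rayleigh quotient of $C$. Substituting the additive decomposition and using $\mathbf{x}^\top B \mathbf{x} \le \lambda_n(B)\|\mathbf{x}\|^2$ pointwise, the constant $\lambda_n(B)$ can be pulled outside both the inner $\min$ and the outer $\max$, leaving exactly the max-min expression that defines $\lambda_k(A)$. This yields the desired bound.

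For the lower bound $\lambda_k(A) + \lambda_1(B) \le \lambda_k(A+B)$, I would not repeat the argument symmetrically but instead apply the just-proved upper bound to a clever rewriting, namely $A = (A+B) + (-B)$. Using $\lambda_n(-B) = -\lambda_1(B)$ gives $\lambda_k(A) \le \lambda_k(A+B) - \lambda_1(B)$, which rearranges into the claim. This duality move avoids redoing the subspace bookkeeping a second time.

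The main obstacle, in my view, is keeping the min/max bookkeeping honest. The Courant--Fischer formula comes in two equivalent flavors (max-min over $k$-dimensional subspaces vs.\ min-max over $(n-k+1)$-dimensional subspaces), and mixing them up, or choosing the wrong one when inserting the inequality $\mathbf{x}^\top B \mathbf{x} \le \lambda_n(B)$, flips the inequality and produces a bound with the wrong eigenvalue index of $B$. Care is also required when the bound on $\mathbf{x}^\top B \mathbf{x}$ is inserted inside a $\min$ versus a $\max$: the bound must be uniform in $\mathbf{x}$ so that the constant can be extracted, which is precisely why one uses $\lambda_n(B)$ (resp.\ $\lambda_1(B)$) rather than any finer intermediate eigenvalue. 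Beyond this bookkeeping, the proof is essentially a two-line calculation per direction, so no deeper machinery is anticipated.
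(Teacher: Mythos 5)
Your proof is correct. Note, however, that the paper does not actually prove \cref{thm:weyl}: it recalls the Courant--Fischer characterization as background and then states Weyl's inequalities as a classical fact, passing directly to their application to $\alpha L^{(G^\varepsilon)} + (1-\alpha)L^{(G)}$. Your argument therefore supplies a derivation the paper omits, and it is exactly the standard one suggested by the paper's own setup: additivity of the Rayleigh quotient plus the uniform Rayleigh--Ritz bounds $\lambda_1(B)\,\mathbf{x}^\top\mathbf{x} \le \mathbf{x}^\top B\,\mathbf{x} \le \lambda_n(B)\,\mathbf{x}^\top\mathbf{x}$, inserted into the max-min formula for $\lambda_k(A+B)$; since the added term is a constant independent of $\mathbf{x}$ and of the subspace $U$, it passes through both the inner $\min$ and the outer $\max$, which is precisely the bookkeeping point you flag. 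The duality move $A = (A+B) + (-B)$ with $\lambda_n(-B) = -\lambda_1(B)$ for the lower bound is clean and valid (a direct symmetric argument using $\lambda_1(B)$ would work equally well). No gaps.
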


In our context, \(\,A = \alpha\,L^{(G^\varepsilon)}\) and \(B = (1-\alpha) L^{(G)}\). Since $A$ and $B$ are still real symmetric and $\alpha,\, (1-\alpha) \ge 0$, Weyl's theorem directly applies.

\paragraph{Applying Weyl's Inequalities to \(\bm{\alpha L^{(G^\varepsilon)} + (1-\alpha) L^{(G)}}\).}

Putting $A = \alpha L^{(G^\varepsilon)}$ and $B = (1 - \alpha) L^{(G)}$ into Theorem~\cref{thm:weyl}, we get for each $k = 1, \ldots, n$:

\begin{equation}
  \lambda_k\bigl(\alpha L^{(G^\varepsilon)}\bigr) 
  \;+\;
  \lambda_1\bigl((1-\alpha) L^{(G)}\bigr)
  \;\;\le\;\;
  \lambda_k\Bigl(\alpha L^{(G^\varepsilon)} + (1-\alpha) L^{(G)}\Bigr)
  \;\;\le\;\;
  \lambda_k\bigl(\alpha L^{(G^\varepsilon)}\bigr) 
  \;+\;
  \lambda_n\bigl((1-\alpha) L^{(G)}\bigr).
\end{equation}
Using the fact that scaling a matrix by a nonnegative real factor $\alpha$ 
scales all its eigenvalues by $\alpha$, we have:

\begin{equation}
  \lambda_k\bigl(\alpha L^{(G^\varepsilon)}\bigr)
  \;=\;
  \alpha\,\lambda_k\bigl(L^{(G^\varepsilon)}\bigr),
  \quad
  \lambda_j\bigl((1-\alpha) L^{(G)}\bigr)
  \;=\;
  (1-\alpha)\,\lambda_j\bigl(L^{(G)}\bigr).
\end{equation}
Hence:

\begin{equation}
  \alpha\,\lambda_k\bigl(L^{(G^\varepsilon)}\bigr)
  \;+\;
  (1-\alpha)\,\lambda_1\bigl(L^{(G)}\bigr)
  \;\;\le\;\;
  \lambda_k\Bigl(L^{(G^*)}\Bigr)
  \;\;\le\;\;
  \alpha\,\lambda_k\bigl(L^{(G^\varepsilon)}\bigr)
  \;+\;
  (1-\alpha)\,\lambda_n\bigl(L^{(G)}\bigr),
\end{equation}
where \(L^{(G^*)} = \alpha L^{(G^\varepsilon)} + (1-\alpha) L^{(G)}\). 

Therefore, while the $k$-th eigenvalue of $L^{(G^*)}$ does \emph{not} 
equal a simple pointwise interpolation of $\lambda_k\bigl(L^{(G)}\bigr)$ 
and $\lambda_k\bigl(L^{(G^\varepsilon)}\bigr)$, 
it is constrained to lie within the interval:

\begin{equation}
  \bigl[\,
    \alpha\,\lambda_k\bigl(L^{(G^\varepsilon)}\bigr) + (1-\alpha)\,\lambda_1\bigl(L^{(G)}\bigr),
    \;\,
    \alpha\,\lambda_k\bigl(L^{(G^\varepsilon)}\bigr) + (1-\alpha)\,\lambda_n\bigl(L^{(G)}\bigr)
  \bigr].
\end{equation}

\begin{remark}[Interpretation]
  This shows precisely why, in the non-commuting case, 
  we only obtain \emph{bounds} on the styled eigenvalues 
  rather than a direct, index-by-index convex combination 
  of the form 
  $\alpha\,\lambda_k(L^{(G^\varepsilon)})+(1-\alpha)\,\lambda_k(L^{(G)})$. 
  The lack of commutativity prevents simultaneous diagonalization. 
  Therefore, the eigenvectors of $L^{(G^*)}$ differ from those of $L^{(G)}$ 
  and $L^{(G^\varepsilon)}$, making a simple one-to-one mapping 
  of eigenvalues impossible.
\end{remark}

\subsection{\cref{theorem:connectivity}: Connectedness Under Convex Combination of Laplacians}\label{sec:proof_connectivity}

\begin{theorem}\label{thm:connectedness}
Let $L^{(G)}$ and $L^{(H)}$ be the (combinatorial or normalized) Laplacians of two simple, undirected, \textbf{connected} graphs $G$ and $H$ on the same vertex set of size $n$. For any $\alpha \in [0,1]$, define
\[
L^{(F)} := \alpha L^{(G)} + (1-\alpha) L^{(H)}.
\]
Then $L^{(F)}$ corresponds to a connected graph.
\end{theorem}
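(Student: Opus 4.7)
The plan is to reduce connectedness of $F$ to a spectral statement and then exploit the convex combination structure. Recall the standard characterization: a graph on $n$ vertices with (combinatorial) Laplacian $L$ is connected if and only if $L$ is positive semidefinite with $\ker(L) = \mathrm{span}(\mathbf{1})$, equivalently $\lambda_2(L) > 0$. For the normalized version $\tilde L$, the analogous statement holds with kernel $\mathrm{span}(D^{1/2}\mathbf{1})$. So it suffices to show that $L^{(F)}$ inherits this nullity-one property.

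First I would dispose of the boundary cases $\alpha \in \{0,1\}$, where $L^{(F)}$ literally equals $L^{(H)}$ or $L^{(G)}$, and connectivity is immediate by hypothesis. For $\alpha \in (0,1)$, the main step is a one-line PSD argument. Since both $L^{(G)}$ and $L^{(H)}$ are PSD with nonnegative coefficients, $L^{(F)}$ is PSD. Moreover, for any $\mathbf{x} \in \mathbb{R}^n$,
\begin{equation}
  \mathbf{x}^\top L^{(F)} \mathbf{x} \;=\; \alpha\, \mathbf{x}^\top L^{(G)} \mathbf{x} \;+\; (1-\alpha)\, \mathbf{x}^\top L^{(H)} \mathbf{x},
\end{equation}
and since each summand is nonnegative and the coefficients are strictly positive, the left-hand side vanishes iff both Rayleigh quotients vanish. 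Hence
\begin{equation}
  \ker\bigl(L^{(F)}\bigr) \;=\; \ker\bigl(L^{(G)}\bigr) \cap \ker\bigl(L^{(H)}\bigr) \;=\; \mathrm{span}(\mathbf{1}) \cap \mathrm{span}(\mathbf{1}) \;=\; \mathrm{span}(\mathbf{1}),
\end{equation}
so $L^{(F)}$ has nullity exactly one and therefore corresponds to a connected (weighted) graph. One small sanity check to record: $L^{(F)}$ really is the Laplacian of a weighted graph, namely the one whose weighted adjacency matrix is $\alpha A^{(G)} + (1-\alpha) A^{(H)}$, since the Laplacian cone is closed under convex combinations.

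For the normalized Laplacian statement, the kernel argument above does not apply verbatim because $\ker(\tilde L^{(G)}) = \mathrm{span}(D_G^{1/2}\mathbf{1})$ and $\ker(\tilde L^{(H)}) = \mathrm{span}(D_H^{1/2}\mathbf{1})$ may be different lines. However, the paper operates under the commuting hypothesis used in Lemma 4.2, so I would just invoke that lemma to get $\lambda_i(\tilde L^{(F)}) = \alpha\,\lambda_i(\tilde L^{(H)}) + (1-\alpha)\,\lambda_i(\tilde L^{(G)})$, concluding $\lambda_1(\tilde L^{(F)}) = 0$ and $\lambda_2(\tilde L^{(F)}) > 0$ since $\lambda_2$ is strictly positive for each of $G,H$. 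The main obstacle, and the reason I split the argument, is precisely this mismatch: without commutativity one cannot directly intersect the two kernels in the normalized case, and Weyl's inequalities alone only give $\lambda_2(\tilde L^{(F)}) \geq \alpha \lambda_2(\tilde L^{(H)}) + (1-\alpha)\lambda_1(\tilde L^{(G)}) = \alpha\lambda_2(\tilde L^{(H)})$, which is still positive and thus still closes the proof. This bound via Weyl is the cleanest unified argument and I would actually favor presenting it once for both Laplacian variants instead of splitting cases.
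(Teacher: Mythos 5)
Your proof is correct, but it takes a genuinely different route from the paper's. The paper argues combinatorially by contradiction: if $F$ were disconnected there would be a vertex cut $S$ with no crossing edge, and since $A^{(F)}_{ij}=\alpha A^{(G)}_{ij}+(1-\alpha)A^{(H)}_{ij}$ is a nonnegative combination of nonnegative entries, every crossing entry of both $A^{(G)}$ and $A^{(H)}$ would have to vanish, contradicting connectedness of $G$ and $H$. You instead work spectrally: positive semidefiniteness gives $\ker(L^{(F)})=\ker(L^{(G)})\cap\ker(L^{(H)})=\mathrm{span}(\mathbf{1})$ for $\alpha\in(0,1)$, hence nullity one and connectivity. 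The paper's cut argument is more elementary and actually proves something slightly stronger (for $\alpha\in(0,1)$ the edge set of $F$ is the union of those of $G$ and $H$), but as written it silently assumes $A^{(F)}=\alpha A^{(G)}+(1-\alpha)A^{(H)}$, which holds only for combinatorial Laplacians, and its inference $A^{(F)}_{ij}=0\Rightarrow A^{(G)}_{ij}=A^{(H)}_{ij}=0$ breaks down at the endpoints $\alpha\in\{0,1\}$ (where the conclusion is nonetheless trivial). Your treatment handles both of these points more carefully: you dispose of the boundary cases explicitly, and you correctly identify that the normalized case cannot be settled by intersecting kernels (since $\ker(\tilde L^{(G)})=\mathrm{span}(D_G^{1/2}\mathbf{1})$ and $\ker(\tilde L^{(H)})=\mathrm{span}(D_H^{1/2}\mathbf{1})$ may differ), falling back on Weyl's inequality $\lambda_2(\tilde L^{(F)})\geq\alpha\lambda_2(\tilde L^{(H)})>0$, which is the cleanest uniform argument and is consistent with the machinery the paper already develops for the non-commuting case. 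The only caveat worth recording is one the theorem itself inherits: a convex combination of normalized Laplacians is generally not the normalized Laplacian of any graph, so in that case ``corresponds to a connected graph'' must be read as the spectral statement $\lambda_2>0$, which is exactly what your bound delivers.
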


\begin{proof}
Assume for contradiction that the graph $F$ corresponding to $L^{(F)}$ is disconnected. Then there exists a nontrivial partition of the vertex set $V = S \cup (V \setminus S)$, with $S \neq \varnothing$ and $S \neq V$, such that no edge in $F$ connects a node in $S$ to a node in $V \setminus S$.

Let $A^{(G)}$ and $A^{(H)}$ denote the adjacency matrices of $G$ and $H$, respectively. Since $L^{(F)} = D^{(F)} - A^{(F)}$ and 
\[
A^{(F)} = \alpha A^{(G)} + (1-\alpha) A^{(H)},
\]
the absence of any edge crossing the cut in $F$ implies that for all $i \in S$, $j \in V \setminus S$,
\[
A^{(F)}_{ij} = \alpha A^{(G)}_{ij} + (1 - \alpha) A^{(H)}_{ij} = 0.
\]
Because $\alpha, 1 - \alpha \ge 0$, this implies that $A^{(G)}_{ij} = A^{(H)}_{ij} = 0$ for all such $(i,j)$. That is, neither $G$ nor $H$ has any edge crossing the cut between $S$ and $V \setminus S$. But this contradicts the assumption that both $G$ and $H$ are connected.

Therefore, such a cut cannot exist, and $F$ must be connected. Hence, $L^{(F)}$ is the Laplacian of a connected graph.
\end{proof}

\subsection{Spectral Gap Bounds}

\begin{theorem}[Spectral Gap Bounds]
\label{thm:spectral_gap_bounds}
Let $G$ and $G^\varepsilon$ be two connected graphs on the same vertex set, and let $L^{(G)}$ and $L^{(G^\varepsilon)}$ be their (combinatorial or normalized) Laplacians. Define
\[
L^{(G^*)} := \alpha L^{(G^\varepsilon)} + (1-\alpha) L^{(G)}, \quad \alpha \in [0,1],
\]
and let the \emph{spectral gap} be $\Delta(G) := \lambda_2(L^{(G)})$, with analogous definitions for $G^\varepsilon$ and $G^*$. Then,
\[
\min\bigl( \Delta(G),\, \Delta(G^\varepsilon) \bigr)
\;\le\;
\Delta(G^*)
\;\le\;
\max\bigl( \Delta(G),\, \Delta(G^\varepsilon) \bigr).
\]
\end{theorem}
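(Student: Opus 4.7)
The plan is to reduce the statement to a direct consequence of the spectral mixing property (\cref{lemma:eignevalues}) together with the elementary fact that a convex combination of two real numbers lies between their minimum and maximum. I would begin by observing that, since $G$ and $G^\varepsilon$ are both assumed connected, their Laplacians satisfy $\lambda_1(L^{(G)}) = \lambda_1(L^{(G^\varepsilon)}) = 0$, and \cref{theorem:connectivity} additionally yields $\lambda_1(L^{(G^*)}) = 0$. Under the appendix convention $\Delta(G) := \lambda_2(L^{(G)})$, the inequality to prove therefore collapses to a statement about the second-smallest eigenvalues alone, and is consistent with the Section~4 definition $\Delta(G) = \lambda_2 - \lambda_1$ used in \cref{theorem:spectral_gap}.

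Next, invoking the commuting hypothesis carried over from the definition preceding \cref{lemma:eignevalues}, I would apply that lemma at index $i = 2$ to obtain
\[
\lambda_2(L^{(G^*)}) \;=\; \alpha\,\lambda_2(L^{(G^\varepsilon)}) \;+\; (1-\alpha)\,\lambda_2(L^{(G)}) \;=\; \alpha\,\Delta(G^\varepsilon) \;+\; (1-\alpha)\,\Delta(G).
\]
Because $\alpha \in [0,1]$, the right-hand side is a convex combination of two non-negative scalars and hence lies in the closed interval from $\min(\Delta(G),\Delta(G^\varepsilon))$ to $\max(\Delta(G),\Delta(G^\varepsilon))$. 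This delivers the desired two-sided bound in a single line.

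The subtle step --- and the one I expect to require the most care --- is justifying that the index-$2$ eigenvalue of $L^{(G^*)}$ in sorted order really equals the convex combination of the index-$2$ eigenvalues of the two summands. Commutativity on its own only supplies a shared orthonormal eigenbasis; it does not by itself guarantee that the second-smallest eigenvalue of each operator is attained on the same basis vector. The appendix proof of \cref{lemma:eignevalues} tacitly assumes this alignment, and so I would piggy-back on it. A self-contained alternative is Courant-Fischer: for the lower bound, for every unit $x \perp \mathbf{1}$ we have $x^\top L^{(G^*)} x = \alpha\, x^\top L^{(G^\varepsilon)} x + (1-\alpha)\, x^\top L^{(G)} x \ge \alpha\,\Delta(G^\varepsilon) + (1-\alpha)\,\Delta(G) \ge \min(\Delta(G),\Delta(G^\varepsilon))$, which does not require commutativity; for the upper bound, commutativity is the essential ingredient, used to produce a single test vector on which both $L^{(G)}$ and $L^{(G^\varepsilon)}$ simultaneously attain their $\lambda_2$, so that the Rayleigh quotient on that vector is $\le \max(\Delta(G),\Delta(G^\varepsilon))$.
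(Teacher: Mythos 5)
Your overall route differs from the paper's: the paper proves this via Weyl's inequality for sums of symmetric matrices (\cref{thm:weyl}) applied to $A=\alpha L^{(G^\varepsilon)}$ and $B=(1-\alpha)L^{(G)}$, whereas you argue through simultaneous diagonalization (\cref{lemma:eignevalues}) and Courant--Fischer. Your lower bound is sound and in fact cleaner than the paper's: for the combinatorial Laplacian of a connected graph, $\lambda_2(M)=\min_{x\perp\mathbf{1},\,\|x\|=1}x^\top M x$, the Rayleigh quotient of the convex combination splits linearly, and $\lambda_2(L^{(G^*)})\ge\alpha\,\Delta(G^\varepsilon)+(1-\alpha)\,\Delta(G)\ge\min(\Delta(G),\Delta(G^\varepsilon))$ follows with no commutativity at all. (For the normalized Laplacian the common test space $\mathbf{1}^\perp$ must be replaced by the orthogonal complement of the respective kernels, which coincide only when the degree sequences do; this caveat applies equally to the paper's version.)

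The genuine gap is in your upper bound, and you have essentially diagnosed it yourself: commutativity yields a shared eigenbasis but does not align the sorted indices, so there need not exist a single unit vector $v\perp\mathbf{1}$ on which $L^{(G)}$ and $L^{(G^\varepsilon)}$ simultaneously attain their respective $\lambda_2$; your proposed test-vector argument therefore does not go through, and neither does ``applying \cref{lemma:eignevalues} at $i=2$,'' since that identity presupposes exactly the index alignment you flagged as unjustified. This is not a repairable presentation issue. Take $G$ to be the $5$-cycle given by the circulant connection set $\{1,4\}$ and $G^\varepsilon$ the (isomorphic, connected) circulant with set $\{2,3\}$: both Laplacians are diagonalized by the Fourier basis and both have $\Delta\approx 1.382$, but their eigenvalue lists in Fourier-mode order are $(0,1.382,3.618,3.618,1.382)$ and $(0,3.618,1.382,1.382,3.618)$, so at $\alpha=\tfrac12$ every nonzero eigenvalue of $L^{(G^*)}$ equals $2.5>\max(\Delta(G),\Delta(G^\varepsilon))$. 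What survives without extra hypotheses is only the Weyl-type bound $\lambda_2(L^{(G^*)})\le\alpha\,\lambda_2(L^{(G^\varepsilon)})+(1-\alpha)\,\lambda_n(L^{(G)})$ and its counterpart with the roles swapped; note that the paper's own derivation, which writes the Weyl interval with misordered endpoints and then asserts the max/min conclusion, does not close this gap either.
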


\begin{proof}
Since $G$ and $G^\varepsilon$ are connected, their Laplacians satisfy $\lambda_1 = 0 < \lambda_2$, and the same holds for $G^*$ by convexity and connectedness preservation (see \cref{thm:connectedness}).

Let $A := \alpha L^{(G^\varepsilon)}$ and $B := (1-\alpha) L^{(G)}$, so $L^{(G^*)} = A + B$. By Weyl’s inequality for symmetric matrices, we have:
\[
\lambda_2(A + B) \in \big[\, \lambda_1(A) + \lambda_2(B),\; \lambda_2(A) + \lambda_1(B) \,\big].
\]
Since $\lambda_1(L^{(G)}) = \lambda_1(L^{(G^\varepsilon)}) = 0$, this becomes:
\[
\lambda_2(L^{(G^*)}) \in \big[\, \alpha \lambda_2(L^{(G^\varepsilon)}),\; (1-\alpha) \lambda_2(L^{(G)}) \,\big].
\]
Swapping $G$ and $G^\varepsilon$ gives the symmetric interval:
\[
\lambda_2(L^{(G^*)}) \in \left[
  \min\bigl(\alpha \lambda_2(L^{(G^\varepsilon)}),\, (1-\alpha) \lambda_2(L^{(G)})\bigr),\;
  \max\bigl(\alpha \lambda_2(L^{(G^\varepsilon)}),\, (1-\alpha) \lambda_2(L^{(G)})\bigr)
\right].
\]

However, since $\alpha \in [0,1]$, the maximum of $\lambda_2(L^{(G^\varepsilon)})$ and $\lambda_2(L^{(G)})$ always bounds $\lambda_2(L^{(G^*)})$ from above, and the minimum from below. Thus,
\[
\min\bigl( \Delta(G),\, \Delta(G^\varepsilon) \bigr)
\;\le\;
\Delta(G^*)
\;\le\;
\max\bigl( \Delta(G),\, \Delta(G^\varepsilon) \bigr).
\]
\end{proof}

\begin{remark}
For normalized Laplacians, the same bounds apply since they are also symmetric and positive semi-definite, and Weyl’s inequality holds for all real symmetric matrices.
\end{remark}

\subsection{Frobenius Norm Difference Corollary}

\begin{corollary}
\label{cor:frobenius_revised}
Let $L^{(G)}, L^{(G^\varepsilon)}, L^{(G^*)}$ be Laplacians (combinatorial or normalized) of graphs $G$, $G^\varepsilon$, and $G^*$ respectively, where
\[
  L^{(G^*)} 
  \;=\; 
  \alpha \, L^{(G^\varepsilon)}
  \;+\; 
  (1 - \alpha)\, L^{(G)},
  \quad
  \alpha \in [0,1].
\]
Then, the Frobenius norm difference between $G^\varepsilon$ and $G^*$ satisfies
\[
  \bigl\|
    L^{(G^\varepsilon)} 
    \;-\; 
    L^{(G^*)}
  \bigr\|_F
  \;=\;
  (1 - \alpha)\,
  \bigl\|
    L^{(G^\varepsilon)} 
    \;-\; 
    L^{(G)}
  \bigr\|_F.
\]
\end{corollary}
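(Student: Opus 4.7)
The plan is to reduce this to a one-line algebraic identity followed by a single invocation of the absolute homogeneity of the Frobenius norm. Concretely, I would substitute the defining convex combination $L^{(G^*)} = \alpha L^{(G^\varepsilon)} + (1-\alpha) L^{(G)}$ directly into the expression $L^{(G^\varepsilon)} - L^{(G^*)}$, group the terms proportional to $L^{(G^\varepsilon)}$, and obtain
\begin{equation*}
L^{(G^\varepsilon)} - L^{(G^*)} \;=\; L^{(G^\varepsilon)} - \alpha L^{(G^\varepsilon)} - (1-\alpha) L^{(G)} \;=\; (1-\alpha)\bigl(L^{(G^\varepsilon)} - L^{(G)}\bigr).
\end{equation*}
This step is purely arithmetic and uses nothing beyond the distributive law.

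Next, I would take Frobenius norms of both sides and pull the scalar out using the identity $\|cM\|_F = |c|\,\|M\|_F$, which holds for any real scalar $c$ and any matrix $M$. Setting $c = 1-\alpha$ gives $\|L^{(G^\varepsilon)} - L^{(G^*)}\|_F = |1-\alpha|\,\|L^{(G^\varepsilon)} - L^{(G)}\|_F$. The hypothesis $\alpha \in [0,1]$ ensures $|1-\alpha| = 1-\alpha$, yielding the claimed equality.

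There is no genuine obstacle here: unlike \cref{lemma:eignevalues}, the argument requires neither the commutativity of $L^{(G)}$ and $L^{(G^\varepsilon)}$ nor the connectedness used in \cref{theorem:connectivity} or \cref{theorem:spectral_gap}. It is also agnostic to whether the Laplacians are combinatorial or normalized, since the Frobenius norm treats them simply as elements of a real matrix space. The only conceptual point worth emphasizing in the write-up is that, because $L^{(G^*)}$ lies on the straight line segment between $L^{(G)}$ and $L^{(G^\varepsilon)}$ in $\mathbb{R}^{n \times n}$ equipped with the Frobenius metric, its distance to the endpoint $L^{(G^\varepsilon)}$ is exactly the fraction $(1-\alpha)$ of the full segment length. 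This geometric reading aligns precisely with the 1D-plane interpretation given immediately after \cref{corollary:frobenius_norm} in the main text.
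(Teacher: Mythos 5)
Your proposal is correct and follows exactly the paper's own argument: substitute the convex combination, factor out $(1-\alpha)$, and apply absolute homogeneity of the Frobenius norm, with $\alpha\in[0,1]$ giving $|1-\alpha|=1-\alpha$. The only difference is cosmetic — you make the dropping of the absolute value explicit and add a geometric gloss, both of which the paper leaves implicit.
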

\begin{proof}
By linearity, we have
\[
L^{(G^\varepsilon)} - L^{(G^*)} 
= L^{(G^\varepsilon)} - \bigl( \alpha L^{(G^\varepsilon)} + (1 - \alpha) L^{(G)} \bigr) 
= (1 - \alpha)\bigl(L^{(G^\varepsilon)} - L^{(G)}\bigr).
\]
Taking Frobenius norms and using absolute homogeneity gives the result:
\[
\bigl\|L^{(G^\varepsilon)} - L^{(G^*)}\bigr\|_F 
= (1 - \alpha)\, \bigl\|L^{(G^\varepsilon)} - L^{(G)}\bigr\|_F.
\]
\end{proof}


\section{GIST's Overshooting Algorithm}\label{app:architecture}
\cref{fig:architecture} illustrates the architecture of GIST and how it learns to backtrack (see~\cref{alg:graphmorph}) into finding the counterfactuals $G^*$. Notice that GIST, before performing the backtracking mechanism, needs to overshoot the decision boundary of $\Phi$ given an input graph $G$. In this paper, given $G$, we rely on a simple overshooting algorithm which finds $G^\varepsilon$ as in~\cref{eq:overshooting}.%
\begin{equation}\label{eq:overshooting}
\begin{gathered}
    k^* = \min\bigg\{k \in [1,n]\;|\;\Phi(G) \neq \Phi(G_k)\bigg\}\quad\forall G_k \in \unif(\{G\; |\;G \in \mathcal{G}),\\
    G^\varepsilon \coloneq G_{k^*},
\end{gathered}
\end{equation}%
where $\unif(*)$ depicts a shuffling operation from the set given in input, and $\mathcal{G}$ is the set of graphs in the dataset. An interesting overshooting algorithm, which would aid convergence and reduce the estimation errors shown in the main material, is that of choosing $G^\varepsilon$ given $G$ as in~\cref{eq:smart_overshooting}.%
\begin{equation}\label{eq:smart_overshooting}
\begin{gathered}
    k^* = \min\bigg\{k \in [1,n]\;|\;\Phi(G) \neq \Phi(G_k)\;\land\;(1-\alpha)\,\bigl\| L^{(G_k)}-L^{(G)}
  \bigr\|_F\bigg\}\quad\forall G_k \in \unif(\{G\; |\;G \in \mathcal{G}),\\
  G^\varepsilon \coloneq G_{k^*}.
\end{gathered}
\end{equation}
Notice how we condition the selection of $G^\varepsilon$ on its Frobenius norm difference with $G$. This ensures that the chosen pair is structurally similar yet belonging to different classes. In this case, GIST would have more chances of converging to the optimal counterfactual $G^*$ than being agnostic to the choice of $G^\varepsilon$ as in~\cref{eq:overshooting}. We leave these investigations for future work.

\section{Hyperparameter Selection}\label{app:hyperparameters}

In our experimental evaluation, we tested GIST against other learning-based explanation methods from the literature. The competitors include RSGG-CE~\cite{prado2024robust}, CF\textsuperscript{2}~\cite{tancf2}, CLEAR~\cite{ma2022clear}, and CF-GNNExp.~\cite{lucic2022cf}. For each explainer, we used the optimal hyperparameters proposed in their respective literature:

\begin{itemize}[noitemsep,topsep=0pt]
\item For GIST we configured it to run the backtracking process for 50 epochs with a batch size of 16. We chose the number of attention heads to be equal to $2$, the node embedding dimension to $16$. We set $\alpha = 0.9$ to encourage higher validity, which is beneficial for a helpful counterfactual. We train GIST with Adam optimizer with learning rate $10^{-3}$ and a weight decay of $10^{-5}$.

\item For CF\textsuperscript{2}~\cite{tancf2}, we configured: 20 epochs, batch size ratio of 0.2, learning rate ($\text{lr}$) initialized at 0.02, and regularization parameters $\alpha = 0.7$, $\lambda = 20$, and $\gamma = 0.9$.
    
\item CF-GNNExp~\cite{lucic2022cf} utilized: $\alpha = 0.01$, $K = 5$, $\beta = 0.6$, and $\gamma = 0.2$.

\item CLEAR~\cite{ma2022clear} employed: 10 epochs, learning rate ($\text{lr}$) of 0.01, counterfactual loss regularization parameter ($\lambda_{\text{cfe}}$) set to 0.1, trade-off parameter $\alpha = 0.4$, and batch size 32.

\item RSGG-CE~\cite{prado2024robust} was trained for 500 epochs with a GAN configuration: batch size 1 and \textit{TopKPooling} discriminator.
\end{itemize}

Concerning the oracle implementation, we used the following hyperparameters: 50 epochs, batch size 32, and early stopping threshold $10^{-4}$. We trained the model using the RMS Propagation optimizer (learning rate $\text{lr} = 0.01$) with Cross Entropy loss. The architecture consisted of a Graph Convolutional Neural Network with 3 convolutional layers and 1 dense layer, convolutional booster 2, and linear decay factor 1.8.

\section{Dataset description}\label{sec:datasets}

\begin{table*}[!ht]
\centering
\caption{Summary of selected datasets and their characteristics. $\dag$ depicts a real-world dataset and $\ddag$ a synthetic one.}
\label{tab:datasetdescription}
\resizebox{\textwidth}{!}{%
\begin{tabular}{@{}lccccccc@{}}
\toprule
Dataset & Graphs & Classes & Avg. Nodes & Avg. Edges & Node Labels & Edge Labels & Node Attr.\\ 
\midrule
AIDS $\dag$    & 2000  & 2  & 15.69  & 16.20  & \checkmark  & \checkmark  & \checkmark (4)  \\
BAShapes $\ddag$    & 500  & 2  & 57.00  & 82.01  &  – & –  & \checkmark (8)  \\
BBBP $\dag$    & 2039  & 2  & 24.06  & 26.00  & \checkmark  & –  & –      \\
BZR $\dag$ & 405 & 2 & 35.75  & 38.36  & \checkmark  & –  & \checkmark (3)  \\
COLORS-3 $\ddag$ & 10500 & 11 & 61.31  & 91.03  & –  & –  & \checkmark (4)  \\
ENZYMES $\dag$ & 600   & 6  & 32.63  & 62.14  & \checkmark  & –  & \checkmark (18) \\
MSRC21 $\dag$ & 563 & 20 & 77.52  & 198.32  & \checkmark  & –  & \checkmark (3)  \\
PROTEINS $\dag$ & 1113  & 2  & 39.06  & 72.82  & \checkmark  & –  & \checkmark (1)  \\\bottomrule
\end{tabular}%
}
\end{table*}

We conducted our experiments on 8 popular real and synthetic datasets comprising of binary and multi-class graph classification tasks to compare our approach with SoTA GCE methods. \cref{tab:datasetdescription} illustrates the characteristics of the datasets used in this paper.

\begin{itemize}
    \item  \textbf{AIDS} \cite{riesen2008iam} consists of graphs representing molecular compounds. These graphs are derived from the AIDS Antiviral Screen Database of Active Compounds. This data set consists of two classes (active and inactive), which represent molecules with or without activity against HIV. The molecules are converted into graphs in a straightforward manner by representing atoms as nodes and the covalent bonds as edges. Nodes are labeled with the number of the corresponding chemical symbol and edges by the valence of the linkage. There are 2,000 elements in total (1,600 inactive elements and 400 active elements).

    \item \textbf{BAShapes} \cite{ying2019gnnexplainer} is a synthetic dataset consisting of a base graph and motifs connected on the base. The base graph is a Barabasi-Albert (BA) graph and motifs can be either a house-shaped (class 0) or a grid-shape (class 1). Following the generation done in \cite{lucic2022cf}, there are 8 nodes on the base graph with 5 edges connecting them. Each base graph has 7 motives connected to it.

    \item \textbf{BBBP} (Blood-Brain Barrier Penetration) \cite{martins2012bayesian} is a dataset widely used in drug discovery and neurological research to develop machine learning models that predict blood-brain barrier permeability. The blood-brain barrier is a protective membrane that shields the central nervous system by regulating the passage of solutes. Its presence is a critical consideration in drug development, whether for designing molecules that target the central nervous system or for identifying compounds that should be restricted from crossing the barrier. BBBP contains binary labels for 2,053 curated molecules, indicating whether a compound can penetrate the blood-brain barrier. Specifically, 1,570 molecules can penetrate the barrier, while 483 cannot.

    \item \textbf{BZR} \cite{sutherland2003spline} represents a set of 405 ligands for the benzodiazepine receptor.  No differentiation of agonists, antagonists, and inverse agonists is made. In vitro binding affinities as measured by inhibition of [$^3$H] diazepam binding are expressed as IC$_{50}$ values, ranging from 0.34 nM to $>70\mu \text{M}$ (65 compounds have indeterminate values). The authors selected pIC$_{50} = 7.0$ as the threshold for activity by considering a histogram plot of compound counts vs pIC$_{50}$ and the resulting balance of active and inactive compounds.

    \item \textbf{COLORS-3} \cite{knyazev2019} is a synthetic graph benchmark designed to evaluate the interpretability and generalization capabilities of graph-based machine learning models, particularly those leveraging attention mechanisms. Each graph consists of nodes assigned one of three colors, with the task framed as a classification problem: models must predict the count of nodes belonging to a specific target color. For generalization purposes, the dataset includes training graphs of moderate size and test sets containing graphs that are significantly larger and more complex than those seen during training.
      
    \item \textbf{ENZYMES} \cite{borgwardt2005protein} consists of protein graph models. The dataset includes 600 enzymes from the BRENDA database \cite{schomburg2004brenda}, with 100 proteins sampled from each of the six Enzyme Commission (EC) top-level classes. The original objective that the dataset was created for is to accurately predict the enzyme class membership for these proteins.

    \item \textbf{MSRC21}~\cite{neumann2016propagation} is a  state-of-the-art dataset in semantic image processing originally introduced in~\cite{winn2005object}. Each image is represented by a conditional Markov random field graph, as illustrated in~\cref{fig:msrc21}. The nodes of each graph are derived by over-segmenting the images using the quick shift algorithm,\footnote{\url{https://www.vlfeat.org/overview/quickshift.html}.} resulting in one graph among the superpixels of each image. Nodes are connected if the superpixels are adjacent, and each node can further be annotated with a semantic label. Imagining an image retrieval system, where users provide images with semantic information, it is realistic to assume that this information is only available for parts of the images, as it is easier for a human annotator to label a small number of image regions rather than the full image. As the images in the MSRC dataset are fully annotated, the authors derive semantic (ground-truth) node labels by taking the mode ground-truth label of all pixels in the corresponding superpixel. Semantic labels are, for example, building, grass, tree, cow, sky, sheep, boat, face, car, bicycle, and a label void to handle objects that do not fall into one of these classes.

    \item \textbf{PROTEINS} is a dataset created by \citet{borgwardt2005protein}, originally proposed in \cite{dobson2003distinguishing}, comprising 1,178 proteins (59\% enzymes, 41\% non-enzymes). Proteins were selected to ensure low structural similarity. From this set, 1,128 proteins (retaining the original class balance) were retained based on the availability of secondary structure data in the Protein Data Bank (PDB).
    In their original study, Dobson and Doig represented proteins using feature vectors. These vectors encoded various attributes, including the fraction of each amino acid (AA) type among all residues, the fraction of the protein's surface area occupied by each AA, the presence of ligands, the size of the largest surface pocket, and the number of disulfide bonds. But, afterwards Borgwardt modeled proteins as graphs. This dataset is notably challenging due to its strict non-redundancy constraint, emphasizing generalizable functional prediction over sequence or structural homology.

\end{itemize}
\begin{figure}[!t]
    \centering
    \includegraphics[width=\linewidth]{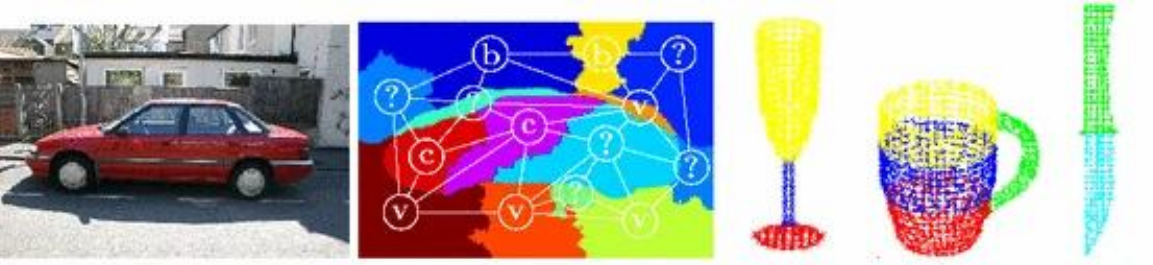}
    \caption{Taken from \cite{neumann2016propagation}. The left-most RGB image  is represented by a graph of superpixels (middle) with semantic labels b \textit{building}, c \textit{car}, v \textit{void}, and ? \textit{unlabeled}. (right) Point clouds of household objects represented by labeled 4-nearest-neighbor graphs with part labels top (yellow), middle (blue), bottom (red), usable-area (cyan), and handle (green). Edge colors are derived from the adjacent nodes.}
    \label{fig:msrc21}
\end{figure}

\section{Evaluation Metrics}\label{app:eval_metrics}

We adopt the evaluation framework proposed by \cite{prado2022survey}, employing a diverse set of metrics for a comprehensive and fair evaluation. Our evaluation criteria include \textit{Runtime}, \textit{Oracle Calls} \cite{abrate2021counterfactual}, \textit{Validity} \cite{guidotti2022counterfactual,prado2023developing}, \textit{Sparsity} \cite{prado2023developing,yuan2022explainability}, \textit{Fidelity} \cite{yuan2022explainability}, Oracle Accuracy, and \textit{Graph Edit Distance} (GED) \cite{prado2022survey}. Recall that $\Phi: \mathcal{G} \rightarrow Y$ is an oracle.

\noindent\textbf{Runtime} measures the time the explainer takes to generate a counterfactual example. This metric offers an efficient means of evaluating the explainer's performance, encompassing the execution time of the oracle. To ensure fairness, runtime evaluations must be conducted in isolation on the same hardware and software platform.

\noindent\textbf{Oracle Calls} \cite{abrate2021counterfactual} quantifies the number of times the explainer queries the oracle to produce a counterfactual. This metric, akin to runtime, assesses the computational complexity of the explainer, especially in distributed systems. It avoids considering latency and throughput, which are external factors in the measurement.

\noindent\textbf{Oracle Accuracy} evaluates the reliability of the oracle in predicting outcomes. The accuracy of the oracle significantly impacts the quality of explanations, as the explainer aims to elucidate the model's behaviour. Mathematically, for a given input $G$ and true label $y$, accuracy is defined as $\chi(G) = \mathbb{I}[\Phi(G) = y]$.

\noindent\textbf{Validity} \cite{guidotti2022counterfactual,prado2023developing} assesses whether the explainer produces a valid counterfactual explanation, indicating a different classification from the original instance. Formally, for the original instance $G$, the counterfactual $G'$, and oracle $\Phi$, validity is an indicator function $\Omega(G,G') = \mathbb{I}[\Phi(G) \neq \Phi(G')]$.

\noindent\textbf{Sparsity} \cite{yuan2022explainability} gauges the similarity between the input instance and its counterfactual concerning input attributes. If $\mathcal{S}(G,G') \in \mathbb{R}_0^1$ is the similarity between $G$ and $G'$, we adapt the sparsity definition to $\frac{1-\mathcal{S}(G,G')}{|G|}$ for graphs.

\noindent\textbf{Fidelity} \cite{yuan2022explainability} measures the faithfulness of explanations to the oracle, considering validity. Given the input $G$, true label $y$, and counterfactual $G'$, fidelity is defined as $\Psi(G,G') = \chi(G) - \mathbb{I}[\Phi(G') = y]$. Fidelity values can be $\mathbf{1}$ for the correct explainer and oracle, or $\mathbf{0}$ and $\mathbf{-1}$ indicating issues with either the explainer or the oracle.

\noindent\textbf{Graph Edit Distance (GED)} quantifies the structural distance between the original graph $G$ and its counterfactual $G'$. The distance is evaluated based on a set of actions $\{p_1,p_2,\dots,p_n\} \in \mathcal{P}(G,G')$, representing a path to transform $G$ into $G'$. Each action $p_i$ is associated with a $\omega(p_i)$ cost. GED is computed as 
$$\min_{\{p_1,\dots,p_n\} \in \mathcal{P}(G,G')} \sum_{i = 1}^n \omega(p_i)$$
Preference is given to counterfactuals closer to the original instance $G$, as they provide shorter action paths on $G$ to change the oracle's output. GED offers a global measure and can be complemented by a relative metric like sparsity to assess the explainer's performance across instances.

\section{Time Complexity Analysis for GIST}\label{app:time_complexity}

In this section, we elaborate on the computational complexity of the calculations presented in \cref{sec:preliminaries}. We identify four key components that require explicit complexity analysis: the Laplacian matrix \( L(G) \), its normalized form \( \tilde{L}(G) \), and their respective computations. Notice that we are not analyzing the forward learning pass of the backward mechanism illustrated in~\cref{fig:architecture}. The time complexity of the forward learning pass is dominated by the calculation of the following matrices. 

Given an undirected weighted graph \( G \) with \( n \) nodes and \( m \) edges, the time complexities of the essential operations are as follows:

\noindent\textbf{Degree Matrix Computation \( \bm{D} \).}
The degree matrix \( D \) is a diagonal matrix where each diagonal element \( D_{ii} \) represents the degree of node \( i \). The degree is computed as the sum of the weights of the edges connected to node \( i \), which corresponds to the row or column sum in the adjacency matrix \( A \).

For dense graphs, where most nodes are highly connected, the adjacency matrix \( A \) contains a significant number of non-zero entries. Computing the degree matrix by summing all entries in \( A \) requires \( O(n^2) \) time. However, for sparse graphs where \( m \ll n^2 \), only the non-zero entries (corresponding to edges) are considered, reducing the time complexity to \( O(m) \). Therefore, the complexity of computing the degree matrix is:
\[
O(n^2) \quad \text{for dense graphs,} \quad O(m) \quad \text{for sparse graphs.}
\]

\noindent\textbf{Laplacian Matrix Formation \( \bm{L(G) = D - A} \).}
The Laplacian matrix \( L(G) \) is computed by subtracting the adjacency matrix \( A \) from the degree matrix \( D \). Both matrices have dimensions \( n \times n \). In dense graphs, element-wise subtraction takes \( O(n^2) \) time, since every entry must be processed. However, in sparse graphs, the adjacency matrix \( A \) contains only \( m \) non-zero entries, and \( D \) is diagonal. Consequently, the subtraction operation can be performed in \( O(n + m) \) time. The resulting complexity is:
\[
O(n^2) \quad \text{for dense graphs,} \quad O(n + m) \quad \text{for sparse graphs.}
\]

\noindent\textbf{Eigenvalue Decomposition.}
Eigenvalue decomposition is crucial for analyzing the spectral properties of the Laplacian matrix. The standard approach, such as QR decomposition, has a time complexity of \( O(n^3) \) for dense graphs.

For large sparse graphs, computing all eigenvalues directly is impractical. Instead, iterative methods like the Lanczos algorithm are employed, which approximate the largest (or smallest) eigenvalues efficiently. The Lanczos algorithm operates in \( O(n + m) \) time per iteration without reorthogonalization, making it a practical alternative for sparse graphs. Thus, the time complexity of eigenvalue decomposition is:
\[
O(n^3) \quad \text{for dense graphs,} \quad O(k(n + m)) \quad \text{for sparse graphs, where k denotes the iterations/eigenvalues.}
\]

\noindent\textbf{Normalized Laplacian Computation \( \bm{\tilde{L}(G) = D^{-1/2} L(G) D^{-1/2} } \).}
The normalized Laplacian \( \tilde{L}(G) \) is obtained by applying the diagonal matrix \( D^{-1/2} \) to both sides of the Laplacian matrix. Inverting the diagonal elements of \( D \) to compute \( D^{-1/2} \) takes \( O(n) \) time, as it involves element-wise inversion.

For the matrix multiplication involved in forming \( \tilde{L}(G) \), we need to multiply first \( D^{-1/2} \) with \( L(G) \), which has a complexity of \( O(n^2) \), because \( D^{-1/2} \) is a diagonal matrix.\footnote{Notice that if $D$ were not a diagonal matrix, the matrix multiplication would have a complexity of $O(n^3)$.} Then, there is another multiplication step between \( D^{-1/2} L(G)\) and \( D^{-1/2}\), which also has a complexity of \( O(n^2) \). Thus, the overall time complexity is \( O(n^2) \) in dense graphs. However, for sparse graphs, the multiplication can be optimized to \( O(m) \), since most elements in \( L(G) \) are zero, allowing efficient computation. Thus, the time complexity is:
\[
O(n^2) \quad \text{for dense graphs,} \quad O(m) \quad \text{for sparse graphs.}
\]

This analysis demonstrates that our method does not introduce significant computational bottlenecks. As shown in \cref{tab:complexity}, utilizing sparse representations significantly improves computational efficiency. For large sparse graphs, iterative methods for eigenvalue decomposition further reduce complexity to \( O(n + m) \) per iteration, making the calculations feasible.

\begin{table}[!h]
\centering
\caption{Overall complexity considerations for dense and sparse graphs.}
\label{tab:complexity}
\begin{tabular}{@{}lcc@{}}
\toprule
\textbf{Computation}                  & \textbf{Dense Graphs} & \textbf{Sparse Graphs} \\ \midrule
\( D \)      & \( O(n^2) \)          & \( O(m) \)             \\
\( L(G) = D - A \)     & \( O(n^2) \)          & \( O(n + m) \)             \\
Eigenvalue Decomposition       & \( O(n^3) \)          & \( O(k(n + m)) \) (for k eigenvalues) \\
\( {\tilde{L}(G) = D^{-1/2} L(G) D^{-1/2} }\) & \( O(n^2) \)          & \( O(m) \)             \\
\bottomrule
\end{tabular}
\end{table}

\begin{figure} 
    \centering
    \includegraphics[width=.8\linewidth]{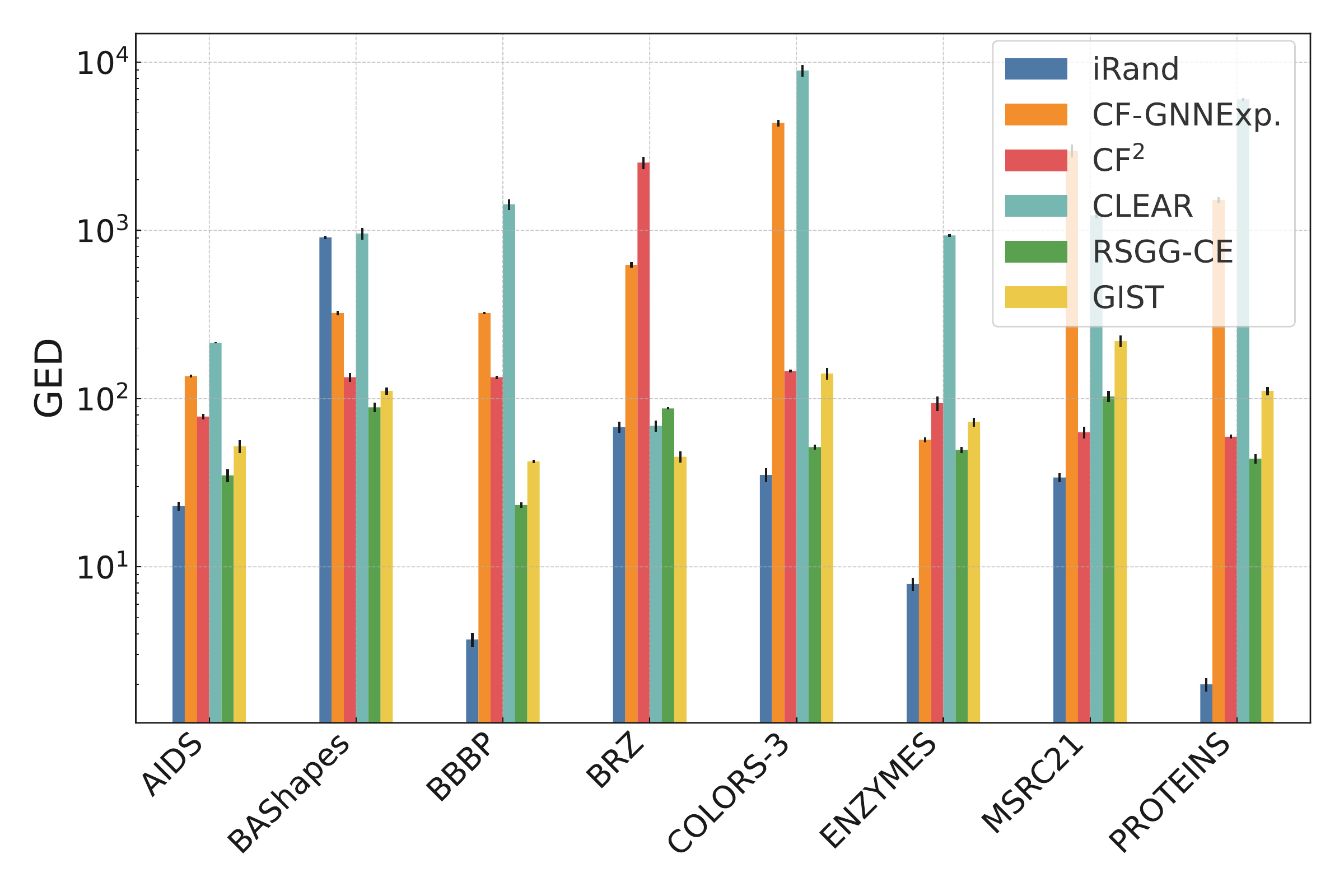}
    \caption{\textbf{GIST is the most effective and reliable explainer (in terms of validity), reporting low input structural changes.} Comparison of GED across different explainers for generating counterfactuals on various datasets. GIST consistently achieves the best balance between validity and structural proximity, with significantly lower GED than other methods w.r.t. the baseline iRand.}
    \label{fig:ged}
\end{figure}

\section{More Experiments}\label{app:experiments}

\subsection{GIST vs. SoTA in terms of GED}\label{app:ged}

\paragraph{GIST maintains reasonable Graph Edit Distance (GED) over the low-validity iRand baseline with 8.9\% increase against 2.1\% reported by RSGG-CE.} \cref{fig:ged} illustrates the structural proximity in counterfactual generation, as measured by the GED. We invite the reader to assess the trade-off between validity and fidelity, shown in~\cref{tab:validity} and~\cref{tab:fidelity}, and the GED. While RSGG-CE demonstrates lower GED in many cases (indicating smaller structural changes), it often sacrifices validity, which undermines its practical effectiveness in generating meaningful counterfactuals. Contrarily, GIST prioritizes generating valid counterfactuals, even at the cost of slightly higher GED. This approach aligns with the goal of counterfactual generation -- providing actionable, interpretable, and valid examples -- since low GED alone is insufficient if the counterfactuals are not valid. Here, GIST strikes a crucial balance by producing counterfactuals that remain meaningful and interpretable while requiring reasonable structural changes. This makes it a more dependable choice for real-world applications where validity is essential. For instance, in datasets like BZR and MSCR21 -- where GIST performs comparably to CF-GNNExp in terms of validity -- GIST achieves a GED that is approximately two orders of magnitude lower than CF-GNNExp. Overall, GIST reports an increase of 8.9\% over iRand on all datasets; RSGG-CE 2.1\%, CLEAR 776.2\%, CF$^\text{2}$ 180.8\%, and CF-GNNExp. 366.7\%. This highlights GIST's effectiveness as a superior explainer that optimizes both validity and structural proximity.

\subsection{Full Panorama of SoTA Performances}\label{app:panorama}
We assess the performances of GIST and the SoTA methods according to all the metrics described in~\cref{app:eval_metrics}.~\cref{tab:aids}  --  \cref{tab:proteins} illustrate these performances on the test-set for 5-cross validations on all the datasets.  Notice how iRand and CF$^2$ on BAShapes (see~\cref{tab:bashapes}) fail to produce valid counterfactuals (i.e., validity at 0); therefore, it is useless to measure the GED and sparsity of the produced candidates  --  hence, we report $\infty$.


\begin{figure}[!t]
    \begin{minipage}{0.475\textwidth}
        \flushleft
        \captionof{table}{Average test-set performances on AIDS for 5-cross validations. The accuracy for the used GCN oracle in the test set is $99.4\%$.}
        \label{tab:aids}
        \resizebox{\textwidth}{!}{%
            \begin{tabular}{@{}lccccc@{}}
                \toprule
                 & GED $\downarrow$ & Oracle Calls $\downarrow$ & Validity $\uparrow$ & Sparsity $\downarrow$ & Fidelity $\uparrow$ \\ \midrule
                 iRand & $\bm{0.23 \times 10^1}$ & \underline{$1.10 \times 10^1$} & 0.013 & $\bm{0.001}$ & 0.013\\\midrule
                  CF-GNNExp. &  $1.36 \times 10^2$ & $\bm{1.00 \times 10^0}$ & \underline{$0.936$} & $3.454$ & \underline{$0.924$} \\
                CF$^2$       & $7.82\times 10^1$  &  --                   & 0.019 & 3.757 & 0.015 \\
                CLEAR        & $2.15 \times 10^2$ &  --                   & 0.037 & 27.35 & 0.037 \\
                RSGG-CE      & \underline{$3.49 \times 10^1$}  & $1.20 \times 10^2$ & 0.128 & \underline{$0.059$} & 0.124 \\
                GIST & $5.20 \times 10^1$ & $7.92 \times 10^0$ & $\bm{0.940}$ & 2.072 & $\bm{0.928}$ \\ \bottomrule
            \end{tabular}%
        }
    \end{minipage}%
    \hfill
    \begin{minipage}{0.475\textwidth}
        \captionof{table}{Average test-set performance on BAShapes for 5-cross validations. The accuracy for the used GCN oracle in the test set is 99.9\%.}
        \label{tab:bashapes}
        \resizebox{\textwidth}{!}{%
            \begin{tabular}{@{}lccccc@{}}
                \toprule
                           & GED $\downarrow$               & Oracle Class $\downarrow$      & Validity $\uparrow$ & Sparsity $\downarrow$ & Fidelity $\uparrow$ \\ \midrule
                           iRand      & $\infty$                       & $9.95 \times 10^1$             & 0.000               & $\infty$              & 0.000               \\
                \midrule
                CF-GNNExp. & $9.09 \times 10^2$             & $\bm{0.20 \times 10^1}$        & 0.516               & 11.09                 & \underline{0.484}   \\
                CF$^2$     & $\infty$                       & $-$                            & 0.000               & $\infty$              & 0.000               \\
                CLEAR      & $9.58 \times 10^2$             & $-$                            & \underline{0.908}   & 6.365                 & 0.060               \\
                RSGG-CE    & $\bm{8.88 \times 10^1}$        & $1.03 \times 10^2$             & $\bm{1.000}$        & $\bm{0.575}$          & $\bm{0.968}$        \\ 
                GIST       & \underline{$1.11 \times 10^2$} & \underline{$0.58 \times 10^1$} & $\bm{1.000}$        & \underline{0.816}     & $\bm{0.968}$        \\ \bottomrule
            \end{tabular}%
        }
    \end{minipage}
\end{figure}

\begin{figure}[!t]
    \begin{minipage}{0.475\textwidth}
        \flushleft
        \captionof{table}{Average test-set performances on BBBP for 5-cross validations. The accuracy for the used GCN oracle in the test set is $92.2\%$.}
        \label{tab:bbbp}
        \resizebox{\textwidth}{!}{%
            \begin{tabular}{@{}lccccc@{}}
                \toprule
                 & GED $\downarrow$ & Oracle Calls $\downarrow$ & Validity $\uparrow$ & Sparsity $\downarrow$ & Fidelity $\uparrow$\\ \midrule
                 iRand & $\bm{0.37 \times 10^1}$ & \underline{$1.42 \times 10^1$} & 0.151 & $\bm{0.008}$ & 0.177\\\midrule
                 CF-GNNExp. &  $3.23 \times 10^2$ & \bm{$1.00 \times 10^0$} & $0.931$ & $5.227$ & \underline{$0.784$} \\
                 CF$^2$ & $1.34 \times 10^2$ & $-$ & $0.208$ & $3.350$ & $0.178$  \\
                  CLEAR & $1.43 \times 10^3$ & $-$ & $0.267$ & $11.27$ & $0.164$  \\
                 RSGG-CE & \underline{$2.33 \times 10^1$} & $2.34 \times 10^2$ & $0.404$ & \underline{$0.167$} & $0.286$ \\
                GIST         & $4.24 \times 10^1$ & $8.03 \times 10^1$ & $\mathbf{0.956}$ & $0.810$ & $\mathbf{0.809}$  \\\bottomrule
            \end{tabular}%
        }
    \end{minipage}%
    \hfill
    \begin{minipage}{0.475\textwidth}
        \captionof{table}{Average test-set performances on BZR for 5-cross validations. The accuracy for the used GCN oracle in the test set is $96.1\%$.}
        \label{tab:bzr}
        \resizebox{\textwidth}{!}{%
            \begin{tabular}{@{}lccccc@{}}
                \toprule
                          & GED $\downarrow$ & Oracle Calls $\downarrow$ & Validity $\uparrow$ & Sparsity $\downarrow$ & Fidelity $\uparrow$ \\ \midrule
                 iRand & \underline{$6.78 \times 10^1$} & $2.93 \times 10^1$ & 0.332 & $\bm{0.029}$ & 0.146 \\ \midrule
                 CF-GNNExp. & $6.23 \times 10^2$ & $\bm{0.10 \times 10^1}$ & $\bm{0.810}$ & 7.936 & $\bm{0.741}$ \\
                CF$^2$ & $2.53 \times 10^3$ &  --  & 0.185 & \underline{0.341} & 0.176 \\
                CLEAR & $6.90 \times 10^1$ &  --  & 0.176 & 6.320 & 0.127 \\
                RSGG-CE & $8.75 \times 10^1$ & $2.48 \times 10^2$ & \underline{0.732} & 0.922 & \underline{0.683} \\
                GIST & $\bm{4.51 \times 10^1}$ & \underline{$0.74 \times 10^1$} & $\bm{0.810}$ & 0.634 & $\bm{0.741}$ \\
                 \bottomrule
            \end{tabular}%
        }
    \end{minipage}
\end{figure}

\begin{figure}[!t]
    \begin{minipage}{0.475\textwidth}
        \flushleft
        \captionof{table}{Average test-set performances on COLORS-3 for 5-cross validations. The accuracy for the used GCN oracle in the test set is $27.7\%$.}
        \label{tab:colors}
        \resizebox{\textwidth}{!}{%
            \begin{tabular}{@{}lccccc@{}}
                \toprule
                 & GED $\downarrow$ & Oracle Calls $\downarrow$ & Validity $\uparrow$ & Sparsity $\downarrow$ & Fidelity $\uparrow$\\ \midrule
                 iRand & $\bm{3.52 \times 10^1}$ & $1.03 \times 10^2$ & 0.392 & $\bm{0.043}$ & -0.007 \\ \midrule
                CF-GNNExp. & $4.35 \times 10^3$ & $\bm{0.10 \times 10^1}$ & 0.736 & 12.16 & 0.091\\
                CF$^2$ & $1.46 \times 10^2$ &  --  & 0.676 & 3.915 & 0.065 \\
                CLEAR & $8.93 \times 10^3$ &  --  & $0.217$ & $64.33$ & $0.033$ \\
                RSGG-CE & \underline{$5.15 \times 10^1$} & $3.70 \times 10^1$ & $\bm{0.884}$ & \underline{0.291} & \underline{0.147} \\
                GIST & $1.41\times 10^2$ & \underline{$0.43 \times 10^1$} & \textbf{0.884} & 1.758 & $\bm{0.202}$ \\ \bottomrule
            \end{tabular}%
        }
    \end{minipage}%
    \hfill
    \begin{minipage}{0.475\textwidth}
        \captionof{table}{Average test-set performances on ENZYMES for 5-cross validations. The accuracy for the used GCN oracle in the test set is $33.3\%$.}
        \label{tab:enzymes}
        \resizebox{\textwidth}{!}{%
            \begin{tabular}{@{}lccccc@{}}
                \toprule
                          & GED $\downarrow$ & Oracle Calls $\downarrow$ & Validity $\uparrow$ & Sparsity $\downarrow$ & Fidelity $\uparrow$ \\ \midrule
                 iRand & $\bm{0.79 \times 10^1}$ & $2.84 \times 10^1$ & 0.138 & $\bm{0.007}$ & 0.004\\\midrule
                CF-GNNExp. & $5.69 \times 10^2$ & $\bm{0.10 \times 10^1}$ & \underline{0.910} & 4.947 & \underline{0.077} \\
                CF$^2$ & $9.38 \times 10^1$ &  --  & 0.437 & 1.524 & 0.017 \\
                CLEAR & $9.34 \times 10^2$ &  --  & 0.370 & 26.23 & 0.040 \\
                RSGG-CE & \underline{$4.96 \times 10^1$} & $2.57 \times 10^2$ & 0.447 & \underline{0.228} & 0.050 \\
                GIST & $7.27 \times 10^1$ & \underline{$0.44 \times 10^1$} & $\bm{0.970}$ & ${0.957}$ & $\bm{0.203}$ \\ \bottomrule
            \end{tabular}%
        }
    \end{minipage}
\end{figure}

\begin{figure}[!t]
    \begin{minipage}{0.475\textwidth}
        \flushleft
        \captionof{table}{Average test-set performances on MSRC21 for 5-cross validations. The accuracy for the used GCN oracle in the test set is $91.2\%$.}
        \label{tab:msrc21}
        \resizebox{\textwidth}{!}{%
            \begin{tabular}{@{}lccccc@{}}
                \toprule
                 & GED $\downarrow$ & Oracle Calls $\downarrow$ & Validity $\uparrow$ & Sparsity $\downarrow$ & Fidelity $\uparrow$\\ \midrule
                 iRand       & $\bm{3.40 \times 10^1}$ & $1.88 \times 10^2$        & 0.035               & $\bm{0.004}$                 & -0.035              \\ \midrule
                CF-GNNExp. & $2.98 \times 10^3$ & $\bm{0.20 \times 10^{1}}$     & $\bm{0.965}$      & 10.47                 & 0.825               \\
                CF$^2$      & \underline{$6.30 \times 10^1$} & $-$     & 0.018               & \underline{0.230}                 & -0.018              \\
                CLEAR & $1.23 \times 10^3$ & -            & \underline{0.933}    & 4.296    & \underline{0.855}  \\
                RSGG-CE     & $1.03 \times 10^2$ & $4.78 \times 10^1$        & {0.912}         & 0.329                 & 0.807               \\
                GIST        & $2.20 \times 10^2$ & \underline{$0.52 \times 10^{1}$}      & $\bm{0.965}$      & 0.774                 & $\bm{0.860}$               \\\bottomrule
            \end{tabular}%
        }
    \end{minipage}%
    \hfill
    \begin{minipage}{0.475\textwidth}
        \captionof{table}{Average test-set performances on PROTEINS for 5-cross validations. The accuracy for the used GCN oracle in the test set is $71.4\%$.}
        \label{tab:proteins}
        \resizebox{\textwidth}{!}{%
            \begin{tabular}{@{}lccccc@{}}
                \toprule
                          & GED $\downarrow$ & Oracle Calls $\downarrow$ & Validity $\uparrow$ & Sparsity $\downarrow$ & Fidelity $\uparrow$ \\ \midrule
                 iRand & $\bm{0.20 \times 10^1}$ & $5.43 \times 10^2$ & 0.018 & $\bm{0.001}$ & -0.009 \\\midrule
                 CF-GNNExp. & $1.52 \times 10^3$ & $\bm{0.10 \times 10^1}$ & $0.378$ & $6.273$ & \underline{$0.202$} \\
                CF$^2$ & $5.94 \times 10^2$ & $-$ & $0.039$ & 11.00 & $0.018$ \\
                CLEAR & $6.02 \times 10^3$ & $-$ & \underline{$0.563$} & $703.8$ & $0.051$ \\
                RSGG-CE & \underline{$4.39 \times 10^1$} & $3.07 \times 10^2$ & $0.237$ & \underline{$0.141$} & $0.133$ \\
                GIST & $1.11 \times 10^2$ & \underline{$0.54 \times 10^1$} & $\bm{0.791}$ & $1.463$ & $\bm{0.425}$ \\ \bottomrule
            \end{tabular}%
        }
    \end{minipage}
\end{figure}

\subsection{Spectra Combination $\bm{L(G^*) = \alpha L(G^\varepsilon) + (1-\alpha)L(G)}$}

\subsubsection{Spectra combination with varying $\bm{\alpha}$}
\begin{figure}[!h]
    \centering
    \includegraphics[width=\textwidth]{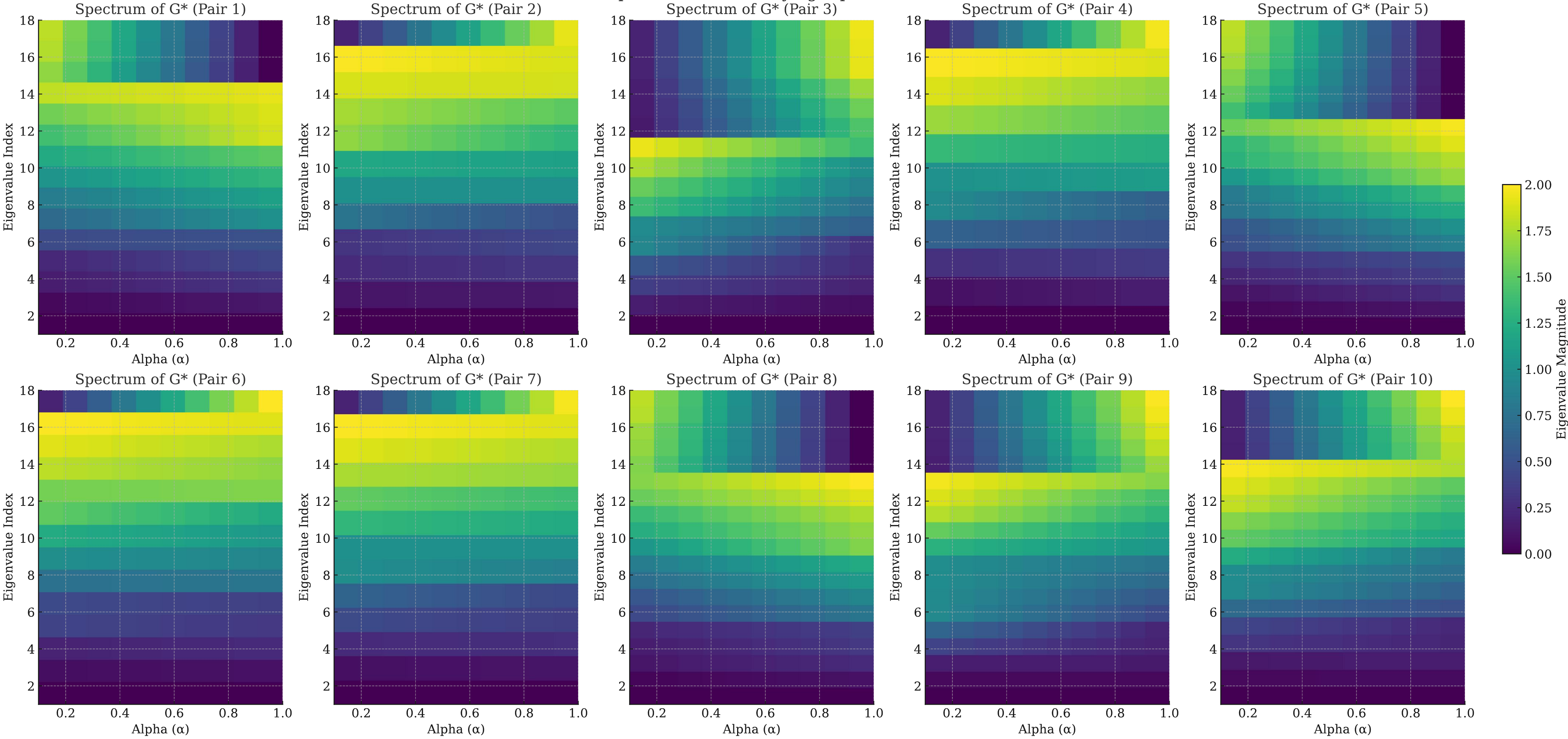}
    \caption{Heatmaps showing the spectra of $G^*$, obtained as a convex combination of the spectra of $G$ (tree) and $G^\varepsilon$ (cyclic graph) for 10 different pairs of $G$ and $G^\varepsilon$. The interpolation parameter $\alpha$ varies from 0.1 to 1, transitioning from the eigenvalues of $G$ ($\alpha=0$) to those of $G^\varepsilon$ ($\alpha=1$). Each plot demonstrates the smooth spectral transition governed by the convex combination.}
    \label{fig:cf_spectra_with_alphas}
\end{figure}

The convex combination theorem for graph Laplacians provides a foundation for constructing intermediate graph structures by interpolating between the spectra of two graphs. Let $L(G)$ and $L(G^\varepsilon)$ denote the normalized Laplacians of graphs $G$ (e.g., a tree) and $G^\varepsilon$ (e.g., a cyclic graph), respectively. A convex combination of these Laplacians is defined in~\cref{eq:convex_combo}. This equation guarantees that the spectrum of $L(G^*)$ lies within the convex hull formed by the spectra of $L(G)$ and $L(G^\varepsilon)$. As $\alpha$ varies, the spectrum of $G^*$ transitions smoothly between the spectral properties of $G$ ($\alpha = 0$) and $G^\varepsilon$ ($\alpha = 1$).

\cref{fig:cf_spectra_with_alphas} illustrates the spectra of $G^*$ for 10 different pairs of $G$ (trees) and $G^\varepsilon$ (cyclic graphs). Each heatmap corresponds to a unique pair, where the x-axis represents the interpolation parameter $\alpha$, ranging from 0.1 to 1. The y-axis indicates the eigenvalue index, while the color intensity reflects the magnitude of the eigenvalues. The leftmost region of each heatmap aligns with the spectrum of $G$ ($\alpha = 0$), and the rightmost region aligns with the spectrum of $G^\varepsilon$ ($\alpha = 1$). The gradual change in spectral intensities demonstrates the blending of structural properties as $\alpha$ increases. This figure underscores the potential of spectral interpolation to model continuous transitions between distinct graph structures, providing a tool for exploring intermediate configurations and understanding the impact of structural variations. Moreover, notice how with every $\alpha\in[0,1]$, the connectivity property is maintained. In other words, because $G$ and $G^\varepsilon$ are connected graphs (either trees or cyclic graphs), then $G^*$ is also connected with $\lambda_1(L^{(G^*)}) = 0$ illustrated with dark blue.


\begin{figure}[!t]
\begin{minipage}[h]{0.47\linewidth}
\begin{center}
\includegraphics[width=1\linewidth]{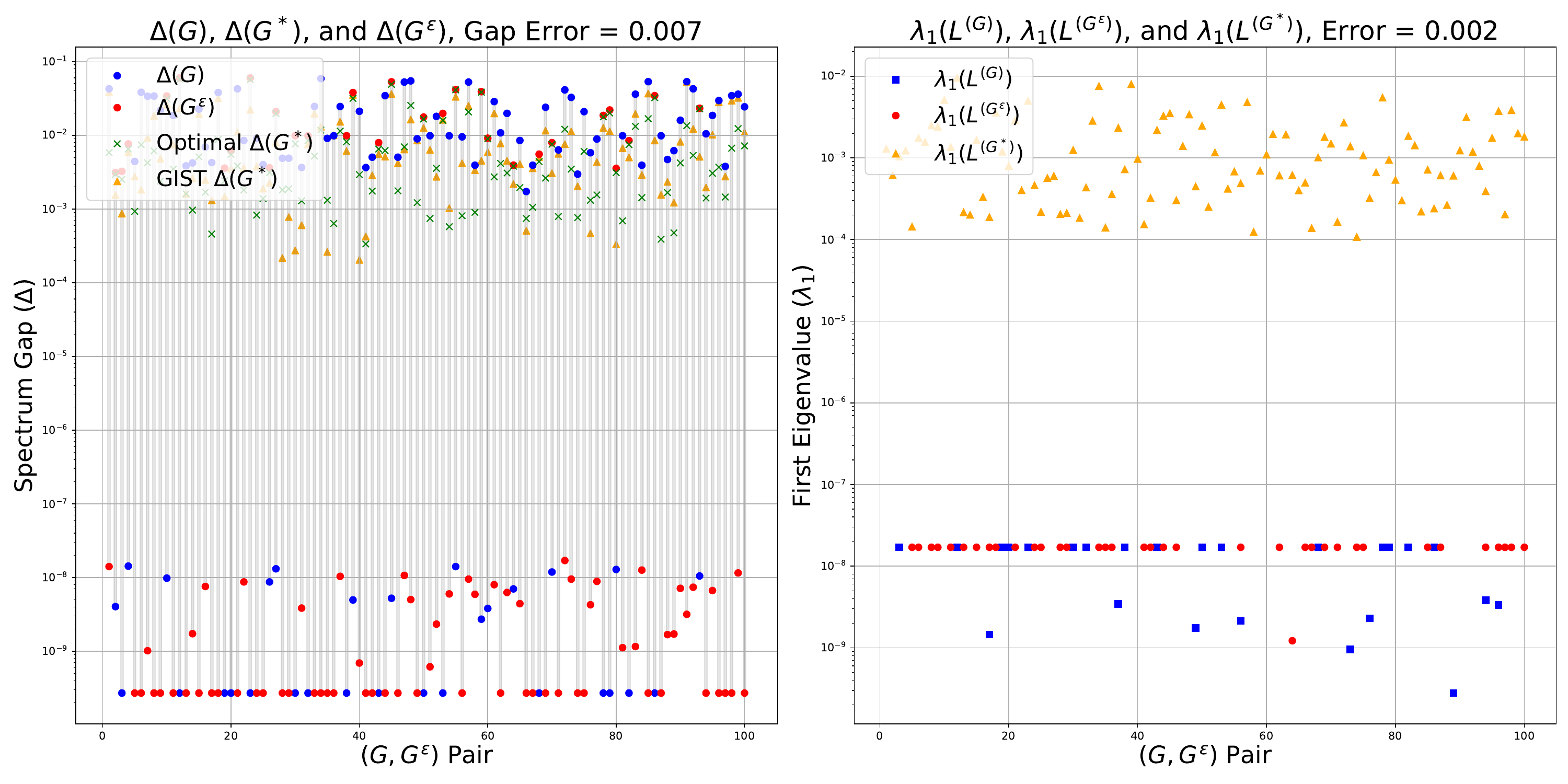} 
\end{center} 
\end{minipage}
\hfill
\begin{minipage}[h]{0.47\linewidth}
\begin{center}
\includegraphics[width=1\linewidth]{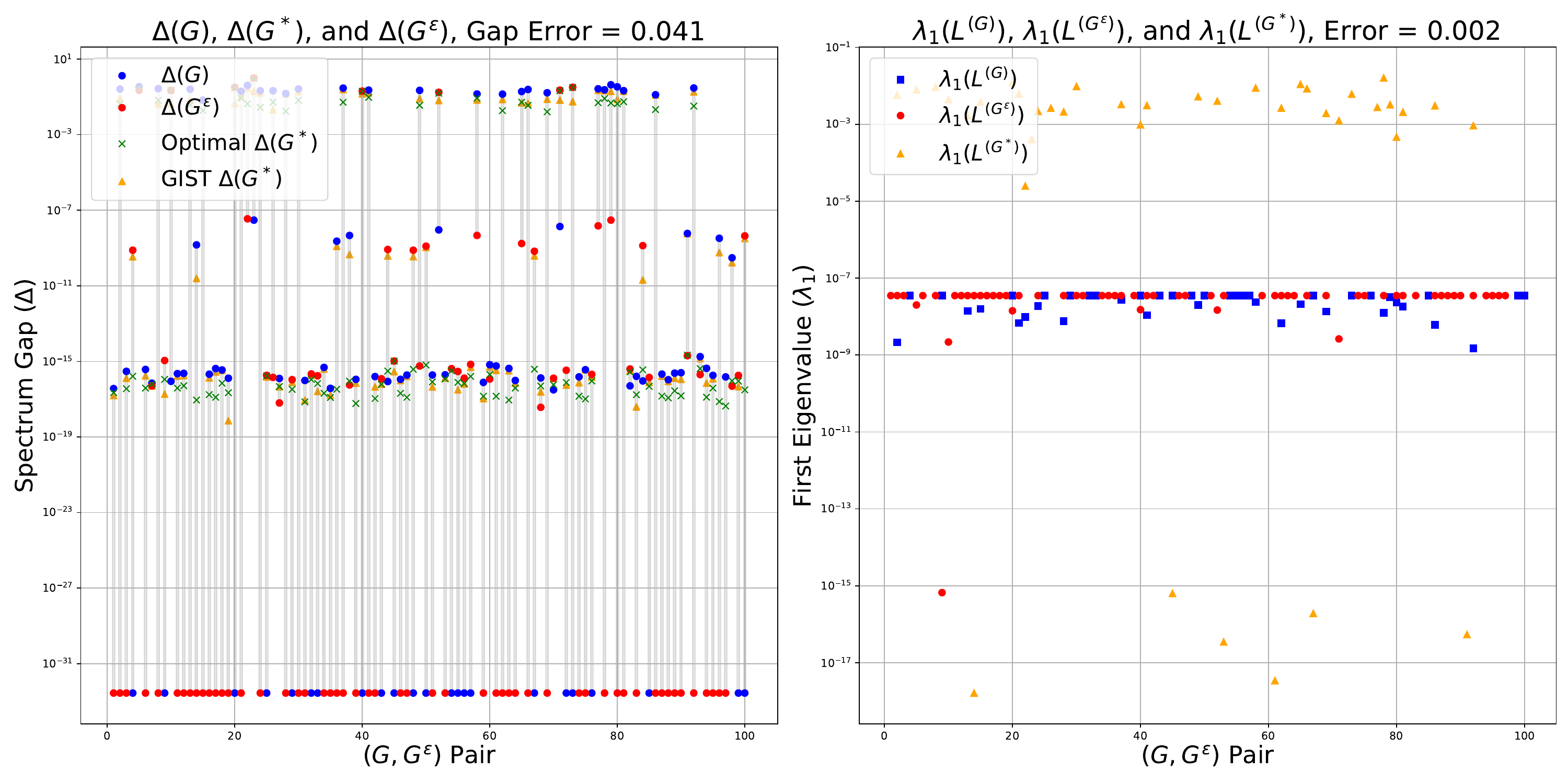} 
\end{center}
\end{minipage}
\vfill
\begin{minipage}[h]{0.47\linewidth}
\begin{center}
\includegraphics[width=1\linewidth]{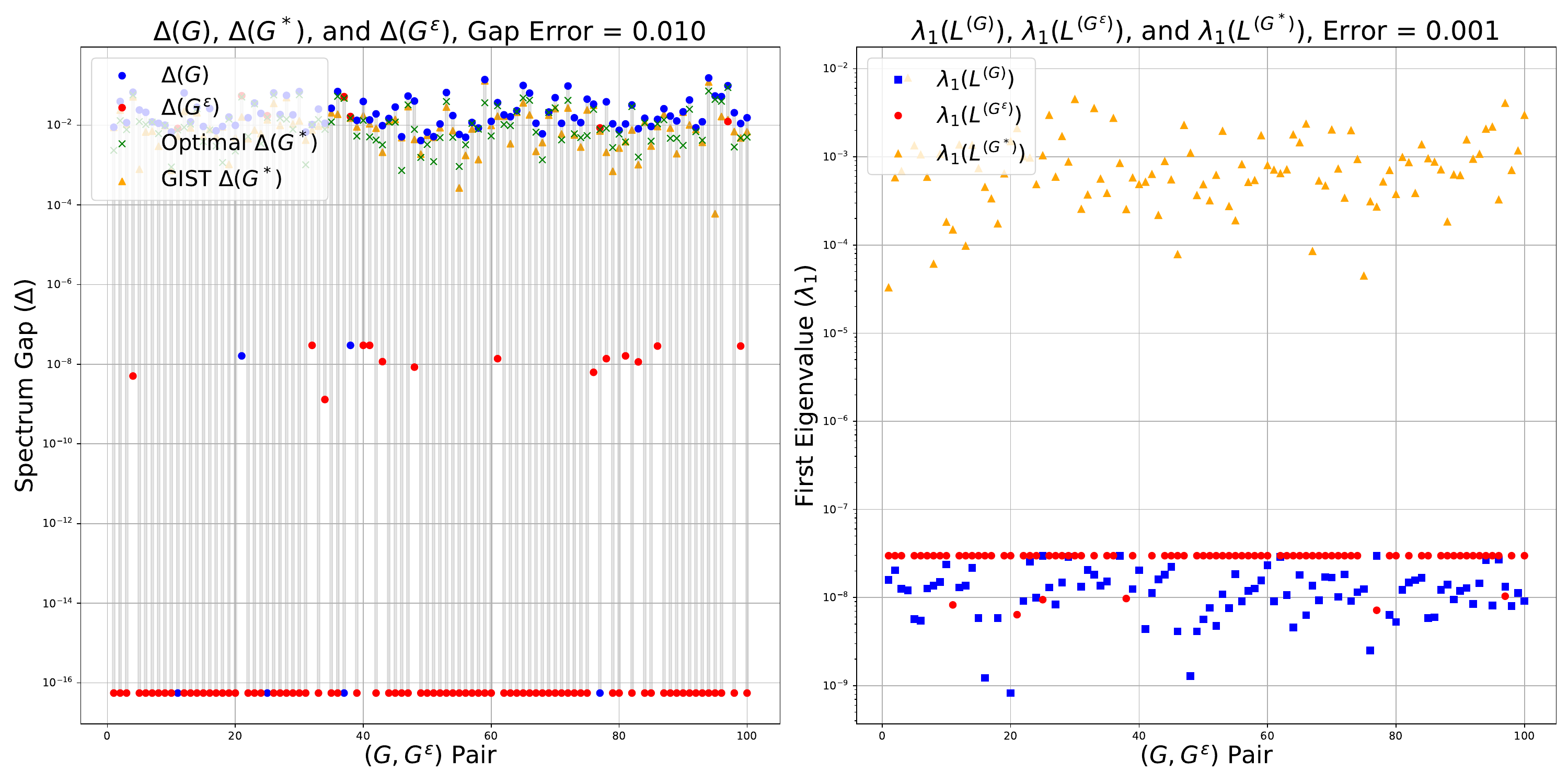} 
\end{center}
\end{minipage}
\hfill
\begin{minipage}[h]{0.47\linewidth}
\begin{center}
\includegraphics[width=1\linewidth]{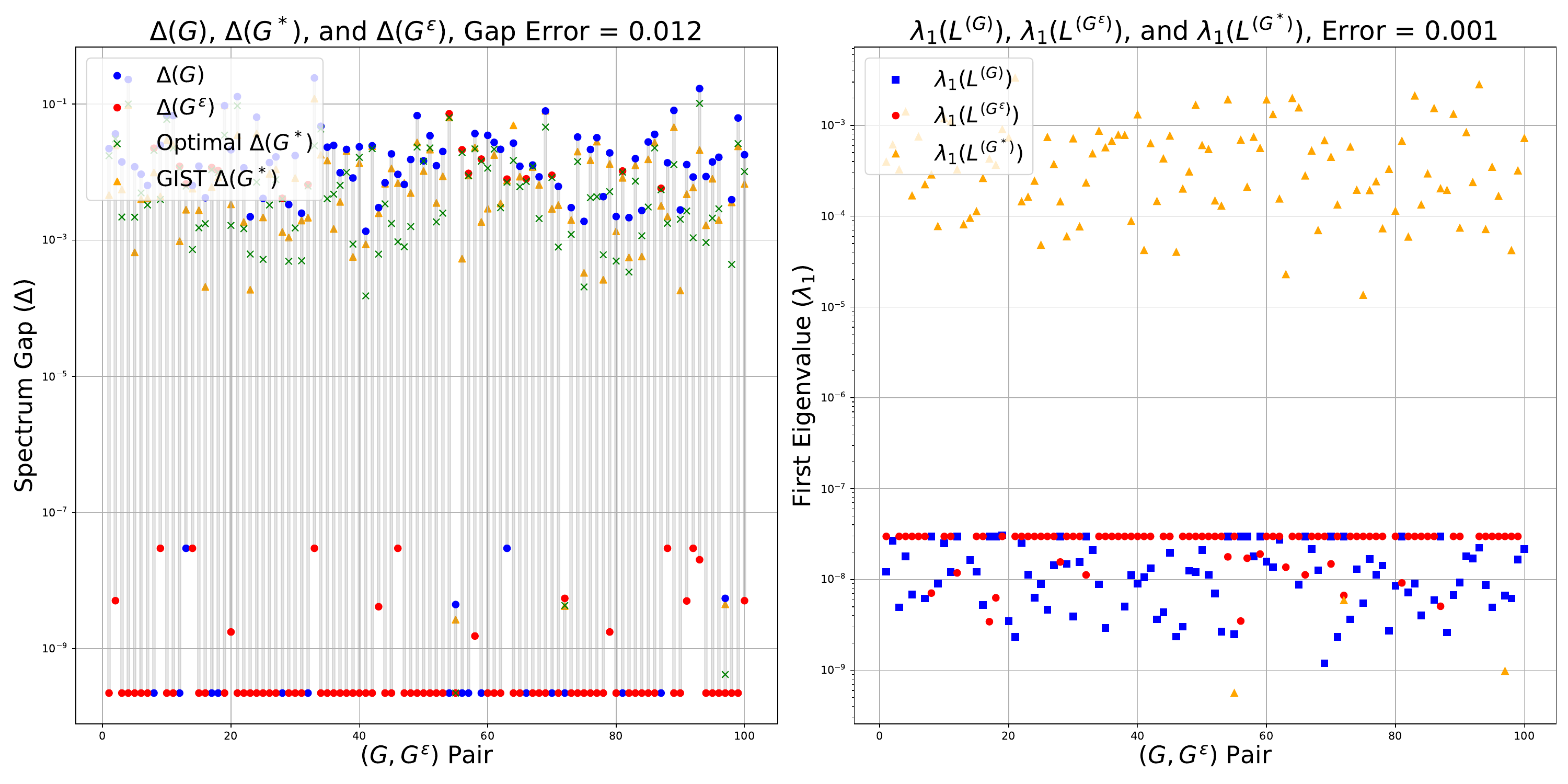} 
\end{center}
\end{minipage}
\caption{We show the spectral gaps of $G$, $G^\varepsilon$, and $G^*$ according to $\alpha=0.9$ on 100 random samples of AIDS (up left), COLORS-3 (up right), ENZYMES (down left), and PROTEIN (down right). We also show the optimal counterfactual to assess the distance with the spectral gap of $G^*$. Each subplot on the right illustrates the first eigenvalue of $G$, $G^\varepsilon$, and $G^*$ to demonstrate the connectivity property in~\cref{theorem:connectivity}. To show zeros in log-scale, we correct them by adding the lowest gap/eigenvalue that is non-zero.}
\label{fig:spectral_gaps_and_connectivity}
\end{figure}

\begin{figure}[!t]
\begin{minipage}[h]{0.47\linewidth}
\begin{center}
\includegraphics[width=1\linewidth]{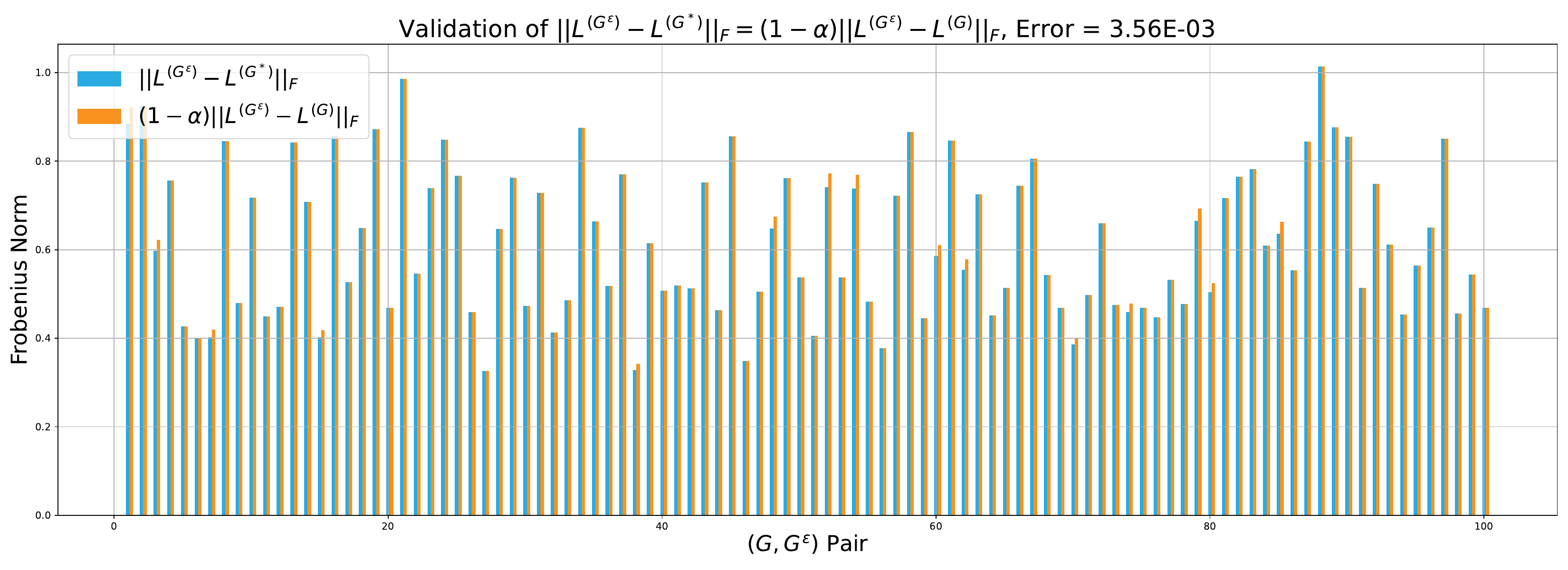} 
\end{center} 
\end{minipage}
\hfill
\begin{minipage}[h]{0.47\linewidth}
\begin{center}
\includegraphics[width=1\linewidth]{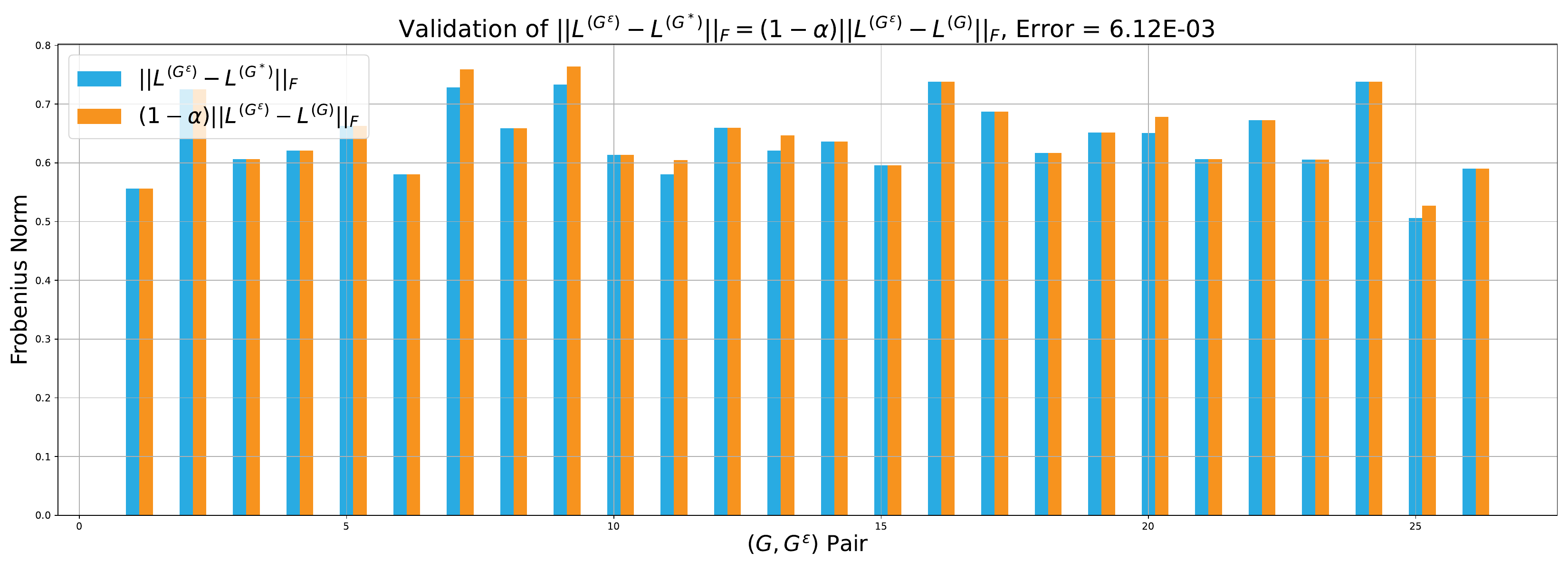} 
\end{center}
\end{minipage}
\vfill
\begin{minipage}[h]{0.47\linewidth}
\begin{center}
\includegraphics[width=1\linewidth]{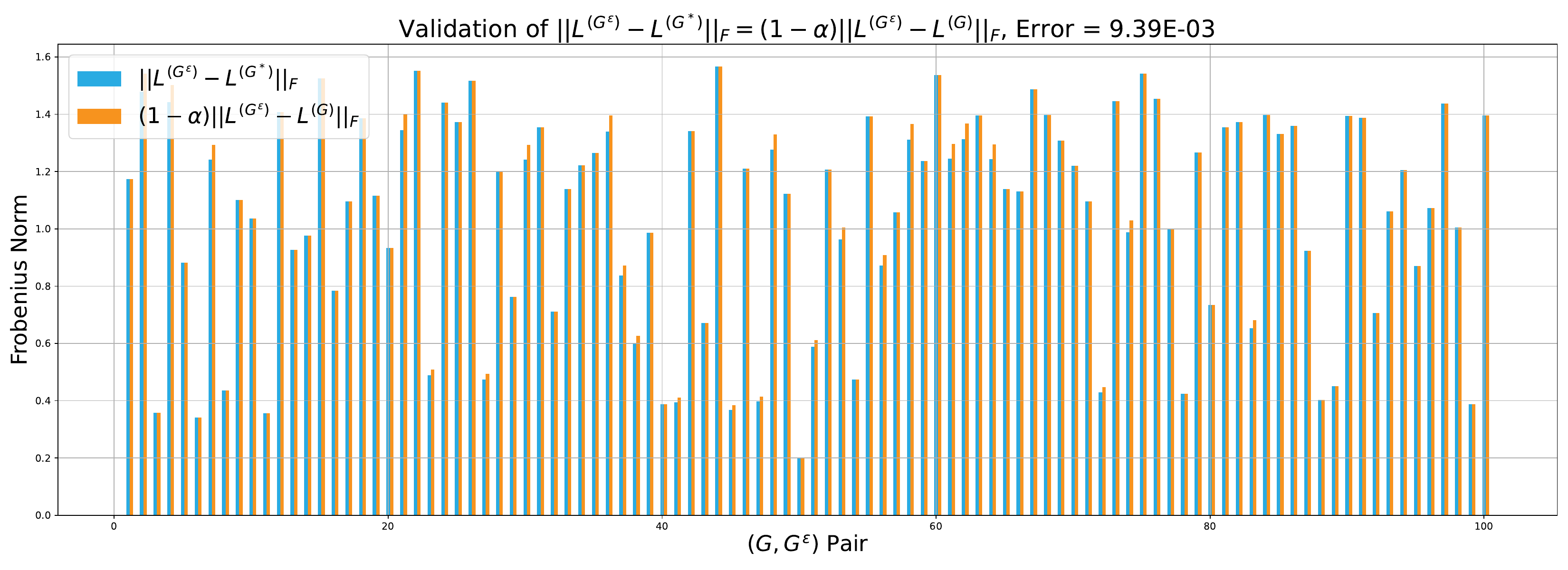} 
\end{center}
\end{minipage}
\hfill
\begin{minipage}[h]{0.47\linewidth}
\begin{center}
\includegraphics[width=1\linewidth]{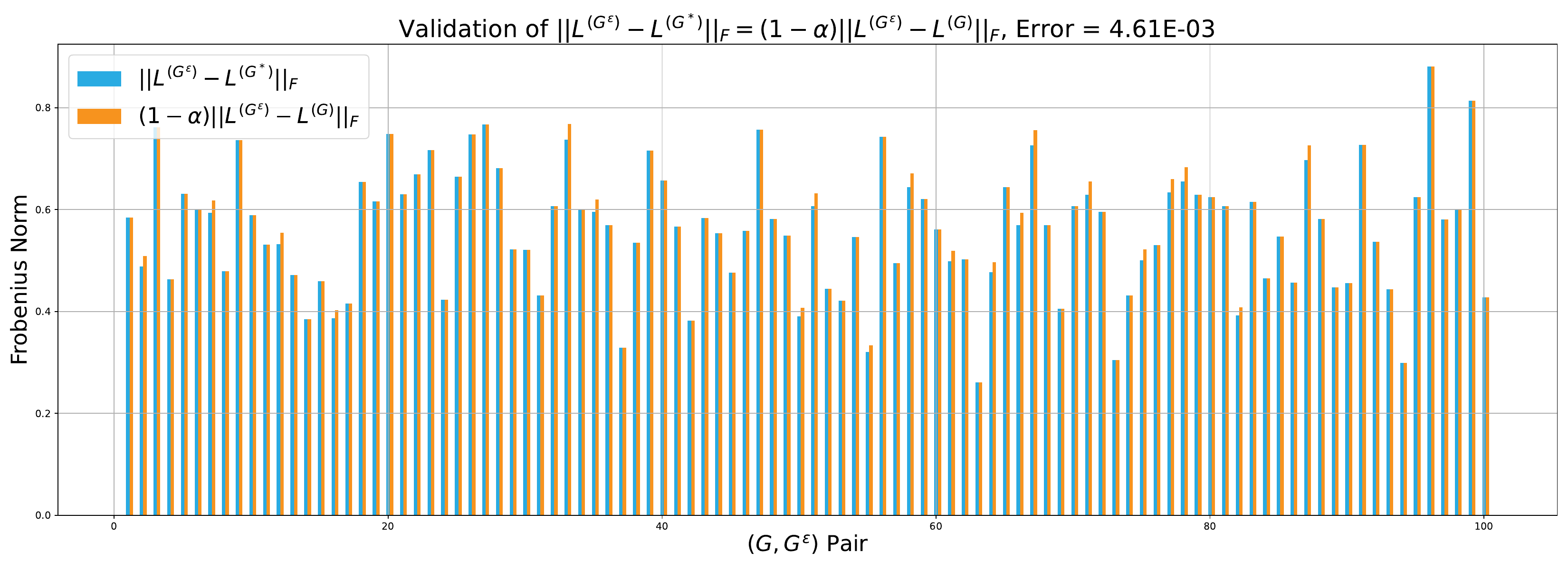} 
\end{center}
\end{minipage}
\hfill
\begin{minipage}[h]{\linewidth}
\begin{center}
\includegraphics[width=1\linewidth]{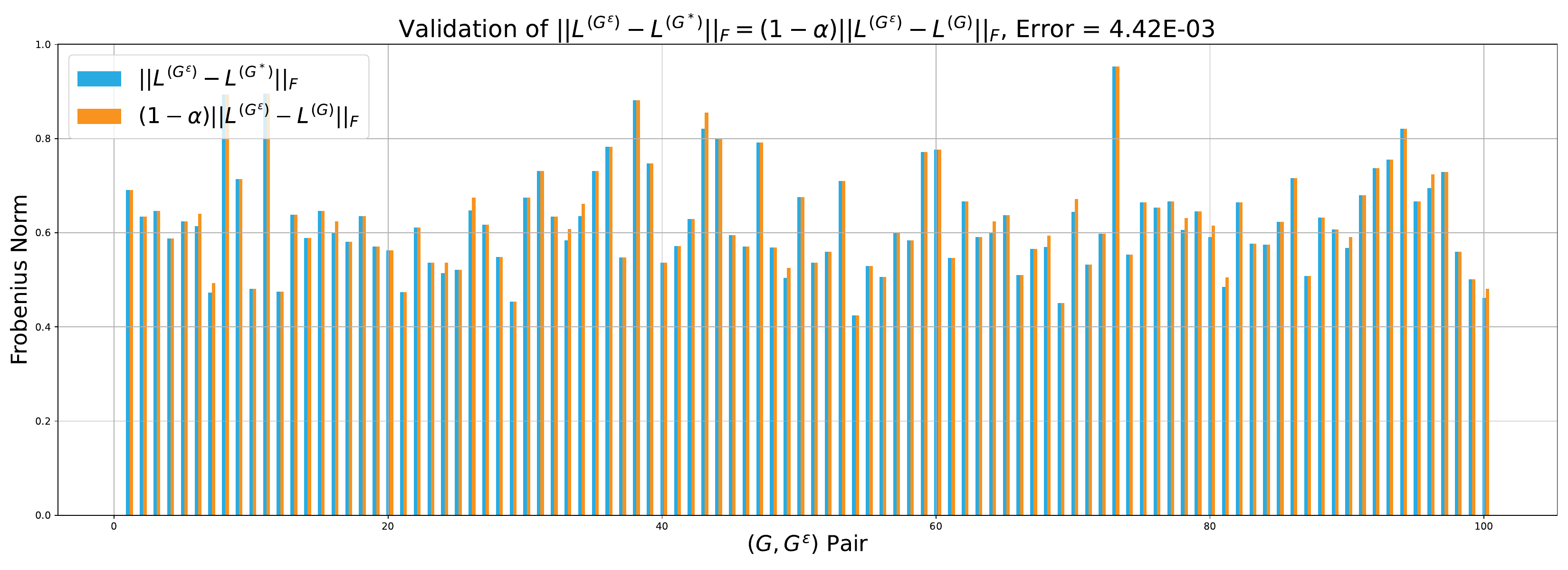} 
\end{center}
\end{minipage}
\caption{We show the Frobenius norm differences between the Laplacians of $G$, $G^\varepsilon$ and $G^*$ on AIDS (row=1, col=1), BZR (row=1, col=2), COLORS-3 (row=2, col=1), ENZYMES (row=2, col=2), and MSRC21 (row=3) for $\alpha=0.9$.}
\label{fig:frobenius_norms_more_data}
\end{figure}

\subsubsection{Spectral gap and connectedness of predicted $G^*$}\label{app:spectral_gap}
\cref{fig:spectral_gaps_and_connectivity} shows the spectral gaps and the lowest eigenvalues for $G$, $G^\varepsilon$, and $G^*$ according to $\alpha = 0.9$ on AIDS (up left), COLORS-3 (up right), ENZYMES (down left), and PROTEIN (down right). We show log-scaled y-axes to highlight small differences in the prediction of $G^*$ via the loss~\cref{eq:loss}  --  see triangles  --  and the optimal counterfactual obtained according to~\cref{def:graph_style_transfer}  --  see $\times$. Notice how, for each $(G,G^\varepsilon)$ pair, GIST finds $G^*$ whose spectral gap reports a negligible error of $\sim$$1.75\times10^{-2}$ w.r.t. the optimal counterfactual. Moreover, GIST satisfies~\cref{theorem:spectral_gap} since $\Delta(G)*$ is within the bounds of $\Delta(G)$ and $\Delta(G^\varepsilon)$. This intuition is also shown in terms of GED in~\cref{fig:alpha_vs_ged} and in terms of Frobenius norm differences in~\cref{fig:frobenius_norm_PROTEINS} where $G^*$ is in-between $G$ and $G^\varepsilon$.

\subsubsection{Frobenius norm differences between $G$, $G^\varepsilon$ and $G^*$ according to~\cref{corollary:frobenius_norm}}\label{app:frobenium_norms}
To support our claims that GIST produces spectrally similar counterfactuals, we illustrate (see~\cref{fig:frobenius_norms_more_data}) the Frobenius norm differences presented in~\cref{corollary:frobenius_norm}. Since GIST is learns the backtracking mechanism it might estimate $G^*$ that are not globally optimal, but rather locally. Therefore, we report a small estimation error of $5.62\times10^{-3}$ in terms of Frobenius norm difference where $G^*$ ``prefers'' to be closer to the structure of $G^\varepsilon$. 


\begin{figure}[!t]
    \centering
    \includegraphics[width=\linewidth]{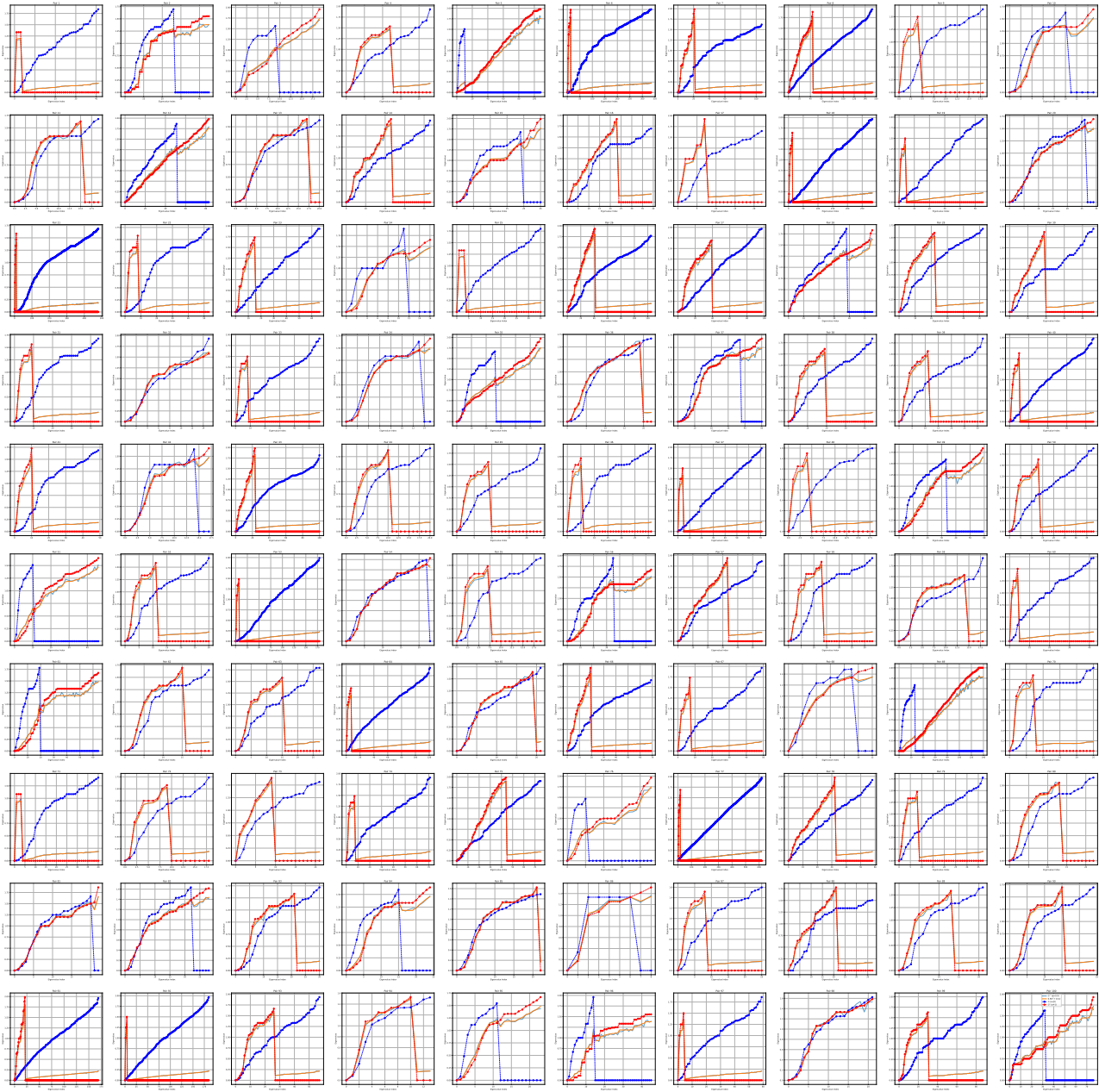}
    \caption{We show the eigenvalues of 100 randomly sampled instances on PROTEINS ($\alpha=0.9$).}
    \label{fig:protein_alpha}
\end{figure}

\subsection{Quantitative Analysis for $\bm{\alpha = 0.5}$ vs $\bm{\alpha = 0.9}$}\label{app:alpha_limiation}
To evaluate the sensitivity of GIST to the weighting parameter $\alpha$, we compared performance at $\alpha=0.5$ (equal emphasis on content and style) against $\alpha=0.9$ (content-dominant setting) across multiple benchmarks.

For the AIDS dataset, we computed the Frobenius norm between the expected and produced outputs as done in~\cref{fig:frobenius_norm_PROTEINS} -- see~\cref{fig:frobenius_aids_0.5}. At $\alpha=0.5$, the error reaches 0.537, which is over two orders of magnitude higher than the corresponding error at $\alpha=0.9$. This substantial increase indicates that the model has difficulty reconciling competing objectives when neither is strongly prioritized.

Similarly, when emulating the setup from~\cref{fig:spectral_gap_msrc21} -- see~\cref{fig:spectral_gap_msrc_0.5} -- the model yields an error of 0.013 at $\alpha=0.5$, compared to only 0.005 at $\alpha=0.9$. This further supports the observation that GIST performs more reliably when the optimization is skewed toward either content or style, rather than balanced equally.

These results demonstrate that GIST's performance degrades significantly under equal weighting conditions, and suggest that fine-tuning $\alpha$ toward task-specific priorities is crucial for optimal output quality.

\begin{figure}[!ht]
    \centering
    \includegraphics[width=\linewidth]{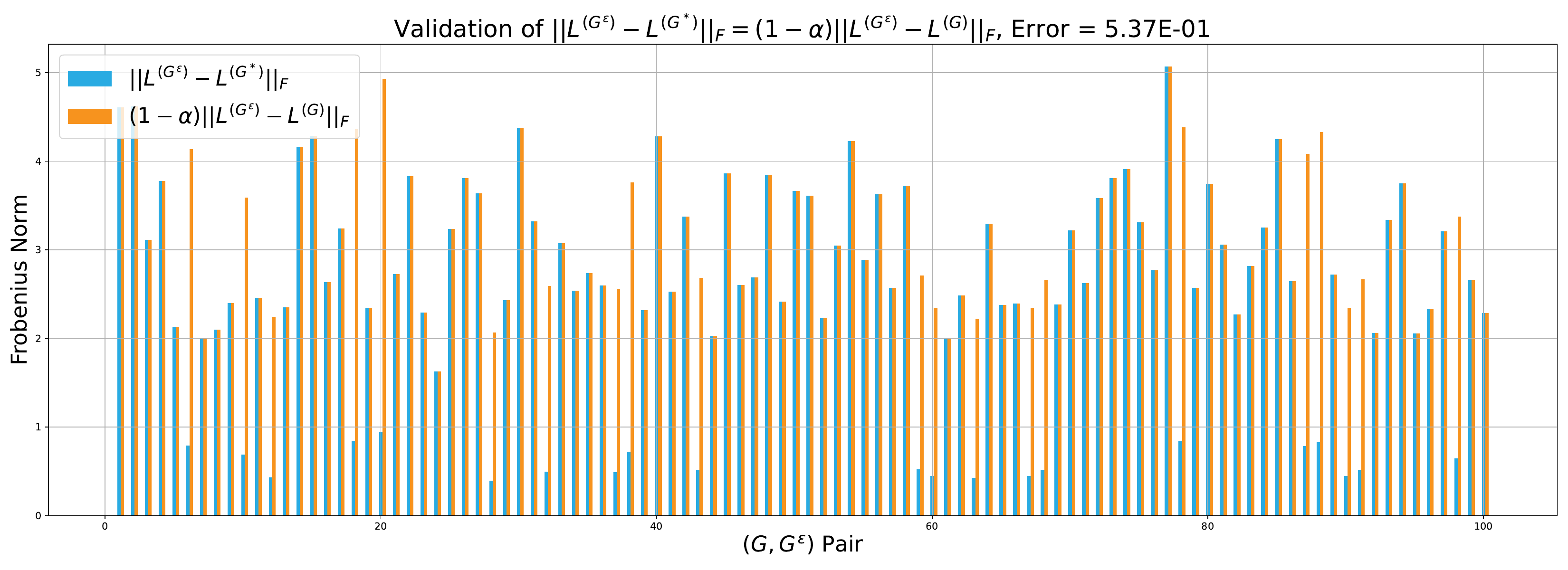}
    \caption{We show the Frobenius norm differences between the Laplacians of $G$, $G^\varepsilon$ and $G^*$ on AIDS for $\alpha=0.5$.}
    \label{fig:frobenius_aids_0.5}
\end{figure}

\begin{figure}[!th]
    \centering
    \includegraphics[width=\linewidth]{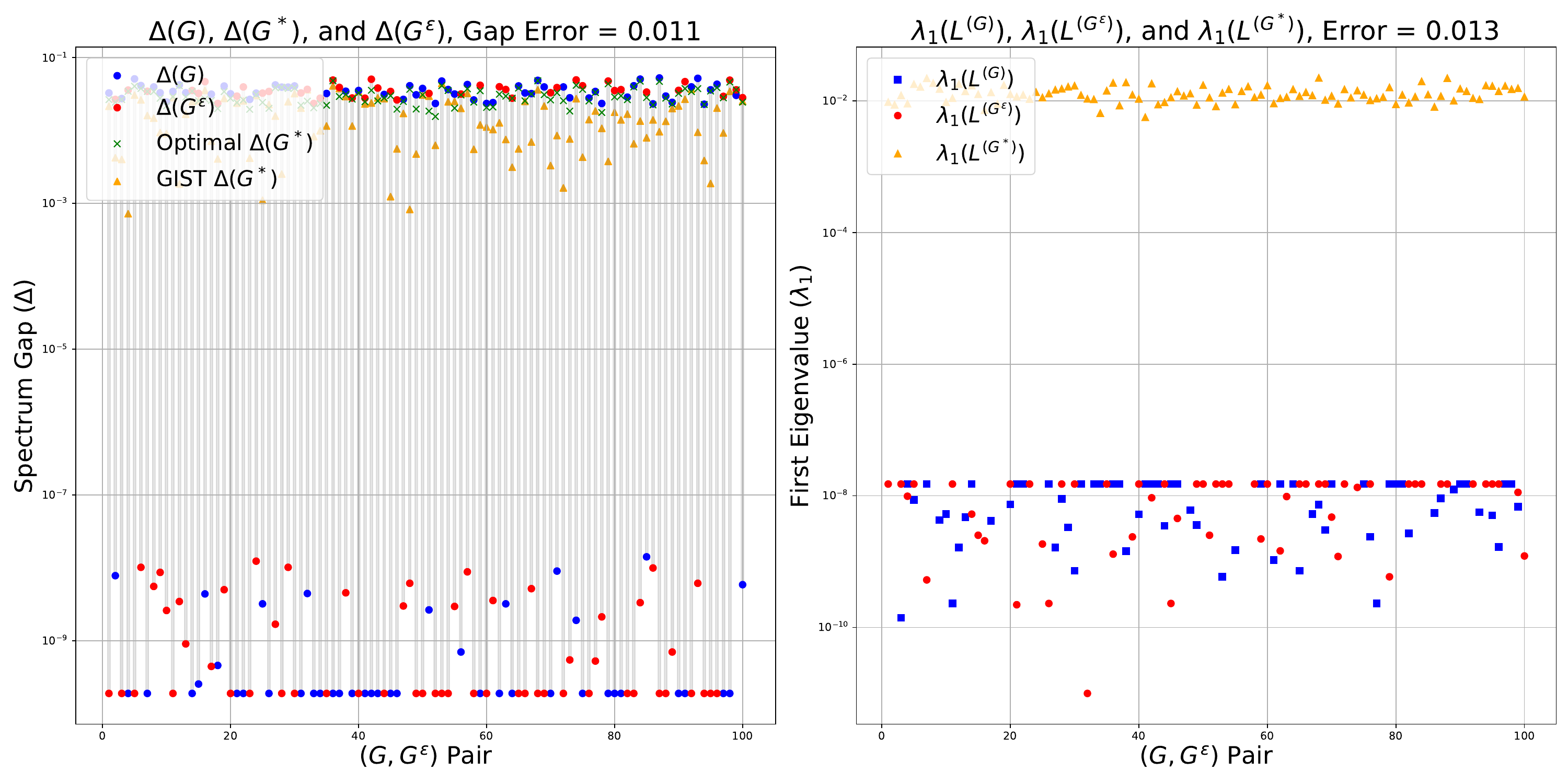}
    \caption{We show the spectral gaps of $G$, $G^\varepsilon$, and $G^*$ according to $\alpha=0.5$ on 100 random samples of MSRC21. We also show the optimal counterfactual to assess the distance with the spectral gap of $G^*$. Each subplot on the right illustrates the first eigenvalue of $G$, $G^\varepsilon$, and $G^*$ to demonstrate the connectivity property in~\cref{theorem:connectivity}. To show zeros in log-scale, we correct them by adding the lowest gap/eigenvalue that is non-zero.}
    \label{fig:spectral_gap_msrc_0.5}
\end{figure}

\subsection{A Critique to SoTA papers using MUTAG and NCI1}

\begin{figure}[!t]
    \begin{minipage}{0.475\textwidth}
        \flushleft
        \captionof{table}{Average test-set performance on IMDB-M for 5-cross validation. The accuracy of the GCN oracle in the test set is 48\%.}
        \label{tab:imdb}
        \resizebox{\textwidth}{!}{%
        \begin{tabular}{@{}lcccc@{}}
            \toprule
                          & Validity        &  &  & Fidelity        \\ \midrule
            iRand         & 0.000 &  &  & -- \\
            \midrule
            CF-GNNExpl.   & 0.670 &  &  & \textbf{0.170} \\
            CF$^\text{2}$ & \underline{0.710} &  &  & \underline{0.090} \\
            CLEAR         & 0.450 &  &  & 0.050 \\
            RSGG-CE       & 0.690 &  &  & \underline{0.090} \\
            GIST          & \textbf{0.870} &  &  & \textbf{0.170} \\ \bottomrule
            \end{tabular}%
            }
    \end{minipage}%
    \hfill
    \begin{minipage}{0.475\textwidth}
            \captionof{table}{Average test-set performance on MUTAG for 5-cross validation. The accuracy of the GCN oracle in the test set is 86.6\%.}
            \label{tab:mutag}
            \resizebox{\textwidth}{!}{%
            \begin{tabular}{@{}lcccc@{}}
            \toprule
                          & Validity        &  &  & Fidelity        \\ \midrule
            iRand         & 0.026 &  &  & 0.026 \\
            \midrule
            CF-GNNExpl.   & 0.447 &  &  & 0.237 \\
            CF$^\text{2}$ & 0.026 &  &  & 0.026 \\
            CLEAR         & 0.921&  &  & \underline{0.395} \\
            RSGG-CE       & \underline{0.947} &  &  & \textbf{0.737} \\
            GIST          & \textbf{1.000} &  &  & \textbf{0.737} \\ \bottomrule
            \end{tabular}%
        }
    \end{minipage}
\end{figure}
We excluded all those datasets from TUDataset\footnote{\url{https://chrsmrrs.github.io/datasets/docs/datasets/}} that do not have any node attributes. As per GNNs message passing mechanism, the nodes share their feature vectors with their neighbors, hence then having meaningful embeddings. Given a graph $G = (X,A)$, GIST overshoots to $G^\varepsilon = (X^\varepsilon,A^\varepsilon)$ whose node features $X^\varepsilon$ go through TransConv layers. If $X^\varepsilon$ are missing, then the convolution layer does not produce anything meaningful to then estimate the edge probabilities (see~\cref{fig:architecture}). To surpass this hurdle, we added 7 features regarding centralities: i.e., node degree, betweenness, closeness, harmonic centrality, clustering coefficient, Katz centrality, Laplacian centrality. In this way, at least we have something interesting to work with and not rely only on the topology of the graphs.~\cref{tab:mutag} shows the performance against SoTA in terms of validity and fidelity on 5-fold cross-validations where the oracle is a 3-layer GCN with test accuracy of 86.8\% for MUTAG.\footnote{We could not reproduce the 89.97\% accuracy as in paperswithcode -- \url{https://paperswithcode.com/sota/graph-classification-on-mutag} -- with the parameters specified in the original paper of U2GNN.} Unfortunately, even after hyperparameter optimization was done on NCI1 with the introduced node features, any kind of GCN (with any layer) and U2GNN~\cite{nguyen2022universal} with the hyperparameter search space introduced in the original paper does not reach more than 40\% of accuracy in the test set. We ran experiments with these oracles for NCI1, however the fidelity of the explainers was negative, which suggests that the explainers are actually doing adversarial attacks rather than explanations on the oracle~\cite{prado2022survey}. Hence, we decided to discard NCI1 and show only MUTAG. \textit{We want to point out that these two datasets are not suitable for benchmarking purposes since, again, message passing mechanisms in GNNs rely on node feature aggregations on the neighbors. These two datasets do not have node features, and we are a puzzled how SoTA methods used them to compare against each other.}

\subsection{Runs on IMDB-M: GIST vs CLEAR}
We report the results of 5-fold cross-validation on the IMDB-M dataset, as summarized in~\cref{tab:imdb}. Notably, we re-ran the CLEAR -- the original paper that uses IMDB-M -- from scratch and obtained a validity score of 0.45, which significantly deviates from the 0.96 originally reported, raising questions about the reproducibility and robustness of the original evaluation. Among all methods, GIST achieves the highest validity (0.87) and fidelity (0.17) w.r.t. 3-layer GCN oracle, which itself exhibits poor predictive performance with a test accuracy of only 48\%. 

The iRand baseline fails to generate any valid counterfactuals, rendering fidelity evaluation inapplicable. Furthermore, the uniformly low fidelity scores across all methods underscore a broader issue: the underlying GCN oracle performs poorly, limiting the interpretability and utility of generated counterfactuals. We also attempted to replicate the performance of U2GNN~\cite{nguyen2022universal}, a method claimed to be SoTA on IMDB-M, but achieved only 33\% test accuracy using the same hyperparameters reported in the original paper -- substantially lower than the 89.2\% listed on paperswithcode. This discrepancy highlights the need for rigorous, transparent, and reproducible benchmarking practices in graph-based explainability research.

\end{document}